\def\eqref#1{equation~\ref{#1}}
\def\1{\bm{1}}
\def\rvx{{\mathbf{x}}}
\def\rvy{{\mathbf{y}}}
\def\rvz{{\mathbf{z}}}
\def\vx{x}
\DeclareMathAlphabet{\mathsfit}{\encodingdefault}{\sfdefault}{m}{sl}
\SetMathAlphabet{\mathsfit}{bold}{\encodingdefault}{\sfdefault}{bx}{n}
\def\gE{{\mathcal{E}}}
\newcommand{\E}{\mathbb{E}}
\DeclareMathOperator*{\argmin}{arg\,min}
\newcommand{\cD}{\mathcal{D}}
\newcommand{\cF}{\mathcal{F}}
\newcommand{\cN}{\mathcal{N}}
\newcommand{\cM}{\mathcal{M}}
\newcommand{\cR}{\mathcal{R}}
\newcommand{\cX}{\mathcal{X}}
\newcommand{\cY}{\mathcal{Y}}
\newcommand{\Do}{\mathcal{D}_{\text{overlap}}}
\newcommand{\Dtr}{\mathcal{D}_{\text{train}}}
\newcommand{\Dws}{D_{\text{w2s}}}
\newcommand{\Dh}{\mathcal{D}_{\text{hard}}}
\newcommand{\pe}{p^{(\text{easy})}_i}
\newcommand{\po}{p^{(\text{overlap})}_i}
\newcommand{\ph}{p^{(\text{hard})}_i}
\newcommand{\Sg}{S^{\text{good}}_i}
\newcommand{\Sb}{S^{\text{bad}}_i}
\newcommand{\muo}{\mu_{\text{overlap}}}
\newcommand{\mue}{\mu_{\text{easy}}}
\newcommand{\muh}{\mu_{\text{hard}}}
\newcommand{\pio}{\pi_{\text{overlap}}}
\newcommand{\pie}{\pi_{\text{easy}}}
\newcommand{\pih}{\pi_{\text{hard}}}
\newcommand{\tmue}{{\tilde{\mu}}_{\text{easy}}}
\newcommand{\tmuh}{{\tilde{\mu}}_{\text{hard}}}
\newcommand{\UCB}{\text{UCB}}
\newcommand{\ind}{\mathbbm{1}}
\newcommand{\abst}{\varnothing}
\newcommand{\D}{\mathcal{D}}
\renewcommand{\P}{\mathbb{P}}
\newcommand{\err}{\mbox{err}}
\renewcommand{\phi}{\varphi}
\newcommand{\reals}{{\mathbb R}}
\DeclareFontFamily{U}{mathx}{\hyphenchar\font45}
\DeclareFontShape{U}{mathx}{m}{n}{
      <5> <6> <7> <8> <9> <10>
      <10.95> <12> <14.4> <17.28> <20.74> <24.88>
      mathx10
      }{}
\DeclareSymbolFont{mathx}{U}{mathx}{m}{n}
\DeclareMathAccent{\widebar}{0}{mathx}{"73}
\def\1{\bm{1}}
\def\rvx{{\mathbf{x}}}
\def\rvy{{\mathbf{y}}}
\def\rvz{{\mathbf{z}}}
\DeclareMathAlphabet{\mathsfit}{\encodingdefault}{\sfdefault}{m}{sl}
\SetMathAlphabet{\mathsfit}{bold}{\encodingdefault}{\sfdefault}{bx}{n}
\def\gE{{\mathcal{E}}}
\newcommand{\sysx}{overlap density}
\newcommand{\fw}{f_{\text{weak}}}
\newcommand{\fs}{f_{\text{strong}}}
\newcommand{\xe}{x_{\text{easy}}}
\newcommand{\xh}{x_{\text{hard}}}
\newcommand{\real}{\mathbb{R}}
\def\Xho{\rvx_{\text{hard only}}}
\def\Xeo{\rvx_{\text{easy only}}}
\def\Xov{\rvx_{\text{overlap}}}
\def\Dho{D_{\text{hard only}}}
\def\Deo{D_{\text{easy only}}}
\def\Dov{D_{\text{overlap}}}
\def\Dtr{D_{\text{train}}}
\def\Dctrlalpha{D_{\text{controlled},\alpha}}
\def\Dw2s{D_{\text{w2s}}}
\newcommand{\norm}[1]{\left\lVert#1\right\rVert}
\def\Xdiff{\rvx_{\text{diff}}}
\def\nctrl{n_{\text{controlled}}}
\def\fwtos{f_{\text{w2s}}}
\definecolor{main}{HTML}{AED6F1}    % setting main color to be used
\definecolor{sub}{HTML}{EBF5FB}     % setting sub color to be used
\setlist{leftmargin=*}
\newtheorem{theorem}{Theorem}[section]
\newtheorem{lemma}{Lemma}[section]
\newtheorem{definition}{Definition}
\newcommand{\rebuttal}[1]{#1}
\newcommand{\rrebuttal}[1]{#1}
\title{Weak-to-Strong Generalization Through the Data-Centric Lens}
\author{Changho Shin, John Cooper, Frederic Sala\\
Department of Computer Science \\
University of Wisconsin-Madison \\
\texttt{\{cshin23,jfcooper2,fredsala\}@wisc.edu}}
\begin{document}

\maketitle
\begin{abstract}
The weak-to-strong generalization phenomenon is the driver for important machine learning applications including highly data-efficient learning and, most recently, performing \emph{superalignment}.  
While decades of research have resulted in numerous algorithms that produce strong empirical performance, understanding what \textbf{\emph{aspects of data}} enable weak-to-strong generalization has been understudied. 
We propose a simple data-centric mechanism that characterizes weak-to-strong generalization: the \emph{\sysx}.
Intuitively, generalization tracks the number of points that contain overlaps, i.e., both easy patterns (learnable by a weak model) and challenging patterns (only learnable by a stronger model), as with such points, weak predictions can be used to learn challenging patterns by stronger models.
We provide a practical overlap detection algorithm to find such points in datasets and leverage them to 
learn, among \emph{multiple} sources of data, which to query when seeking to maximize \sysx \ and thereby enhance weak-to-strong generalization. 
We present a theoretical result showing that the generalization benefit is a function of the \sysx \  and a regret bound for our data selection algorithm.
Empirically, we validate the mechanism and the overlap detection algorithm on a wide array of settings. 
%
% On benchmark datasets for weak-to-strong generalization, our data source selection approach demonstrates superior data efficiency, achieving the same accuracy as random sampling while using only $\sim$ 60\% of the data on average.
% On a benchmark dataset for weak-to-strong generalization, our approach provides a lift of 2.4 points when compared to randomly sampling data.
\end{abstract}
\section{Introduction}
\label{introduction}

% What is the problem? & Why is it interesting and important?
A recurring theme in machine learning is the idea of a \emph{less-capable entity} (e.g., a weak model or an individual with limited expertise) supervising a \emph{stronger, more capable} one (a more powerful or higher-capacity system). 
The goal is to enable the stronger entity to generalize beyond the capabilities of its weaker counterpart---despite relying on its supervision. 
This idea undergirds classical approaches (e.g., self-training, co-training) for data-efficient learning that date back fifty years.
Most recently, it drives embryonic attempts to perform \emph{superalignment}---the process of ensuring systems with capabilities far beyond those of humans align with  human values \citep{burns2023weak}.

% Why is it hard? (E.g., why do naive approaches fail?) & Why hasn't it been solved before? (Or, what's wrong with previous proposed solutions? How does mine differ?)
The typical flavor of works studying weak-to-strong generalization is to introduce techniques that, given a \emph{fixed} dataset, can obtain the best performance (i.e., provide a strong model that best generalizes to an unseen test set). 
A vast literature has studied thousands of techniques, including entire areas, such as semi-supervised learning \citep{zhu2022introduction, ouali2020overview}, co-training \citep{Blum98, Ling09}, pseudolabeling \citep{lee2013pseudo, arazo2020pseudo}, self-training \citep{ScudderOld, amini2023selftraining} student-teacher methods \citep{MatiisenStudentTeacher20}, and more. 
Much less attention, however, has been focused on what aspects of the \emph{data} enable such techniques to succeed---and how to acquire additional data that further promotes weak-to-strong generalization.
%

%What are the key components of my approach and results? Also include any specific limitations.
This work focuses on this missing element. %: what aspects of data enable weak-to-strong generalization.
We begin by proposing a simple mechanism that captures the potential for weak-to-strong generalization.
This measure, the \textbf{\emph{\sysx}}, considers the potential presence of two kinds of \emph{patterns} (i.e., sets of features or mechanisms for prediction) within each datapoint: an easy pattern---usable by weak and strong models---and a hard pattern, only accessible to the strong model. 
Intuitively, weak models can label points that have both patterns---the overlapping points---by taking advantage of the weak pattern (but cannot accurately label using the hard patterns). 
However, the strong model, using weak predictions obtained on overlapping points, can learn the harder patterns, and therefore \emph{generalize to points \textbf{only} containing these hard patterns.}
%
% This high-level notion applies to many scenarios; we instantiate a particular version for our theoretical analysis, then show that the resulting insights are consistent with a range of empirical scenarios.
% {\color{blue} This high-level notion applies to many scenarios; we instantiate a particular version for our theoretical analysis, then show that the resulting insights are consistent with a range of empirical scenarios.}

Equipped with this intuition, we characterize the notion of overlap and provide theoretical results building on  a recent theoretical framework for generalization \citep{lang2024theory}. 
%
% We prove a simple generalization bound showing that the overlap ratio indeed governs weak-to-strong generalization.
%
In practice, however, \emph{overlap points are latent}, and it can be difficult to distinguish between points with \emph{just} an easy pattern and those with overlaps. 
To address this challenge, we introduce an approach for identifying points with overlaps and build on it to obtain an algorithm that can, when presented with multiple sources of data, estimate which one contains the largest \sysx. 
This suggests a future course of action for practitioners focused on maximizing weak-to-strong generalization: \textbf{\emph{rather than focusing on algorithms, invest in the data}}---and specifically into obtaining data from sources that are likely to produce the most overlaps. 

Empirically, we first validate the presence of the mechanism in a variety of real-world settings.
We use the tools we proposed to identify points with overlaps. 
This enables us to control how much data with overlaps is included.
We do so in two important application areas,
%we apply our framework to characterize the possibility for generalization in three important applications:
%
\begin{itemize}[topsep=1pt, itemsep=1pt]
\item \textbf{Weak-to-strong generalization via fine-tuning}. Here, two pretrained models are used as the weak and strong models. These have varying capacities, as in \cite{burns2023weak},
\item \textbf{Weak supervision.} Weak supervision \citep{Ratner16, Ratner18, fu2020fast} is a framework for efficient data development. Multiple weak sources are combined via a \emph{label model}, which serves as the weak model. 
\end{itemize}
In both settings, we observe that scenarios with weak-to-strong generalization \emph{indeed correspond to overlap density}.
Next, we validate our proposed data source selection algorithm, showing enhanced data efficiency of pseudolabeled data across various datasets.
We also include synthetic experiments confirming our findings and the effectiveness of our algorithm in controlled settings.
\section{Related Work}
We give a brief description of related areas. Our work is complementary to many of these, as our focus is on understanding \emph{what forms} of data promote weak-to-strong generalization---and how to obtain more of it---rather than new frameworks or training approaches. Extend related work is provided in Appendix \ref{appsec:extended-related-work}.

\noindent \textbf{Self-Training and Data-Efficient Learning.} Strategies that attempt to train high-quality models with less labeled data date back to the infancy of machine learning \citep{ScudderOld}. 
This idea has spawned entire fields, including semi-supervised learning \citep{zhu2022introduction}, weak supervision \citep{Ratner16, shin2022universalizing}, self-training \citep{amini2023selftraining, wei2022theoretical}, and more. 
The key distinction between such works and ours is that we are not concerned with improving performance on benchmark datasets via algorithmic improvements.
Instead, we seek to understand what aspects of data result in stronger performance---and how to obtain more of the data that drives it. 

\noindent \textbf{Weak-to-Strong Generalization and Superalignment.} A particularly interesting application of weak-to-strong generalization is that of \emph{superalignment}. 
Superalignment, in the narrow sense, is the notion of aligning a superintelligent system to human values. 
More broadly, it can be thought of as aligning any large-scale system at a level of complexity beyond any individual person.
As such systems may be far off into the future, researchers are currently studying \emph{proxies}, such as smaller large language models supervising larger and more capable ones \citep{burns2023weak}.
Recently, several studies \citep{lang2024theory,charikar2024quantifyingw2s,somerstep2024statisticalw2s} have proposed theoretical frameworks to understand weak-to-strong generalization.
However, these works have yet to explore the specific data characteristics that facilitate weak-to-strong generalization.
In contrast, our work provides a concrete characterization of the data that induces weak-to-strong generalization: overlap density.
Building on the theoretical framework from \cite{lang2024theory}, we derive new theoretical results that illuminate how overlap density drives weak-to-strong generalization.

% \paragraph{Data Valuation.}
% A large number of works have studied ways to understand the impact of each point in a training dataset on the performance of the resulting model.
% %
% These include approaches via influence functions \cite{koh2017understanding}, Shapley values \cite{ghorbani2019data}, and other methods \cite{karr2006data, yoon2020data}. 
% %
% The main difference between these works and ours is that we are less concerned with any single point and model training in general, but with an \emph{overall data mechanism} that is relevant to the weak-to-strong generalization setting. 

% \noindent \textbf{Shortcuts \& Shortcut Learning.} Our work also overlaps with the notion of \emph{shortcuts} \cite{geirhos2020shortcut, shah2020pitfallssimplicitybias}. Spurious shortcuts produce non-causal paths for prediction, leading to very poor performance when models are deployed in the wild and are subject to distribution shifts. 
% %
% In contrast, the easy-hard overlap datapoints that we are interested in can be thought of as points that contain \emph{beneficial} shortcuts. 
% %
% These shortcuts permit weak models to help strong models learn deeper relationships.

%% \input{sections/methodology.tex}
\newtheorem{assump}{Assumption}
\newtheorem*{assump*}{Assumption}
\newtheorem{insight}{Insight}
\newtheorem*{insight*}{Insight}

\begin{figure*}[!t]
	\centering
	\subfigure [Overlap Density]{
\includegraphics[width=0.48\textwidth]{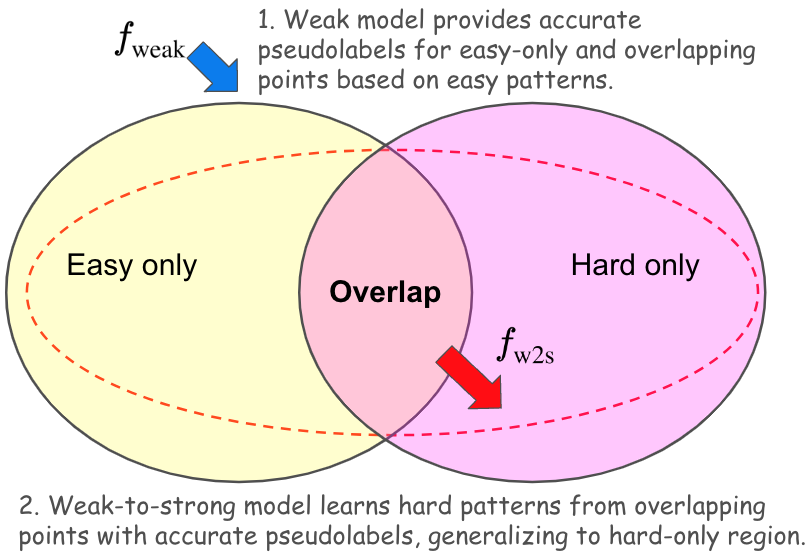}}
\subfigure [Weak-to-Strong Generalization]{
\includegraphics[width=0.48\textwidth]{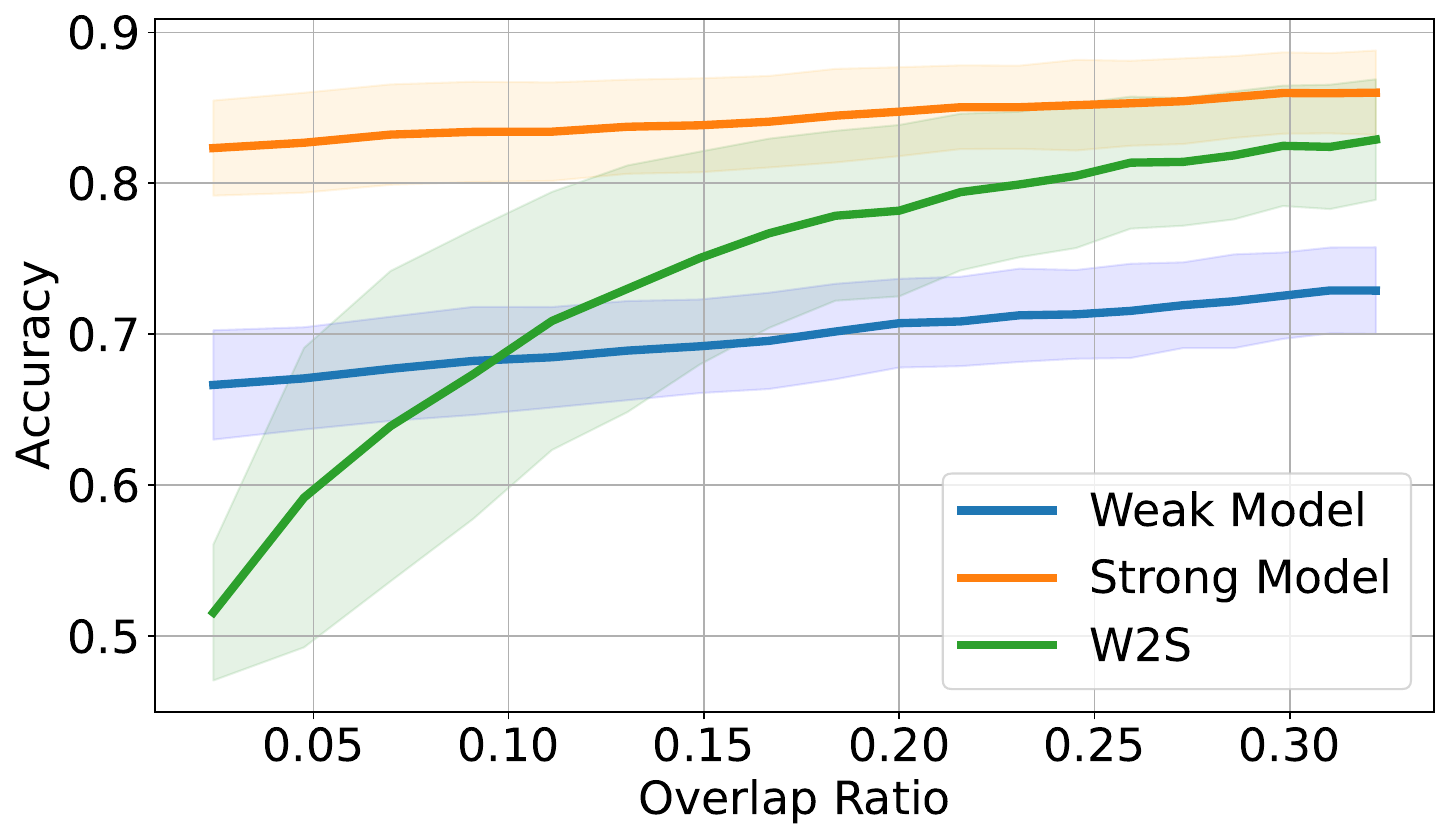}}
\vspace{-3mm}
\caption{Left: overlapping easy and hard patterns in our dataset are the key to weak-to-strong generalization. Learning from \emph{overlapping points}, where easy features and hard features coexist, enables a weak-to-strong model $f_{\text{w2s}}$ that can generalize, while $\fw$ is limited to reliably predicting points with easy patterns. 
Right: adding more such overlapping points has little influence on the performance of the weak model, but dramatically improves the performance of the weak-to-strong model. Adding such points---even a small percentage of the dataset---can push against the limits of the strong model.}
\label{fig:illustration1}
\vspace{-3mm}
\end{figure*}

\section{A Simple Data Mechanism For Weak-to-Strong Generalization}
\label{sec:method}
Our goal is two-fold. First, we seek to understand what properties of our data provide the possibility of weak-to-strong generalization. We introduce a simple mechanism (easy-hard overlap), formalize it, and provide a theoretical result showing that it indeed characterizes generalization.

Second, equipped with this mechanism, we wish to understand how to maximize weak-to-strong generalization. Specifically, under a data budget and with access to multiple sources of data, how can we prioritize sources that lead to greatest generalization? To address this challenge, we introduce a simple algorithm that estimates which sources have the greatest \sysx. 

\subsection{The Overlap Density Mechanism}\label{subsec:overlap-density-mechanism}
We start with an extremely simple theoretical model; afterwards, we will comment on extensions that are likely to be encountered in real-life data. Nevertheless, perhaps surprisingly, the basic mechanism often tracks weak-to-strong generalization in real settings. 
 
\paragraph{Setup.} We have access to a dataset $\Dtr = \{(x_1, y_1), \ldots, (x_n, y_n)\}$. Here, $(x,y) \sim \mathcal{D}$, $x \in \mathcal{X}$, $y \in \mathcal{Y}$, and $\mathcal{D}$ is some distribution. In addition, we have access to a dataset $\Dw2s = \{x_{n+1}, \ldots, x_{n+m}\}$ where we do not have access to any ground-truth labels. 
We use two models, a weak model $\fw$ and a strong model $\fwtos$. $\fw$ is trained (or fine-tuned) on $\Dtr$ and used to output predictions $\hat{y}_j = \fw(x_j)$ for points $x_j \in \Dw2s$. $\fwtos$ is then trained or fine-tuned on \rebuttal{$\Dw2s$} with the predictions provided by $\fw$. Our goal is to understand \textbf{\emph{in what settings the strong model \rebuttal{$\fwtos$} generalizes better than the weak model $\fw$---despite only being trained on supervision obtained from $\fw$}}.

% \paragraph{Easy and Hard Patterns} For simplicity, we will assume that $x = [\xe, \xh]$, where $\xe \in \mathbb{R}^{d_{\text{easy}}}$ and $\xh \in \mathbb{R}^{d_{\text{hard}}}$ (\rebuttal{in practice, this is not necessary}). Here, $\xe$ are features producing easy patterns, learnable by the weak model, while $\xh$ are hard patterns, which cannot be used by the weak model to obtain accurate predictions. In practice, feature vectors \emph{are not a priori decomposed into such patterns}. To address this, we introduce an algorithm that enables us estimate this identification. 

% We note that any dataset $D$ can be partitioned into
% \begin{itemize}[noitemsep,topsep=0pt]
% \item $D_{\text{overlap}}$: points containing both patterns, i.e., overlapping points,
% \item $D_{\text{hard only}}$: points that \emph{only} contain the hard pattern. For convenience, in our simplified model, we take $\xe = 0$ for such points.
% \item $D_{\text{easy only}}$: points that \emph{only} contain the easy pattern. We take $\xh = 0$ for such points.
% \item $D_{\text{neither}}$: points that contain neither pattern.
% \end{itemize}
% These four possibilities (we ignore $D_{\text{neither}}$ for simplicity) are illustrated in Fig.~\ref{fig:illustration1} (left). This simple categorization explains the weak-to-strong generalization phenomenon, as we discuss next.

\paragraph{\rrebuttal{Assumptions and Notation.}}
For simplicity, we will assume that $x = [\xe, \xh]$, where $\xe \in \mathbb{R}^{d_{\text{easy}}}$ and $\xh \in \mathbb{R}^{d_{\text{hard}}}$ (\rebuttal{in practice, this is not necessary}). Here, $\xe$ are features producing easy patterns, learnable by the weak model, while $\xh$ are hard patterns, which cannot be used by the weak model to obtain accurate predictions.
In practice, feature vectors \emph{are not a priori decomposed into such patterns}. To address this, we introduce an algorithm to estimate this identification later.

We note that any dataset $D$ can be partitioned into
\begin{itemize}[noitemsep,topsep=0pt]
\item $D_{\text{overlap}}$: points containing both patterns, i.e., overlapping points,
\item $D_{\text{hard only}}$: points that \emph{only} contain the hard pattern. For convenience, in our simplified model, we take $\xe = \mathbf{0}$ for such points.
\item $D_{\text{easy only}}$: points that \emph{only} contain the easy pattern. We take $\xh = \mathbf{0}$ for such points.
\item $D_{\text{neither}}$: points that contain neither pattern.
\end{itemize}
These four possibilities (we ignore $D_{\text{neither}}$ for simplicity) are illustrated in Fig.~\ref{fig:illustration1} (left). This simple categorization explains the weak-to-strong generalization phenomenon.

\rrebuttal{After training, $\fw$ has learned the easy pattern, and can therefore make reliable predictions on any points in $D_{\text{overlap}} \cup D_{\text{easy only}}$, as these points contain the easy patterns. However, it will not be able to make accurate predictions in $D_{\text{hard only}} \cup D_{\text{neither}}$. We denote the error rates as $\varepsilon_1 = \P(\fw(x) \neq y \mid (x,y) \in D_{\text{overlap}} \cup D_{\text{easy only}})$ and $\varepsilon_2 = \P(\fw(x) \neq y \mid (x,y) \in D_{\text{hard only}} \cup D_{\text{neither}})$. Thus, when labeling $\Dw2s$, the predictions of $\fw$ will be either reliable (in the first case), or highly unreliable (in the second case). Then, the labels for dataset $D_2$ (labeled by $\fw$) are noisy with rates $\varepsilon_1, \varepsilon_2$.}

\begin{tcolorbox}[enhanced,
    boxrule = 0pt,
    colback = sub,
    borderline west = {1pt}{0pt}{main}, 
    borderline west = {0.75pt}{2pt}{main}, 
    borderline east = {1pt}{0pt}{main}, 
    borderline east = {0.75pt}{2pt}{main}]
\textbf{\rrebuttal{Main assumptions}}\\
\rrebuttal{\textbf{(A1)}. For any $x \in \mathcal{X}$, the features of $x$ can be decomposed into easy and hard features, i.e.  $x = [\xe, \xh]$, where $\xe \in \mathbb{R}^{d_{\text{easy}}}$ and $\xh \in \mathbb{R}^{d_{\text{hard}}}$. \\
\textbf{(A2)}. The weak model $\fw$ has no access to hard patterns, i.e., for any $x = [\xe, \xh]$, $\fw(x)=\fw(\tilde{x})$, where $\tilde{x}=[\xe, \mathbf{0}]$.\\
\textbf{(A3)}. We assume $\varepsilon_1 \ll \varepsilon_2$, since $\fw$ cannot use hard patterns in $D_{\text{hard only}} \cup D_{\text{neither}}$.}

\end{tcolorbox}

\paragraph{Generalizing Beyond the Weak Model.} Next, model $\fwtos$ is trained on dataset $\Dw2s$ with its noisy labels. Since $\fwtos$, by assumption, has the capacity to learn the hard pattern, it can do so in the presence of noise as well. This is a well-known observation \citep{natarajan2013learning}. Crucially, however, since the noise level $\varepsilon_2$ for $D_{\text{hard only}}$ is typically severe, $\fwtos$ can \emph{only learn} hard patterns from points in $D_{\text{overlap}}$. As a result, the sample complexity for learning these hard patterns are given by $|D_{\text{overlap}}|$---and \emph{not the entire dataset size $m$}.

At test time, $\fwtos$ has learned the easy patterns, and so will have a similar error rate as $\fw$, but, unlike $\fw$, will have a much smaller error rate on the hard patterns. We illustrate this idea in a synthetic scenario in  Figure~\ref{fig:illustration1} (right). Here, we observe \emph{three regimes}. First, when there are very few overlaps (left), the weak-to-strong model may struggle to learn the hard pattern, potentially compromising even its ability to predict easy ones. Afterwards, as the proportion of overlap points increases, the weak-to-strong model begins to dramatically improve, correctly predicting on easy points while simultaneously learning the hard patterns. Finally, the weak-to-strong model's accuracy approaches that of the strong model trained on true labels (right-most region).

\begin{algorithm}[t!]
    \caption{\textbf{UCB-Based Data Selection for Maximizing Overlap}}\label{alg:data_selection}
    \begin{algorithmic}[1]
        \STATE \textbf{Input:} Data sources $\mathcal{D}_1, \mathcal{D}_2, \dots, \mathcal{D}_K$, number of rounds $T \geq K$, sample size per round $n$, weak model $f_{\text{weak}}$
        \STATE \textbf{Output:} Sampled data set $\bar{D}$ for weak-to-strong model training, Detected overlap samples $\bar{O}$
        
        \STATE Try each data source once, run Overlap Detection Algorithm (Algorithm \ref{alg:overlap_detection}), Initialize $\bar{D}$, $\bar{O}$ with the sampled data and detected overlap data.
        \FOR{$t = 1$ to $T$}
            \STATE Compute the upper confidence bound (UCB) of overlap density in each source $s$,
            
            \rebuttal{$\quad \text{UCB}_t(s)={|\bar{O}(s)|}/{|\bar{D}(s)|}+\sqrt{{2\log T}/{\bar{n}_t(s)}}$, where $\bar{n}_t(s)$ is \# of trials for source $s$}
            \STATE Select the data source that maximizes UCB of overlap density.
            \STATE Run Algorithm \ref{alg:overlap_detection}, update $\bar{D}$, $\bar{O}$ with the sampled data and detected overlap data.
        \ENDFOR
        
        \STATE \textbf{Return} $\bar{D}$, $\bar{O}$
    \end{algorithmic}
\end{algorithm}

\subsection{Data Selection for Maximum Overlap Density}

% Write typical dataset generation scenario and integrate the procedure to that
A direct application of our proposed mechanism is data source selection. 
Specifically, we consider the following common scenario.
We have access to multiple sources of data, which we call $\mathcal{D}_1, \ldots, \mathcal{D}_K$. 
Given a limited budget to obtain unlabeled data points from these sources, which ones should we prioritize?
Clearly, to maximize weak-to-strong generalization, we should target the sources $\mathcal{D}_i$ with the largest overlap density.
However, we face two challenges: {\bf (C1)} we do not know a priori which sources have this property, and, {\bf (C2)} even with access to data from these sources, \emph{overlaps are latent}. 
That is, it is not clear how to distinguish between points in $D_{\text{overlap}}$ and points in $D_{\text{easy only}}$---as weak and strong models are both capable of accurate predictions on such points.

\rebuttal{
We propose Algorithm \ref{alg:data_selection} to address these two challenges. For C1, we leverage stochastic bandit algorithms \citep{lattimore2020bandit}, which balance exploration and exploitation. Here, data sources act as arms, and their average reward correspond to overlap density. Using the UCB (Upper Confidence Bound) algorithm \citep{auer2002ucb}, we explore underutilized data sources while exploiting those with high overlap density. Each data source $\D_s$ has an overlap density $o_s$, influenced by noise from the sampling process and overlap detection. Initially, we sample each source once. In subsequent iterations, we choose the source with the highest UCB. This is computed as the sum of the estimated overlap density and a confidence radius that promotes exploration (Algorithm \ref{alg:data_selection_full}).}

To estimate the overlap densities, we must address C2. 
We propose an overlap detection algorithm (Algorithm \ref{alg:overlap_detection}) \rebuttal{based on two insights from our data model in Section \ref{subsec:overlap-density-mechanism}, i.e. $x = [\xe, \xh]$.
\begin{enumerate}
    \item Weak models are less confident on hard-only points because they lacks access to hard features.
    \item Overlap points are more closely aligned with hard-only points than easy-only points are.
\end{enumerate}
We provide theoretical support for these in Section \ref{subsec:theory_overlap_detection}. Based on these, we use confidence scores to separate hard-only data points first, and then use the absolute values of inner products as overlap scores to distinguish overlap points from easy-only points. 
In Algorithm 2, we first identify hard-only points by thresholding weak model confidence scores, split the dataset, and detect overlap points using inner products to distinguish them from easy-only points. We determine thresholds using a change point detection technique \citep{sen1975binseg}.}
The intuitions underlying our algorithms are empirically validated in the experiments in Section \ref{subsec:exp_overlap_density_mechanism}, where our overlap detection algorithm effectively isolates overlap points, leading to improved generalization. 

% % }
% % \rebuttal{
% % % \begin{assump}
% % \begin{insight}    
% %     The weak model is less confident on hard-only points because it lacks useful features for these points in our data model.
% % \end{insight}
% % }
% Leveraging this, we begin by identifying hard-only points by thresholding the confidence scores of the weak model predictions.
% % In practice, we determine this threshold using change point detection techniques \citep{aminikhanghahi2017changepointsurvey}.
% After splitting the unlabeled dataset into hard-only and non-hard-only points, we identify overlap data points within the non-hard-only group.
% \rebuttal{
% \begin{insight}    
%     % The hard features in overlap points are similar to those in hard-only data points, while easy-only points are not.
%     Overlap points are closer to hard-only data points than easy-only points are to hard-only data points.
% \end{insight}
% } 
% \rebuttal{
% Inspired by our data model, overlap points are well-aligned with hard-only points, making inner products an effective statistics to distinguish overlap points from easy-only points.
% We provide theoretical support for two insights in Section \ref{subsec:theory_overlap_detection}.} Also, these insights are empirically validated in the experiments in Section \ref{subsec:exp_overlap_density_mechanism}, where our overlap detection algorithm effectively isolates overlap points, leading to improved generalization. 

\begin{algorithm}[t!]
    \caption{\textbf{Overlap Detection Algorithm}}\label{alg:overlap_detection}
    
    \begin{algorithmic}[1]
        \STATE \textbf{Input:} Pseudolabeled dataset $D_{\text{w2s}}$, weak model $\fw$
        % , Hard-only threshold $\tau_{\text{hard}}$, Overlap threshold $\tau_{\text{overlap}}$.
        \STATE \textbf{Output:} Overlap dataset, $D_{\text{overlap}}$
        \STATE \textbf{Step 1: Separate Hard-only Points Using Confidence Scores}
        \STATE \rebuttal{Calculate confidence scores $\{c_i\}_{i=1}^{|D_{\text{w2s}}|}$ , where $c_i=\arg\max_j \P(f_{\text{weak}}(x_i)=j)$.}
        \STATE Identify hard-only points with threshold \rebuttal{$\tau_{\text{hard}}=$ChangePointDetection$\left(\{c_i\}_{i=1}^{|D_{\text{w2s}}|}\right)$}:
        \begin{itemize}
            \item $D_{\text{hard only}} = \{ x_i \in D_{w2s} \mid c_i \leq \tau_{\text{hard}} \}$ \hfill \# Low confidence
            \item $D_{\text{non\text{-}hard only}} = D_{w2s} \setminus D_{\text{hard only}}$ \hfill \# High confidence
        \end{itemize}
        \STATE \textbf{Step 2: Identify Overlapping Points from $D_{\text{non\text{-}hard only}}$}
        \STATE Calculate overlap scores $s_i=\max\{|x_{i}^T x_{\text{hard}}|| x_{\text{hard} \in D_{\text{hard only}}}\}$ for each $x_{i} \in D_{\text{non-hard only}}$
        \STATE Identify overlap points with threshold \rebuttal{$\tau_{\text{overlap}}=$ChangePointDetection$\left(\{s_i\}_{i=1}^{|D_{\text{non\text{-}hard only}}|}\right)$}:
        \begin{itemize}
            \item $D_{\text{overlap}} = \{x_i \in D_{\text{non-hard only}}|s_i \geq \tau_{\text{overlap}}\}$ \hfill \# High overlap scores
        \end{itemize}
        
        \RETURN $D_{\text{overlap}}$
    \end{algorithmic}
\end{algorithm}
\section{Theoretical Analysis}\label{sec:theory}

\newtheorem{defn}{Definition}

\def\cF{{\mathcal{F}}}
\def\D{{\mathcal{D}}}
\def\cN{{\mathcal{N}}}
\def\cR{{\mathcal{R}}}
\def\vZero{\mathbf{0}}
\def\Xe{X_{\text{easy}}}
\def\Xh{X_{\text{hard}}}
\def\bOne{{\mathbbm{1}}}

We introduce a theoretical result showing that weak-to-strong generalization is governed by the \sysx. 
Afterwards, we provide and interpret theoretical guarantees for our overlap detection and data source selection algorithms.

%\subsection{Generalization bounds}
\subsection{Weak-to-Strong Generalization via Overlap Expansion}

We build off of the framework in \cite{lang2024theory}, where generalization is governed by an \emph{expansion} property.  Specifically, we show that the weak-to-strong model can correct the weak model's pseudolabels on hard data points $\Dho$, since the pseudolabels produced by the weak model are (relatively) accurate on overlapping points and the strong model can learn hard patterns that address hard data points. We first introduce the relevant definitions and outline our setup.

\begin{definition}[Expansion]\citep{lang2024theory}
\label{def:relative-expansion}
Fix sets $A,B \subset \cX$ \rebuttal{and a neighborhood function $\cN$.} 
We say the distribution $\P_\rvx$ satisfies $(c,q)$-expansion on $(A,B)$ if for all sets $U \subset B$ with $\P(U|B) > q$,
$\P(\cN(U) | A) > c\P(U|B)$.
\end{definition}
\rebuttal{
\begin{definition}[$\eta$-robust]\label{def:eta-robust}\citep{lang2024theory}
For a classifier $f$ and a point $\vx$, define $r(f, \vx) = \P(f(\rvx') \ne f(\vx) | \rvx' \in \cN(\vx))$ as the probability of label disagreement between $\vx$ and its neighbor $\rvx'$. 
A classifier $f$ is $\eta$-robust at $\vx$ if $r(f,\vx) \le \eta$. The set of $\eta$-robust points for $f$ is $R_{\eta}(f) = \{\vx : r(f,\vx) \le \eta\}$.
\end{definition}
}

\paragraph{Problem Setup.}
We use the setup described in Section \ref{subsec:overlap-density-mechanism}. Additionally, let $S_i$ represent the dataset whose labels are class $i$, $\Sg$ be the correctly pseudolabeled subset of $S_i$, and $\Sb$ the incorrectly pseudolabeled subset of $S_i$. With a little abuse of notation, we denote the true labeling function as $y$. Our goal is to show how overlap density translates into the strong model's generalization on points in $\Dho$ via the expansion property. Our key assumption is that the data distribution and the weak-to-strong model behavior on $\Sg \cap \Dov$ expands through the neighborhood structure to $\Sb \cap \Dho$, where the weak model struggles with hard patterns, relying on robustness. This assumption captures the intuition that the strong model can learn patterns usable for predicting the hard points from the overlap points. We state a simplified version of the theorem first; the full version is in the Appendix \ref{appsubsec:proof-pseudolabel-correction}. 
\begin{theorem}\label{thm:pseudolabel_correction} Suppose $\P$ satisfies $(c,q)$ expansion on $(\Sb\cap \Dho, \Sg\cap \Dov)$ for some $c > 0$. Consider an arbitrary \rebuttal{$\eta$-robust} classifier $\fwtos$ such that $\P(\fwtos(\rvx)\neq \fw(\rvx)$ at $\rvx$|$S_i \cap \Dov) \leq 1\rrebuttal{-q}-\varepsilon_1$. Then, the classifier $\fwtos$, $\fwtos$ satisfies the following error bound:
\begin{align*}
\err(\fwtos, y|S_i \cap &\Dho) \leq \err(\fwtos, \fw|S_i \cap \Dho) + \varepsilon_2 \\
& - 2c\varepsilon_2(1-\err(\fwtos, \fw|S_i^{good} \cap \Dov)\rebuttal{-\P(R_\eta(\fwtos)^c|S_i^{good} \cap \Dov)})
\end{align*}
\end{theorem}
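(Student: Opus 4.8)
The plan is to instantiate the abstract pseudolabel-correction machinery of \cite{lang2024theory} with $A = \Sb \cap \Dho$ and $B = \Sg \cap \Dov$, and to track how the $(c,q)$-expansion property, combined with the robustness of $\fwtos$, forces the strong model to agree with the (correct) pseudolabels on a large fraction of $\Dho$. First I would fix the class $i$ and consider the ``bad'' set $U$ of points in $S_i \cap \Dho$ on which $\fwtos$ disagrees with the true label $y$. Since on $\Dho$ the weak model's pseudolabels are wrong at rate $\varepsilon_2$, the disagreement $\err(\fwtos, y \mid S_i \cap \Dho)$ can be related to $\err(\fwtos, \fw \mid S_i \cap \Dho)$ up to a $\varepsilon_2$ slack (triangle-inequality style bookkeeping for $0/1$ loss: $\err(\fwtos,y) \le \err(\fwtos,\fw) + \err(\fw,y)$, and the reverse used where needed). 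This explains the ``$+\varepsilon_2$'' term and isolates the real content, which is the negative correction term.

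Next I would build the key contradiction. Suppose, toward the expansion argument, that the set $U$ of points where $\fwtos$ is simultaneously $\eta$-robust and disagrees with $y$ on $S_i \cap \Dho$ has $\P(U \mid B') > q$ for the appropriate conditioning set. Apply $(c,q)$-expansion to get $\P(\cN(U) \mid \Sg \cap \Dov) > c\,\P(U \mid \Sb \cap \Dho)$. Now on the overlap side $\Sg \cap \Dov$, the hypothesis $\P(\fwtos(\rvx) \ne \fw(\rvx) \mid S_i \cap \Dov) \le 1 - q - \varepsilon_1$ says $\fwtos$ mostly agrees with $\fw$ there, and $\fw$ is mostly correct there (error $\varepsilon_1$ on $\Dov$); so $\fwtos$ mostly predicts $y$ (hence class $i$) on $\Sg \cap \Dov$. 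But any point in $\cN(U)$ that lies in $\Sg \cap \Dov$ has a neighbor in $U$ where $\fwtos$ disagrees with $y = i$; if $\fwtos$ is $\eta$-robust at that neighbor, label disagreement across the edge is at most $\eta$, so such ``boundary'' overlap points are scarce --- their mass is controlled by the non-robust mass $\P(R_\eta(\fwtos)^c \mid \Sg \cap \Dov)$ plus the $\fwtos$-vs-$\fw$ disagreement $\err(\fwtos, \fw \mid \Sg \cap \Dov)$. Chaining the two bounds on $\P(\cN(U) \mid \Sg \cap \Dov)$ gives $c\,\P(U \mid \Sb \cap \Dho) \le \err(\fwtos,\fw \mid \Sg\cap\Dov) + \P(R_\eta(\fwtos)^c \mid \Sg \cap \Dov)$, i.e. $\P(U \mid \Sb \cap \Dho) \le (1/c)(\cdots)$; the factor $2$ and the $\varepsilon_2$ multiplier in the theorem come from translating this $S_i$-conditional statement back to the $\Dho$-conditional error (weighting by the fraction $\varepsilon_2$ of $\Dho$ that is in the ``mispseudolabeled'' regime $\Sb$, and from doing the argument symmetrically for the two ways $\fwtos$ can be wrong relative to $\fw$ and $y$). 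Summing the contributions of the $q$-threshold part (at most $q$-ish mass, absorbed into constants / the $1-q$ slack in the hypothesis) and the non-robust part yields the stated inequality.

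Concretely the steps in order are: (i) reduce $\err(\fwtos, y \mid S_i \cap \Dho)$ to a bound involving $\err(\fwtos, \fw \mid S_i \cap \Dho)$, $\varepsilon_2$, and the mass of the robust-and-$y$-disagreeing set $U$; (ii) if that mass exceeds the $q$ threshold, invoke $(c,q)$-expansion on $(\Sb\cap\Dho,\Sg\cap\Dov)$; (iii) upper-bound $\P(\cN(U)\mid \Sg\cap\Dov)$ using $\eta$-robustness (edges out of $U$ flip labels rarely) together with the hypothesis that $\fwtos \approx \fw$ on $S_i \cap \Dov$ and that $\fw$ is accurate on $\Dov$ with error $\varepsilon_1$; (iv) combine (ii)–(iii) to bound $\P(U \mid \Sb \cap \Dho)$ by $\tfrac1c$ times $\bigl(\err(\fwtos,\fw\mid \Sg\cap\Dov) + \P(R_\eta(\fwtos)^c \mid \Sg\cap\Dov)\bigr)$, rewriting $\err(\fwtos,\fw\mid\Sg\cap\Dov)$ as $1 - (1 - \err(\fwtos,\fw\mid\Sg\cap\Dov))$ to match the theorem's form; (v) assemble: $\err(\fwtos,y\mid S_i\cap\Dho) \le \err(\fwtos,\fw\mid S_i\cap\Dho) + \varepsilon_2 - 2c\varepsilon_2\bigl(1-\err(\fwtos,\fw\mid \Sg\cap\Dov) - \P(R_\eta(\fwtos)^c\mid \Sg\cap\Dov)\bigr)$.

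The main obstacle I anticipate is step (iii)--(iv): getting the bookkeeping exactly right so that the correction term appears with coefficient $2c\varepsilon_2$ and with precisely the combination $1 - \err(\fwtos,\fw\mid\Sg\cap\Dov) - \P(R_\eta(\fwtos)^c\mid\Sg\cap\Dov)$ rather than some looser variant. This requires carefully choosing the conditioning sets in the expansion definition (the neighborhood function $\cN$ should only connect overlap points to hard-only points of the same latent structure), handling the $q$-threshold case without losing a factor, and making sure the ``disagreement with $y$'' on $\Dho$ versus ``disagreement with $\fw$'' on $\Dho$ are reconciled through the $\varepsilon_2$ noise rate in a tight, non-double-counting way. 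I would also need to confirm that the hypothesis $\P(\fwtos(\rvx)\neq\fw(\rvx)\mid S_i\cap\Dov)\le 1-q-\varepsilon_1$ is exactly what licenses applying expansion to a set of mass $>q$ on the $B$-side --- i.e. that the "good" overlap mass available to absorb $\cN(U)$ is at least $q$, which is where the $-q$ correction in the theorem statement enters.
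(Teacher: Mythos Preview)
Your proposal has a directionality error that breaks the argument. The expansion hypothesis is on $(A,B) = (\Sb \cap \Dho,\ \Sg \cap \Dov)$, which by Definition~\ref{def:relative-expansion} means that subsets $U \subset B = \Sg \cap \Dov$ (the overlap side) with mass $>q$ have large neighborhood in $A = \Sb \cap \Dho$ (the hard side). You take $U$ to be the robust mistake set inside $S_i \cap \Dho$ and try to expand it into $\Sg \cap \Dov$; the assumption gives you nothing in that direction. The paper does the opposite: it sets $V_i = R_\eta(f) \cap (S_i\setminus M_i) \cap \Sg \cap \Dov$, the robust \emph{non}-mistake set on the overlap side, uses the hypothesis $\P(\fwtos\neq\fw\mid S_i\cap\Dov)\le 1-q-\varepsilon_1$ precisely to show $\P(V_i\mid \Sg\cap\Dov)>q$, and then expands $V_i$ into $\Sb\cap\Dho$ to obtain $\P(S_i\setminus M_i\mid \Sb\cap\Dho)\ge c\,\P(V_i\mid \Sg\cap\Dov)$ (good edges out of $V_i$ preserve the label, so their endpoints in $\Sb\cap\Dho$ are points where $\fwtos$ is correct even though the pseudolabel is wrong). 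Your scheme is the one used for \emph{coverage expansion} (mistake set on the target expanding back into the source), not for pseudolabel correction; the two theorems assume expansion in opposite directions.

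A second gap is the origin of the $2c\varepsilon_2$ coefficient. It does not come from a symmetry argument, and the triangle inequality $\err(\fwtos,y)\le \err(\fwtos,\fw)+\varepsilon_2$ only gives the trivial bound with no correction. The paper instead uses the set inclusion $E_i \supset (M_i\cap\Sg)\cup((S_i\setminus M_i)\cap\Sb)$, which lower-bounds $\err(\fwtos,\fw\mid S_i\cap\Dho)$ after conditioning on $\Sg$ versus $\Sb$ with weights $1-\varepsilon_2,\varepsilon_2$. The expansion lower bound on $\P(S_i\setminus M_i\mid \Sb\cap\Dho)$ is then applied \emph{twice}: once directly in the $\Sb$-term, and once more inside the decomposition $\P(S_i\setminus M_i\mid S_i\cap\Dho)=\varepsilon_2\,\P(\cdot\mid\Sb\cap\Dho)+(1-\varepsilon_2)\,\P(\cdot\mid\Sg\cap\Dho)$, which is solved for $\P(M_i\mid\Sg\cap\Dho)$ and substituted back. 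Each application contributes $c\varepsilon_2\,\P(V_i\mid\Sg\cap\Dov)$, and their sum produces the $2c\varepsilon_2$ factor after rearranging to isolate $\err(\fwtos,y\mid S_i\cap\Dho)=\P(M_i\mid S_i\cap\Dho)$.
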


% \paragraph{\rrebuttal{Remark 1.}} \rrebuttal{The bound is highly dependent on the neighborhood function $\cN$ which determines the parameters $c$ and $q$.}
% \paragraph{\rrebuttal{Remark 2.}} \rrebuttal{Assume that for any fixed $q \in (0,1)$, $c$ is optimal (i.e. any smaller value of $c$ fails the expansion criterion). Increasing $q$ will cause $c$ to increase as well, but we have a constraint $q \leq 1-\P(\fwtos(\rvx)\neq \fw(\rvx)$ at $\rvx$|$S_i \cap \Dov) - \varepsilon_1$.}

\rebuttal{The full statement and proof are provided in Appendix \ref{appsubsec:proof-pseudolabel-correction}, and additional coverage expansion result is provided in Appendix \ref{appsubsec:coverage-expansion}}. \rrebuttal{We note that the bound is highly dependent on the neighborhood function $\cN$ which determines the parameters $c$ and $q$. To understand the role of $q$, suppose that for any fixed $q \in (0,1)$, $c$ is optimal (i.e. any smaller value of $c$ fails the expansion criterion). Then, increasing $q$ will cause $c$ to increase as well, but we have a constraint $q \leq 1-\P(\fwtos(\rvx)\neq \fw(\rvx)$ at $\rvx$|$S_i \cap \Dov) - \varepsilon_1$ from $\eta$-robust condition, which subsequently constrains c as well.}

This bound demonstrates that weak-to-strong generalization is achievable \emph{as long as the overlap density expands to a sufficient extent} (i.e., large $c$), the error rate in estimating the correct overlap density is low $\left(\text{i.e., small }\err(\fwtos, \fw|S_i^{\text{good}} \cap \Dov)\right)$,\rebuttal{and $f_{w2s}$ is adversarially robust $\left(\text{i.e. small }\P(R_\eta(\fwtos)^c|S_i^{good} \cap \Dov)\right)$}. \rrebuttal{
Specifically, when $\fwtos$ exactly replicates $\fw$, we have $\err(\fwtos, y|S_i \cap \Dho)=\varepsilon_2$. We aim for a tighter bound than this. Pseudolabel correction is provably achieved when the right-hand side is less than $\varepsilon_2$, and the improvement of the bound over $\varepsilon_2$ can be quantified as
\begin{align*}
\rho=2c\varepsilon_2&\left(1-\err(\fwtos, \fw|S_i^{good} \cap \Dov)-\P(R_\eta(\fwtos)^c|S_i^{good} \cap \Dov)\right)\\
&-\err(\fwtos, \fw|S_i \cap \Dho).
\end{align*}
}
% {\small\[\rho=2c\varepsilon_2\left(1-\err(\fwtos, \fw|S_i^{good} \cap \Dov)-\P(R_\eta(\fwtos)^c|S_i^{good} \cap \Dov)\right)-\err(\fwtos, \fw|S_i \cap \Dho).\]}
% }
% \rrebuttal{For instance, consider the case where $\err(\fwtos, \fw|S_i^{\text{good}} \cap \Dov)=0, \rebuttal{\P(R_\eta(\fwtos)^c|S_i^{good} \cap \Dov)}=0$. In this scenario, we have \[\err(\fwtos, y|S_i \cap \Dho) \leq \err(\fwtos, \fw|S_i \cap \Dho)+(1-2c)\epsilon_2.\] The expansion coefficient $c$ explains the success of weak-to-strong generalization. Especially when $c>0.5$, the error rate of $\fwtos$ in $\Dho$ is less than its accuracy regarding the pseudolabels in $\Dho$.}

This result largely follows from the framework in \cite{lang2024theory}; the upshot is that the overlap density mechanism is consistent with existing frameworks for weak-to-strong generalization. However, as we shall soon see, it offers a critical advantage: it permits us to operate with a data-centric perspective that enables users to \emph{improve weak-to-strong generalization}.

%Note that we are emphasizing the specific portion of the data where expansion occurs and supports weak-to-strong generalization in \cite{lang2024theory}, rather than presenting entirely new results. 

\subsection{Theoretical Guarantees for Overlap Detection and Data Selection}\label{subsec:theory_overlap_detection}
Equipped with the previous result, we provide a theoretical guarantee of our overlap detection algorithm under a Gaussian mixture assumption. We  derive a regret bound of our UCB-based data selection algorithm for overlap density maximization.

\paragraph{Overlap Detection.} We provide a theoretical guarantee for the overlap score under the \rrebuttal{assumptions} described in Section \ref{subsec:overlap-density-mechanism}, i.e. $x = [\xe, \xh]$, where $\xe \in \mathbb{R}^{d_{\text{easy}}}$ and $\xh \in \mathbb{R}^{d_{\text{hard}}}$. \rebuttal{Let \(\tilde{x}=g(x)=[\xe, \mathbf{0}]\) represent the input vector for the weak model, where hard features from \(x\) are zeroed out.} More detailed setup specific to this section is described in Appendix \ref{setup:overlap_detection}.
For \( x \in \Dho \), \rrebuttal{$\tilde{x} = [\mathbf{0}, \mathbf{0}]$, so the weak model prediction probability is \( \fw(x) = \sigma(\theta^\top \tilde{x}) = \sigma(0) = 0.5 \)\rrebuttal{, which corresponds to the minimum confidence score}. This ensures the perfect accuracy of detecting hard-only points \rrebuttal{using Algorithm 2 with $\tau_{\text{hard}}= 0.5$}. Next, we aim to separate overlap points from easy-only points.
Under the Gaussian mixture assumption in \ref{setup:overlap_detection}, we have $\Xeo \sim \mathbf{N}(\mue, cI)$, and $\Xov \sim \mathbf{N}(\muo, cI)$, where $\muo = [\tmue, \tmuh]^\top, \mue = [\tmue, 0]^\top$. To show the effectiveness of overlap separation from easy-only points in Algorithm \ref{alg:overlap_detection}, we demonstrate that \( \Xov^\top \Xho \) and \( \Xeo^\top \Xho \) exhibit a distributionally distinguishable gap.}

\begin{theorem}\label{thm:overlap_detection}
Given the above setup, $\E[\Xov^\top \Xho ]-\E[\Xeo^\top \Xho]=\norm{\muh}_{2}^2$. Furthermore, we have
\[\P\left(\Xov^\top \Xho\leq \Xeo^\top \Xho \right) \leq \exp{\left(-\min\left(\frac{3\norm{\muh}_2^4}{16dc^2 + 18c \norm{\muh}_2^2}, \frac{\norm{\muh}_2^2}{8c}\right)\right)}. \]
% \[\P\left(\Xov^\top \Xho\leq \Xeo^\top \Xho \right) \leq \exp{\left(-\min\left(\frac{\norm{\muh}_2^4}{2\nu^2}, \frac{\norm{\muh}_2^2}{2b}\right)\right)}, \]
% where $\nu^2 =\frac{8}{3}dc^2 + 3c \norm{\muh}_2^2$ and $b = 4c$.
\end{theorem}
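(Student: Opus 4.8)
The plan is to decompose the inner products using the block structure $x = [\xe, \xh]$ and then apply a concentration inequality for quadratic forms of Gaussians (a Hanson--Wright-type bound). First I would write $\Xho = [\vZero, \Xh]$ (since easy features are zeroed out for hard-only points; wait---actually $\Xho$ has its easy coordinates zero, so $\Xho = [\mathbf 0, \Xh]$ with $\Xh \sim \mathbf N(\muh, cI_{d_{\text{hard}}})$), and similarly $\Xov = [\Xe^{(ov)}, \Xh^{(ov)}]$ with mean $[\tmue, \tmuh]$ and $\Xeo = [\Xe^{(eo)}, \mathbf 0]$ with mean $[\tmue, \mathbf 0]$. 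Then
\[
\Xov^\top \Xho = (\Xh^{(ov)})^\top \Xh, \qquad \Xeo^\top \Xho = 0,
\]
since the easy block of $\Xho$ vanishes and the hard block of $\Xeo$ vanishes. Hence $\E[\Xov^\top\Xho] = \E[(\Xh^{(ov)})^\top \Xh] = \tmuh^\top \tmuh = \norm{\muh}_2^2$ by independence of $\Xh^{(ov)}$ and $\Xh$, and $\E[\Xeo^\top\Xho] = 0$, giving the claimed gap $\norm{\muh}_2^2$. (If the model instead keeps $\Xeo$'s hard block as genuine noise rather than exactly zero, one gets an extra mean-zero term that only helps; I would handle whichever convention Appendix \ref{setup:overlap_detection} fixes.)

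Next, for the tail bound, the event $\{\Xov^\top\Xho \le \Xeo^\top\Xho\}$ becomes $\{(\Xh^{(ov)})^\top \Xh \le 0\}$, i.e.\ a deviation of the random variable $Z := (\Xh^{(ov)})^\top \Xh$ below its mean by $\norm{\muh}_2^2$. I would write $\Xh^{(ov)} = \tmuh + \sqrt{c}\, g$ and $\Xh = \tmuh + \sqrt{c}\, h$ with $g,h$ independent standard Gaussians in $\R^{d_{\text{hard}}}$, expand
\[
Z = \norm{\tmuh}_2^2 + \sqrt{c}\,\tmuh^\top(g+h) + c\, g^\top h,
\]
and bound $\P(Z \le 0)$. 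The linear term $\sqrt{c}\,\tmuh^\top(g+h)$ is Gaussian with variance $2c\norm{\muh}_2^2$; the bilinear term $c\,g^\top h$ is a quadratic form whose sub-exponential norm is $O(c\sqrt{d})$ (here $d$ should be read as $d_{\text{hard}}$, or an upper bound on it). Splitting the required deviation $\norm{\muh}_2^2$ evenly between the two contributions and applying a Gaussian tail bound to one and a Hanson--Wright/Bernstein bound to the other yields a bound of the form $\exp(-c_1 \norm{\muh}_2^4 / (dc^2))$ from the bilinear part and $\exp(-c_2\norm{\muh}_2^2/c)$ from the linear part; combining via a union bound and keeping the dominant (smaller) exponent produces exactly the stated $\exp\!\big(-\min\big(\tfrac{3\norm{\muh}_2^4}{16dc^2+18c\norm{\muh}_2^2},\ \tfrac{\norm{\muh}_2^2}{8c}\big)\big)$, with the mixed $16dc^2 + 18c\norm{\muh}_2^2$ denominator coming from the standard Bernstein-style interpolation between the sub-Gaussian and sub-exponential regimes of $g^\top h$.

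The main obstacle is getting the constants in the first branch of the min to come out as $3/(16dc^2 + 18c\norm{\muh}_2^2)$ rather than some looser universal constant: this requires using a sharp form of the Hanson--Wright inequality for $g^\top h$ (treating it as $\tfrac12(\|g+h\|^2 - \|g\|^2 - \|h\|^2)$ or directly as a centered quadratic form in $(g,h)$ with a known operator/Frobenius norm of the coupling matrix) and carefully choosing the split point of the deviation, the Bernstein parameters, and the optimization over the free parameter in the Chernoff bound. The linear-term branch is routine. I would also need to double-check the independence structure actually assumed in Appendix \ref{setup:overlap_detection} --- in particular that $\Xov$ and $\Xho$ are drawn independently (different points), which is what makes $g \perp h$ and hence makes the cross term genuinely mean-zero and the whole computation go through.
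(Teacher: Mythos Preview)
Your block decomposition assumes the easy coordinates of $\Xho$ and the hard coordinates of $\Xeo$ are deterministically zero, but in the paper's Gaussian-mixture setup the covariance is the full $cI$ on $\R^d$: those blocks are mean-zero Gaussian noise, not identically zero. Hence $\Xeo^\top\Xho$ is a nondegenerate mean-zero random variable, and your hedge that the extra term ``only helps'' is wrong---it adds variance to the quantity you must concentrate, and is precisely why the final bound carries the full dimension $d$ rather than $d_{\text{hard}}$.

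The paper's route differs from your linear/bilinear split plus union bound. It first sets $\Xdiff:=\Xov-\Xeo\sim \mathbf N(\muh,2cI)$, reducing the event to $\{\Xdiff^\top\Xho\le 0\}$. Then, coordinate by coordinate, $(\Xdiff)_i(\Xho)_i$ is a product of two independent scalar Gaussians; a direct MGF computation (their Lemma~D.2) shows each centered product is sub-exponential with explicit parameters $\nu_i^2=3c(\muh)_i^2+\tfrac{8}{3}c^2$ and a common $b$. Summing across the $d$ independent coordinates gives $\Xdiff^\top\Xho-\|\muh\|_2^2\sim SE\!\big(\tfrac{8}{3}dc^2+3c\|\muh\|_2^2,\,4c\big)$, and the standard one-sided sub-exponential tail with $t=\|\muh\|_2^2$ yields exactly $\tfrac{t^2}{2\nu^2}=\tfrac{3\|\muh\|_2^4}{16dc^2+18c\|\muh\|_2^2}$ and $\tfrac{t}{2b}=\tfrac{\|\muh\|_2^2}{8c}$. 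There is no union bound and no splitting of the deviation; the linear and bilinear pieces live together inside a single sub-exponential MGF bound, which is why the constants fall out as stated.

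Your Hanson--Wright-plus-union-bound plan would give a bound of the same shape but with worse constants: splitting the deviation in half and union-bounding introduces a factor in front of the exponential and, more damagingly, halves the effective numerator in each branch, so you would not recover the stated $3/(16dc^2+18c\|\muh\|_2^2)$ or $1/(8c)$. To match them you would still need the explicit scalar-product MGF bound, at which point the coordinate-wise summation is the cleaner packaging.
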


This result shows the average gap $\norm{\muh}_{2}^2$ between  \( \Xov^\top \Xho \) and \( \Xeo^\top \Xho \) and the bound on the probability that the overlap score of an easy-only point exceeds that of an overlap point. The error bound result implies that the accuracy of the overlap detection algorithm deteriorates as the noise level $c$ and dimension $d$ increase. The proof is provided in Appendix \ref{appsubsec:overlap-detection-proof}.

\paragraph{Data Selection.} We provide a regret bound for our data selection algorithm (Algorithm \ref{alg:data_selection}), which quantifies the gap in overlap density between selecting the optimal data source in every round and using our algorithm, which balances exploration and exploitation through the UCB algorithm. \rebuttal{Let $o(s) = \P(\Do|\cD_s)$ be the population overlap density of source $s$, $\bar{o}_t(s)={|\bar{O}_t(s)|}/{|\bar{D}_t(s)|}$ be the empirical overlap density of source $s$ at round $t$, and $o^* = \max_s o(s)$ be the optimal overlap density. The following theorem establishes an upper bound on the expected average regret at round $t$, $\E[o^* - \bar{o}_t]$.}

\begin{theorem}\label{thm:data-source-selection-regret-bound}
$\E[o^* - \bar{o}_t] \leq O\left (\sqrt{{K\log T}/{t}} \right)$, where $K$ is the number of data sources, $T$ is the total number of rounds.
\end{theorem}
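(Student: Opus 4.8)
The plan is to reduce Theorem~\ref{thm:data-source-selection-regret-bound} to a standard stochastic multi-armed bandit regret bound, treating the $K$ data sources as arms and the (noisy) empirical overlap density returned by the Overlap Detection Algorithm as the observed reward for the pulled arm. First I would set up the bandit model precisely: at each round $t$ we pull an arm $s_t$ and receive a reward $\bar{o}_t(s_t) = |\bar{O}_t(s_t)|/|\bar{D}_t(s_t)|$, which is a bounded random variable in $[0,1]$ whose conditional expectation equals the population overlap density $o(s_t)$. The two sources of stochasticity named in the paper---sampling noise from drawing $n$ points per round, and detection noise from Algorithm~\ref{alg:overlap_detection}---are absorbed into this reward distribution; I would assume (or state as a hypothesis) that these give a sub-Gaussian or simply bounded reward whose mean is $o(s)$, so that Hoeffding-type concentration and hence the standard UCB analysis apply. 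The gap-independent (minimax) regret bound for UCB on $K$ arms over $T$ rounds is the classical $O(\sqrt{KT\log T})$ on the \emph{cumulative pseudo-regret} $\sum_{\tau \le t}(o^* - o(s_\tau))$ \citep{auer2002ucb,lattimore2020bandit}.

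The key steps in order: (1) invoke the standard UCB cumulative regret bound, $\E\!\left[\sum_{\tau=1}^{t}(o^* - o(s_\tau))\right] \le O(\sqrt{K t \log T})$, using the confidence radius $\sqrt{2\log T / \bar{n}_t(s)}$ exactly as in Algorithm~\ref{alg:data_selection}; (2) relate the algorithm's reported average overlap density $\bar{o}_t$ to this cumulative quantity. Here I would interpret $\bar{o}_t$ as the overall empirical overlap density of the pooled sample $\bar D_t$ collected through round $t$, whose expectation is (up to the pooling weights) the average $\frac1t\sum_{\tau\le t} o(s_\tau)$; therefore $\E[o^* - \bar{o}_t] = o^* - \frac1t\sum_{\tau\le t}\E[o(s_\tau)] = \frac1t\,\E\!\left[\sum_{\tau\le t}(o^*-o(s_\tau))\right]$. (3) Divide the cumulative bound by $t$: $\E[o^*-\bar{o}_t] \le O(\sqrt{K t\log T})/t = O(\sqrt{K\log T/t})$, which is exactly the claimed bound. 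A small additional step handles the initialization phase (pulling each source once), which contributes only an $O(K/t)$ term, dominated by $O(\sqrt{K\log T/t})$ for $t \ge K$.

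The main obstacle is step (2): carefully justifying that $\bar{o}_t$---a ratio of detected-overlap count to total sampled count, aggregated across rounds with possibly unequal per-round contributions and with detection noise---is an unbiased (or controlled-bias) estimator of the running average of the $o(s_\tau)$, and that its fluctuations are of lower order than the regret term. If the per-round sample sizes are equal ($n$ each), the pooled empirical density is a simple average of per-round empirical densities and the argument is clean; if detection is biased (Theorem~\ref{thm:overlap_detection} only bounds a misclassification probability, not exact recovery), one must either assume the detection bias is $O(\sqrt{K\log T/t})$ or push it into the constant. I would state the cleanest sufficient assumption---unbiased bounded rewards with mean $o(s)$---so that the reduction to UCB is exact, and remark that the detection guarantee of Theorem~\ref{thm:overlap_detection} ensures the bias is negligible in the relevant regime. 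The remainder is the textbook UCB computation, which I would not reproduce in detail.
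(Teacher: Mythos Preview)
Your approach is essentially the same as the paper's: treat data sources as arms, invoke the gap-independent UCB cumulative-regret bound $O(\sqrt{Kt\log T})$, and divide by $t$. The paper reproves the UCB bound from scratch (Hoeffding with exponent $2T^{-4}$, union bound over $K$ arms and $T$ rounds, then Jensen on $\sum_s \sqrt{n_t(s)}$) rather than citing it, but the structure is identical. One notational point: the paper sidesteps your step~(2) obstacle by \emph{defining} $\bar o_t$ implicitly via the pseudo-regret identity $R(t)=\sum_s n_t(s)\Delta(s)=t(o^*-\bar o_t)$, so $\bar o_t$ is effectively the time-averaged \emph{true} overlap density of the pulled sources, not the pooled empirical ratio you were worried about; your more careful discussion of bias from detection noise is not addressed in the paper's proof.
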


For the details, refer to Appendix \ref{proof:data-source-selection-regret-bound}. This result shows that the gap between the optimal and the empirical overlap ratio obtained with Algorithm \ref{alg:data_selection} decreases at a rate of $O\left (\sqrt{{\log T}/{T}} \right)$ in $T$. This implies that $\E[\bar{o}_T] \rightarrow o^*(T)$ as $T \rightarrow \infty$.

\begin{figure*}
    \centering
    \includegraphics[width=.90\textwidth]{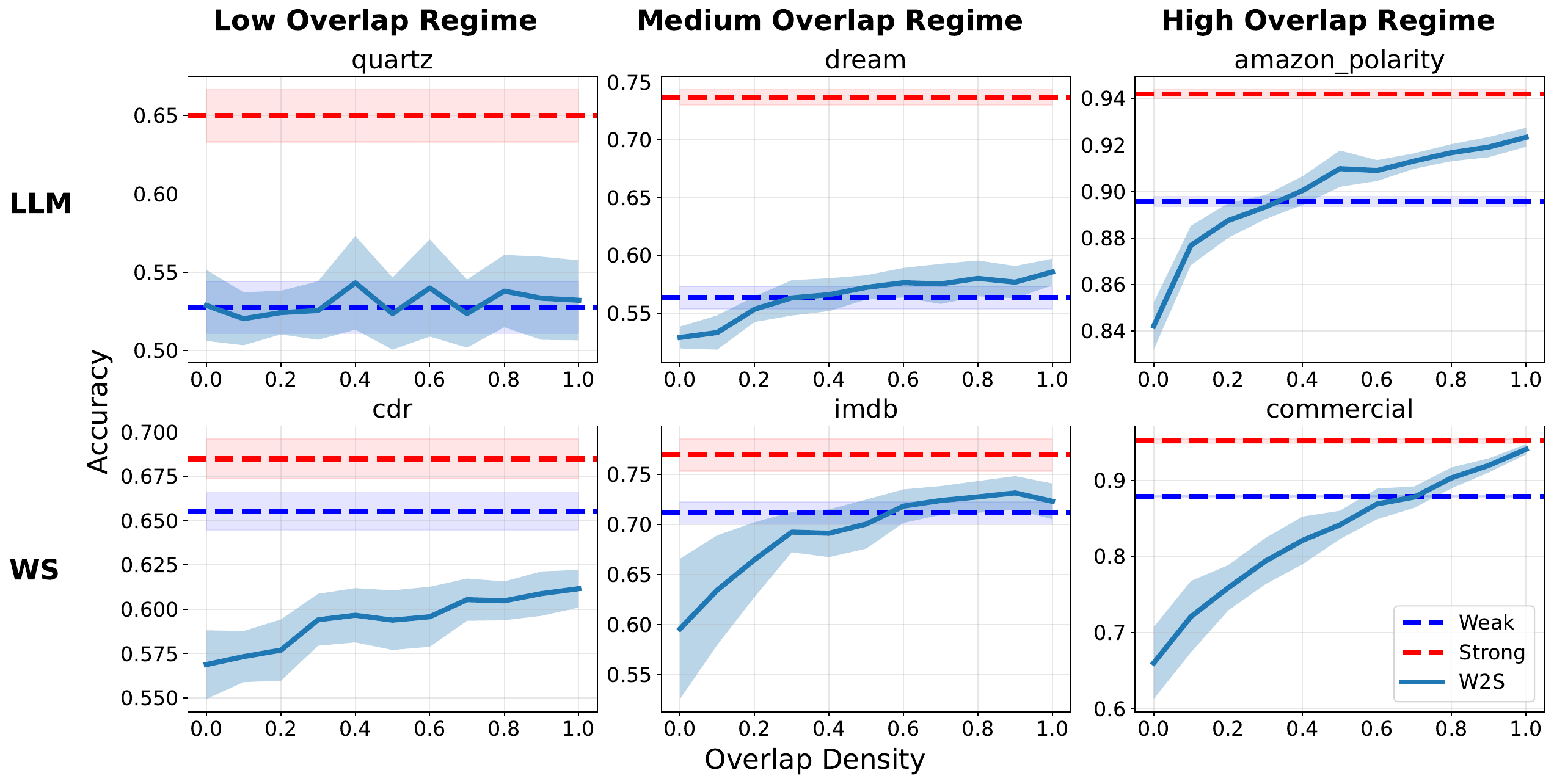}
     \vspace{-3mm}
    \caption{Overlap density versus performance in weak-to-strong generalization with LLMs. Red lines show strong ceiling model accuracies, blue dashed lines represent weak model test accuracies, and W2S lines represent the accuracies of strong models trained on pseudolabeled data with a controlled proportion of overlap density. In general, \textbf{\emph{the strong model's improvement over the weak model tracks the overlap proportion, suggesting that the \sysx\ is indeed an important mechanism for generalization}}. We can observe three different regimes of weak-to-strong generalization in our experiments: a \textbf{low overlap regime}, where the overlap density is insufficient for effective weak-to-strong generalization (here, few points contain overlaps, so choosing to rely on a large overlap proportion translates to a small train set), a \textbf{medium overlap regime}, where the overlap density improves generalization but still yields performance close to that of the weak model, and a \textbf{high-overlap regime}, where the strong model's performance approaches that of the true strong model due to sufficient overlap points.}
    \label{fig:exp_overlap_mechanism_main}
   
\end{figure*}

\section{Experiments}
We first validate the role of the data overlap mechanism in weak-to-strong generalization, examining two cases: large language models following the setup of \cite{burns2023weak} and the weak supervision setting, where the weak model is a label model, often a probabilistic graphical model.
Next, we evaluate the effectiveness of our UCB-based data selection strategy from Algorithm~\ref{alg:data_selection}.
Afterwards, we confirm our theoretical claims in a controlled setting, showing that the performance gains from overlap density primarily benefit hard-only data points, and that our data selection algorithm maximizes overlap density, improving weak-to-strong generalization. Our code is available at \url{https://github.com/SprocketLab/datacentric_w2s}.

\subsection{Weak-to-Strong Generalization via Overlap Density Mechanism}\label{subsec:exp_overlap_density_mechanism}
We follow the approach in \cite{burns2023weak}, where the goal is to use large language models as proxies for weak agents supervising superintelligent agents. Our hypothesis is that the overlap density mechanism elicits weak-to-strong generalization in this setting. We anticipate that a higher overlap density generally enhances the performance of weak-to-strong models. Additionally, we hypothesize the existence of three regimes of weak-to-strong generalization from datasets can be observed depending on the amount of overlap data points in the dataset and the noise level of overlap detection.
\begin{itemize} 
\item \textbf{Low overlap regime}: Insufficient overlap points or overly noisy  detection hinder weak-to-strong generalization, leading to performance worse than $\fw$.
\item \textbf{Medium overlap regime}: Adequate overlap points and moderate noise levels enable weak-to-strong generalization, resulting in performance comparable to, or slightly better than, $\fw$.
\item \textbf{High overlap regime}: Sufficient overlap points with minimal noise in overlap detection induce strong weak-to-strong generalization, with performance approaching $\fs$.
\end{itemize}
\noindent \textbf{Setup and Procedure.} 
We split the original training data into two subsets, $\Dtr$ and $\Dws$. The weak models are trained on $\Dtr$ and then generate weak labels for $\Dws$. The weak-to-strong models are subsequently trained on $\Dws$ using these weak labels.  Using the overlap detection algorithm (Algorithm \ref{alg:overlap_detection}), we identified subsets $\hat{D}_{\text{overlap}}$ and $\hat{D}_{\text{nonoverlap}}$, and sampled $\nctrl$ data points to control overlap density between 0\% and 100\%, creating the dataset $\Dctrlalpha$, where $\alpha$ denotes the overlap ratio. The weak-to-strong (W2S) models were then trained on $\Dctrlalpha$. \textbf{\emph{Crucially, if the total quantity of overlap points is small (i.e., because the overlap density is small), building a dataset whose ratio is high translates into fewer overall points for training.}} Details on the distribution of detected easy-only, hard-only, and overlap points can be found in Appendix  \ref{app:exp_details}.

For the language model experiments, we followed the setup described in \cite{eleutherai_weak_to_strong}, which replicates \cite{burns2023weak}. We used the Qwen1.5 0.5B model as the weak model and the Llama3 8B model as the strong model. We used linear probing based on the observation in Appendix D.2 of \cite{burns2023weak} that linear probing results often align with those from full fine-tuning.  We used 19 datasets from \cite{eleutherai_weak_to_strong}. 
For the weak supervision setting, we used 9 datasets from the WRENCH weak supervision benchmark \citep{zhang2021wrench}. We used Snorkel \citep{Ratner18} as the label model (weak model), and a 4-layer MLP was used as the strong model.

\noindent \textbf{Results.} Figure \ref{fig:exp_overlap_mechanism_main} presents the results of this experiment. As expected, we observe that the strong model performance improves as the overlap proportion increases, \emph{providing evidence for the overlap density mechanism's role} in weak-to-strong generalization. Also, we were able to observe three regimes of weak-to-strong generalization by our overlap detection method. We showcased each case in LLM and weak supervision settings, respectively. Full experimental results are provided in Appendix \ref{appsubsec:exp_overlap_density_mechanism_full}. Additionally, the ablation study on model architecture in the weak supervision setting is presented in Appendix \ref{appsubsec:ws_other_architecture}, and the transferability study in Appendix \ref{appsubsec:ws_transferrability}.

\subsection{Data Source Selection via Overlap Detection}\label{subsec:data_selection}

Next, we validate our data selection procedure instantiated in Algorithm~\ref{alg:data_selection}. We hypothesize that the UCB-based overlap maximization strategy leads to better weak-to-strong generalization by identifying the optimal data source given multiple sources with varying overlap densities.

\noindent \textbf{Setup and Procedure.} We used a similar setup and datasets as in the large language model experiments. Overlap density in the training sets was identified using our proposed method (Algorithm \ref{alg:overlap_detection}), and the weak-to-strong training dataset was split into $D_1$ and $D_2$ with overlap densities of 0.9 and 0.1, respectively. With these two data sources, we ran Algorithm \ref{alg:data_selection} and compared its performance to random sampling. The number of rounds was set to $T=25$.

\begin{figure}[!t]
	\centering
\includegraphics[width=0.9\textwidth]{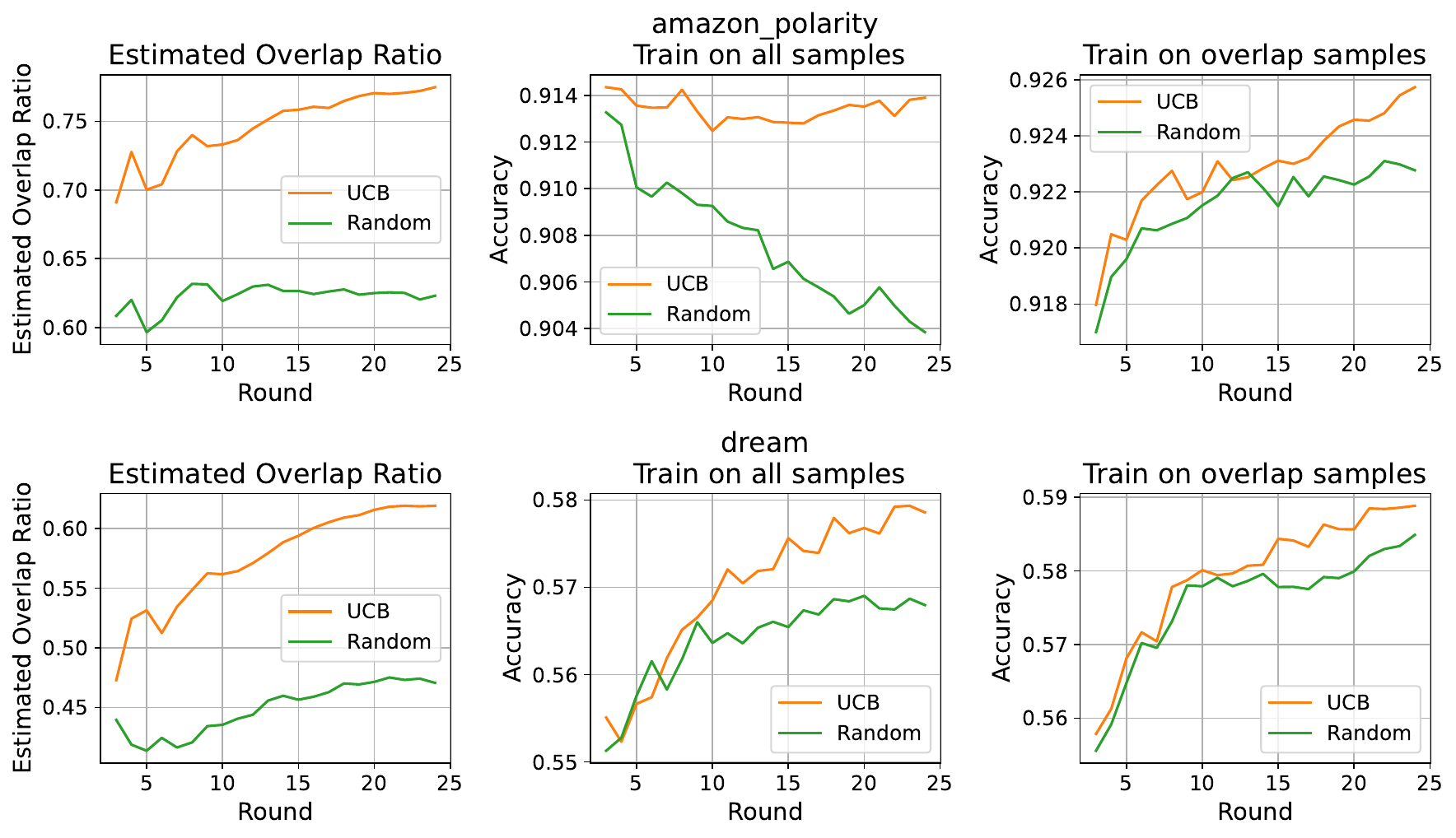}
\vspace{-3mm}
\caption{Data selection results with Algorithm \ref{alg:data_selection} for Amazon Polarity and DREAM datasets. We report the average of 20 repeated experiments with different seeds. We observe that the data source selection procedure, based on overlap density estimation, can produce enhancements over random sampling across data sources.}
    \label{fig:llm_data_selection}
\vspace{-3mm}
\end{figure}

\noindent \textbf{Results.}  The results are presented in Figure \ref{fig:llm_data_selection}. We observe that the UCB-based overlap maximization algorithm can lead to better weak-to-strong generalization. We note that we do not \emph{always} expect to obtain results such as those in Figure \ref{fig:llm_data_selection}. Indeed, if there simply are very few overlap points---or if the procedure for identifying them is very noisy --- we will not observe these types of results---or any type of weak-to-strong generalization. Full experimental results are provided in Appendix \ref{appsubsec:exp_data_selection_full}.

\subsection{Synthetic Experiments}\label{subsec:exp_synthetic}
We verify our overlap density mechanism and data selection algorithm in fully controllable settings. 

\subsubsection{Overlap Density Mechanism}\label{subsubsec:exp_synthetic_overlap_mechanism}
We validate the claim that overlap density enhances the performance of a weak-to-strong model on hard data points, while the weak model exhibits random accuracy on those same points.

\noindent \textbf{Setup and Procedure.}
We simulate weak-to-strong generalization using a simple mixture of Gaussians setup with logistic regression models, as described in Section~\ref{sec:method}. In this setup, the hard features are intentionally blocked (set to \textbf{0}) for the weak models to mimic the common scenario where weak models lack access to these features. Full details are provided in Appendix \ref{app:exp_details}. The weak model $\fw$ is trained on a dataset $\Dtr$ and generates pseudolabels for $\Dws$. The weak-to-strong model $\fwtos$ is then trained on $\Dws$ using these pseudolabels and evaluated on $D_{\text{test}}$. We set $n_{\text{easy}}=100$ and $n_{\text{hard}}=100$, incrementing $n_{\text{overlap}}$ by 5 in each iteration. The performance of the weak-to-strong model is assessed on easy-only, hard-only, and overlap data points in the test set, respectively.

\textbf{Results.}
Figure \ref{fig:synthetic_experiment} illustrates how accuracy varies with the overlap ratio across easy-only, hard-only, and overlap data points. As expected, the most substantial performance improvement over the weak pseudolabeler occurs on the hard-only data points. On the easy-only data, the weak-to-strong model initially underperforms compared to the weak model due to label noise. However, as the overlap density increases, the weak-to-strong model’s performance approaches that of the weak model. On overlap data points, the weak-to-strong model also starts off performing worse due to similar label noise, but as it learns hard patterns, it eventually outperforms the weak model.

\begin{figure*}
    \centering
    \includegraphics[width=0.8\textwidth]{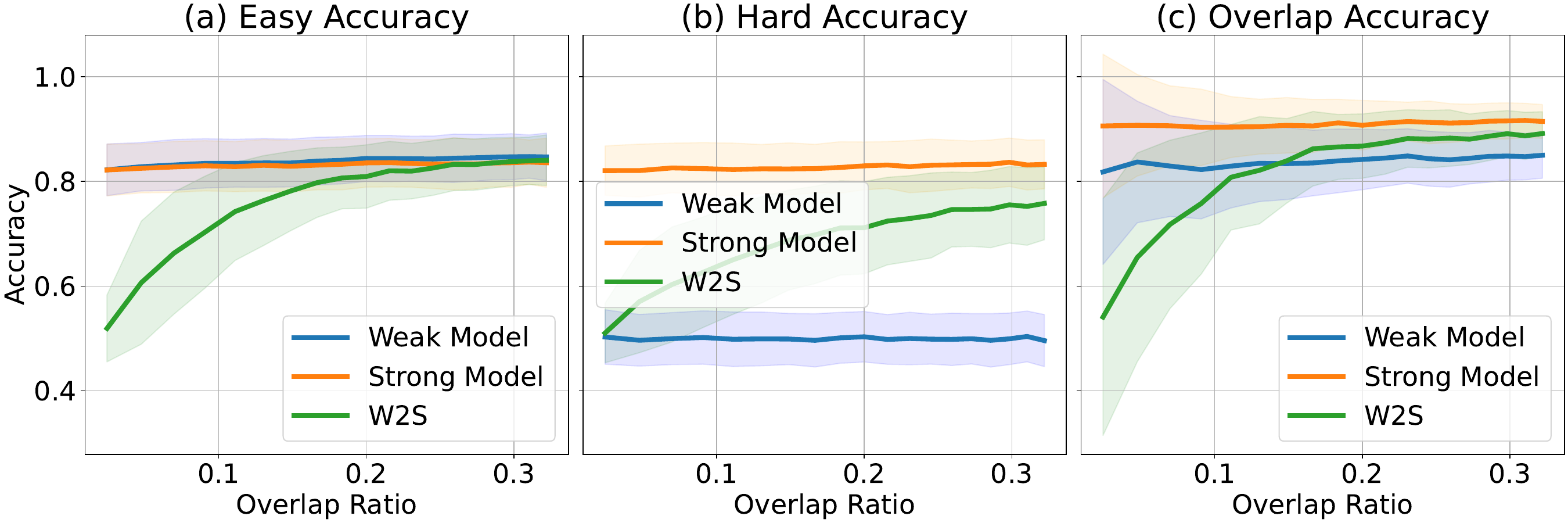}
    \vspace{-3mm}
    \caption{ Accuracy in each data region in synthetic experiments. As expected, the performance gain mainly comes from hard data points as the overlap density increases.}
\label{fig:synthetic_experiment}
\end{figure*}

\subsubsection{Data Source Selection}
We validate the claim that the algorithm effectively identifies data sources with higher overlap density and progressively maximizes it with each round, leading to improved weak-to-strong generalization.

\paragraph{Setup.} We set $K=5$ data sources, each characterized by different overlap densities: [0.1, 0.15, 0.2, 0.05, 0.8]. For the nonoverlap distribution, we assumed that the half of the nonoverlap distribution is easy-only, and the rest half is hard-only. We compare our data source selection algorithm against random sampling and an oracle setting, where data are always sampled from the data source with the optmal overlap density. In each round, 100 data points were sampled from the selected data source, with the total number of rounds set to $T=50$.

\begin{figure*}
    \centering
    \includegraphics[width=0.67\textwidth]{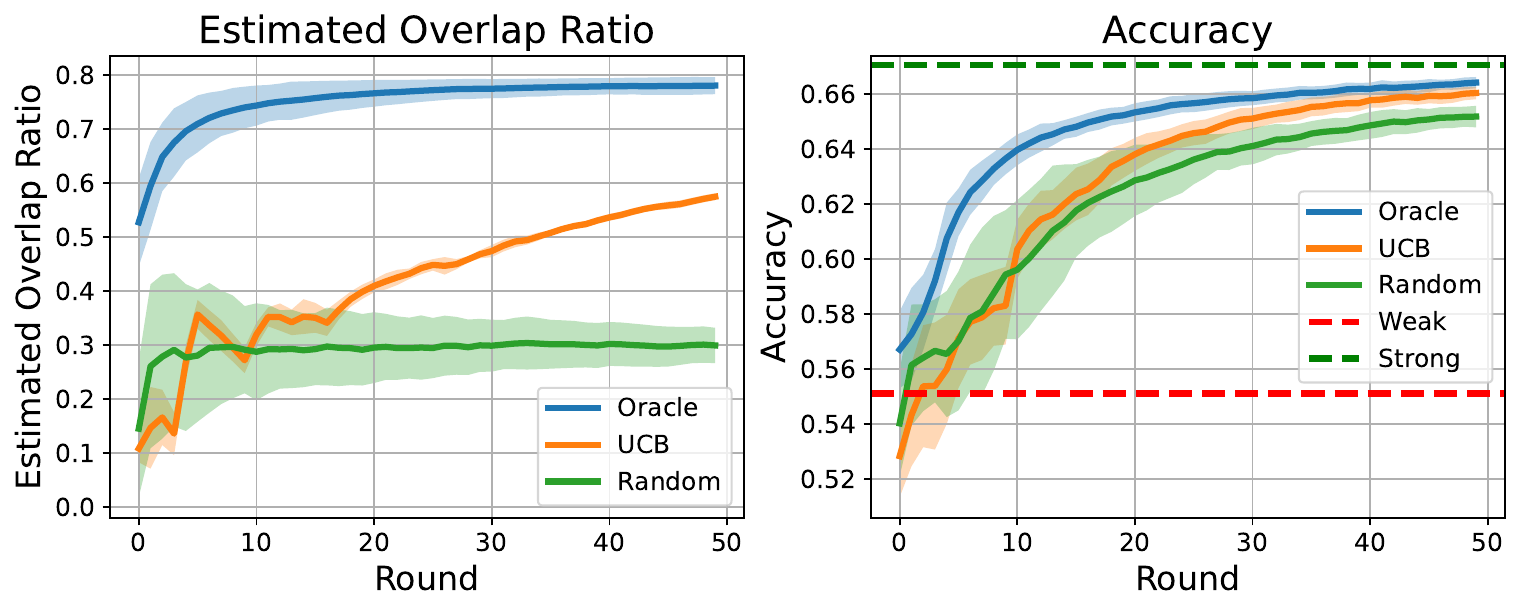}
    \vspace{-5mm}
    \caption{Synthetic data selection experiment. Our algorithm demonstrates better data efficiency than random sampling by consistently identifying the data source with the highest overlap density.}
\label{fig:synthetic_data_selection}
\vspace{-3mm}
\end{figure*}

\paragraph{Results.} Figure \ref{fig:synthetic_data_selection} presents the experimental results. As the rounds progress, our algorithm increasingly identifies the optimal data source, achieving better weak-to-strong generalization compared to random sampling. Additional experimental results on easy-only and hard-only training are provided in Appendix \ref{appsubsec:synthetic-easy-only-and-hard-only}, along with a sensitivity analysis for overlap detection noise in Appendix \ref{appsubsec:synthetic-noise-in-overlap-detection}.

% Explain the purpose of this synthetic experimentand hypothesis
% Setup and Procedure
% Results

% \vspace{-8mm}
% \end{wrapfigure}

\section{Conclusion}
We studied a data-centric mechanism that explains weak-to-strong generalization. 
This mechanism is centered on easy-hard overlaps: points where weak models leverage easy patterns for accurate labeling, while strong models learn hard patterns to generalize to hard-only points.
We studied this idea conceptually, theoretically, and empirically; the latter in multiple popular settings.
Finally, we introduced algorithms for identifying overlapping points and determining, given a limited data budget, which data sources should be queried to maximize weak-to-strong generalization. 

%\noindent \textbf{Limitations \& Future Work.} 
Our study was limited to a simple version of what is likely to be a more complex mechanism in many realistic settings.
We are interested in extending this work in several directions.
These include allowing more complex patterns (multiple levels of overlapping difficulties), further theoretical results, and studying additional variations for the overlap identification procedure.  

%\paragraph{Broader Impacts.} Improved data selection for weak-to-strong generalization may enable improved alignment procedures for a variety of settings. Doing so may produce positive social value.

\subsubsection*{Acknowledgments}
We are grateful for the support of the National Science Foundation (NSF) (CCF2106707), the Defense Advanced Research Projects Agency (DARPA Young Faculty Award), and the Wisconsin Alumni Research Foundation (WARF).
\bibliography{iclr2025_conference}
\bibliographystyle{iclr2025_conference}

\appendix
\setcounter{algorithm}{0}
\renewcommand{\thealgorithm}{A\arabic{algorithm}} 
\setcounter{figure}{0}
\renewcommand{\thefigure}{A\arabic{figure}}
\setcounter{table}{0}  % Reset table numbering
\renewcommand{\thetable}{A\arabic{table}}  % Prefix table number with "A"

\newpage
\section*{Appendix}
The appendix contains additional details, proofs, and experimental results. The glossary (Appendix \ref{appsec:glossary}) contains a convenient reminder of our terminology.
Appendix \ref{appsec:extended-related-work} provides more related works and discussion about the relationship between our work and related papers.
In Appendix \ref{appsec:algorithm-details}, we describe the details of our algorithms and discuss their implementations.
Appendix \ref{app:theory} provides the proofs of theorems that appeared in Section \ref{sec:theory}.
Finally, we provide additional details and analysis of the experiments in Appendix \ref{app:exp_details} and present further experimental results in Appendix \ref{appsec:additional-experiments}.
\section{Glossary}\label{appsec:glossary}
The glossary is given in Table \ref{tab:glossary}.
\begin{table}[h]
\centering
\begin{tabular}{p{0.1\linewidth} p{0.8\linewidth}}
\hline
Symbol & Definition \\ \hline
$n$ & Total number of samples \\
$\cX$ & Feature space \\
$\cY$ & Label space \\
$y(\cdot)$ & Underlying labeling function \\
$\cD$ & Data distribution \\
$\Deo$ & Set of data points that only contain the easy patterns\\
$\Dho$ & Set of data points that only contain the hard patterns\\
$\Dov$ & Set of data points containing both easy and hard patterns\\
$\Dtr$ & Labeled training set for weak model \\
$\Dw2s$ & Unlabeled or pseudolabeled training set for weak-to-strong models \\
$\Dctrlalpha$ & Sampled dataset from $\Dw2s$ with the controlled overlap ratio as $\alpha$ \\
$\fw$ & Weak model \\
$\fs$ & Strong model fine-tuned with true labels \\
$\fwtos$ & Weak-to-strong model (Strong model fine-tuned with pseudolabels generated by weak model) \\
$\varepsilon_1$ & Error rate of weak model in easy-only and overlap, $\P(\fw(x) \neq y \mid (x,y) \in D_{\text{overlap}} \cup D_{\text{easy only}})$\\
$\varepsilon_2$ & Error rate of weak model in hard-only points, $\P(\fw(x) \neq y \mid (x,y)  \in D_{\text{hard only}} \cup D_{\text{neither}})$ \\
$S_i$ & Covered data points of class $i$ \\
$\Sg$ & Correctly pseudolabeled data points of class $i$ \\
$\Sb$ & Incorrectly pseudolabeled data points of class $i$ \\
$T_i$ & Uncovered data points of class $i$ \\
$o(s)$ & Overlap density for a specific data source $s$ \\
$o^*$ & Best overlap density of all the data sources \\
$\overline{o}_t(s)$ & Empirical overlap density of data source $s$ at round $t$ \\
$T$ & Number of data selection rounds \\
$\D_i$ & $i$-th data source \\
$\bar{D}$ & Sampled data from data sources \\
$\bar{O}$ & Detected overlap points in $\bar{D}$\\
$\bar{D}_t(s)$ & Sampled data from data source $s$ up to round $t$\\
$\bar{O}_t(s)$ & Detected overlap points sampled from data source $s$ up to round $t$ \\
% $n$ & Number of points \\
% $K$ & Number of classes \\
% $[K]$ & The set of classes $\{1, 2, \ldots, K\}$ \\
% $\mathcal{X}$ & Input feature space \\
% $\mathcal{Y}$ & Label space \\
% $X$ & Input features \\
% $Y$ & True labels \\
% $P_s$ & Source (training) distribution of data \\
% $P_t$ & Target (testing) distribution of data \\
% $s_{\theta}$ & Prediction score function with parameter $\theta$ \\
% $C^*$ & Bayes optimal cost matrix for prediction\\
% $\hat{C}$ & Estimate of cost matrix for prediction\\
% $\nu$ & Class balance for true labels\\
% $\nu^{ZS}$ & Class balance for predicted labels from the zeroshot model\\
% $\Delta_C$ & Additive perturbations to cost matrix \\
% $\Delta_{\nu}$ & Additive perturbation to class balance \\
% $\pi$ & Optimal transport plan \\
% $G, g$ & Constraint matrix and vector for linear programming s.t. feasible set is $\{x \in \mathcal{X}: Gx\geq g\}$ \\
% $w$ & Dual solution for linear programming problem \\
% $\kappa$ & Hoffman constant for the true optimal transport problem \\
% $[x]_+$ & Take the positive parts of $x$, i.e. $[x]_+ := x\mathbbm{1}[x>0]$ \\
% $[x]_-$ & Take the negative parts of $x$, i.e. $[x]_- := x\mathbbm{1}[x<0]$ \\
% $\text{vec}(A)$ & Vectorized $A$, $\text{vec}(A)=[A_{11}, \ldots A_{m1}, A_{12}, \ldots, A_{m2}, \ldots, A_{1n}, \ldots A_{mn}]^T$ for $A \in \real^{m \times n}$\\
% $\delta$ & Confidence score matrix perturbation \\
% $\gamma$ & Class balance perturbation \\

% Add more terms and definitions as needed
\hline
\end{tabular}
\caption{Glossary}\label{tab:glossary}
\end{table}

\section{Extended Related Work}\label{appsec:extended-related-work}
\paragraph{Theory of Weak-To-Strong Generalization.} Several theoretical approaches have been proposed to explain weak-to-strong generalization, reflecting growing interest in this area of research. \cite{somerstep2024statisticalw2s} frame weak-to-strong generalization as a transfer learning problem, where latent knowledge from weak models is transferred to strong models. They propose label refinement to address the limitations of naive fine-tuning on pseudolabels, i.e. naive fine-tuning often leads to the strong model replicating the errors of the weak models. \cite{charikar2024quantifyingw2s} present a theoretical framework rooted in convex analysis to explain weak-to-strong generalization. They suggest that the disagreements between strong models and weak models’ pseudolabels predicts weak-to-strong generalization performance. This aligns with our theory, where a strong model trained on overlap density corrects pseudolabels on hard data points. Finally, \cite{lang2024theory} introduce a framework that explains pseudolabel correction and coverage expansion based on the expansion property and model robustness. We specifically characterize where such expansions occur and provide both theoretical and empirical validation of how they lead to weak-to-strong generalization.

\paragraph{Evaluating the Difficulty of Data Points.} 
The idea that certain points are more difficult than others has a long history. There are several ways to define hardness in supervised learning alone. These include closeness to the decision boundary, as in active learning \citep{dasgupta2005analysis, Hanneke2014Book}, low-confidence points, as in selective classification \citep{Yaniv2010Selective, Yaniv2017Selective}, points (or sets of points) with high values of the loss function, and heuristic rules applied to the data \citep{sun2024easy}.
\cite{agarwal2022estimating_diffeval1} assess sample difficulty by examining the variance of gradients during training. \cite{baldock2021deep_diffeval2} employ deep neural networks, operating under the intuition that difficult samples are more likely to be predicted in higher layers when using linear probing at each layer. \cite{seedat2024dissecting_diffeval3} provide a comprehensive survey and taxonomy of data hardness characterization, along with a benchmark for comparing different methods. However, none of these studies address overlapping data points, which contain both easy and hard patterns and could be crucial for understanding weak-to-strong generalization.

\paragraph{Data Valuation.}
A large number of works have studied ways to understand the impact of each point in a training dataset on the performance of the resulting model.
These include approaches via influence functions \cite{koh2017understanding}, Shapley values \cite{ghorbani2019data}, and other methods \cite{karr2006data, yoon2020data}. 
The main difference between these works and ours is that we are less concerned with any single point and model training in general, but with an \emph{overall data mechanism} that is relevant to the weak-to-strong generalization setting. 

\paragraph{\rebuttal{Data Curation and Selection}} \rebuttal{Selecting subsets of data to achieve better performance with less data has been extensively studied, particularly in the context of foundation model training. \cite{xie2023importanceresample} introduce Data Selection with Importance Resampling (DSIR), which selects pretraining data for language models by estimating importance weights in a reduced feature space, aligning the selected data distribution with a desired target distribution. Similarly, \cite{ankner2024perplexed} use the perplexity of smaller models to curate pretraining datasets, while \cite{wettig2024qurating} employ quality qualifiers to assign scores and steer data curation process. \cite{xialess} use influence functions to curate instruction-tuning datasets. Finally, \cite{li2024datacomp} present DataComp for Language Models (DCLM), a benchmark designed to evaluate dataset curation strategies within a general data curation pipeline for pretraining large language models. Our overlap detection process aligns with the broader theme of data curation but is specifically tailored to the context of weak-to-strong generalization.}

\rebuttal{In a more general context, \cite{campi2023compression} introduce a theoretical framework for compression functions in learning, laying the foundation for data selection methods. Building on this framework, \cite{paccagnan2024p2l} propose Pick2Learn (P2L) meta-algorithmic framework, which iteratively selects data subsets until a specified appropriateness criterion is satisfied, producing a compressed dataset that preserves most of the relevant information. While the concept of overlap density bears a conceptual resemblance to the output of the P2L algorithm—both aim to identify data subsets that generalize effectively, overlap density explicitly addresses the noise inherent in pseudolabels generated by weak supervisors, making it particularly suited for weak-to-strong generalization. Nonetheless, P2L algorithm could potentially be adapted for overlap detection, provided an appropriate criterion for overlap identification is defined within its framework.}

% , with some recent breakthroughs in the context of foundation model training. This includes studies into the order and scheduling of training data to use the various properties of the data present from different sources \cite{chen2023skillitdatadrivenskillsframework}. Recent approaches focus on compression theory and reducing the size of the necessary training data  \cite{campi2024compressiongeneralizationlearning}, and selecting data points with guarantees on the resulting statistical risk \cite{paccagnan2024p2l}. In contrast, our work focuses on selecting the best data source to continue sampling from, without having access to new samples without including them.}
\section{Algorithm details}\label{appsec:algorithm-details}
We provide the detailed version of Algorithm \ref{alg:data_selection} in Algorithm \ref{alg:data_selection_full}, and discuss details of Algorithm \ref{alg:overlap_detection}.

\begin{algorithm}[t!]
    \caption{\textbf{UCB-Based Data Selection for Maximizing Overlap (Detailed)}}\label{alg:data_selection_full}
    \begin{algorithmic}[1]
        \STATE \textbf{Input:} Data sources $\mathcal{D}_1, \mathcal{D}_2, \dots, \mathcal{D}_K$, the number of rounds: $T \geq K$, sample size per round: $n$, weak model: $f_{\text{weak}}$
        \STATE \textbf{Output:} Sampled data set $\bar{D}$ for weak-to-strong model training, Detected overlap samples $\bar{O}$
        \STATE \textbf{Initialization:}
        $\bar{D}=\emptyset$, $\bar{O}=\emptyset$\\
        \STATE \# Try each data source once
        \FOR{$t = 1$ to $K$}
            \STATE $k\gets t$, Sample $n$ points $S_t(k)=\{x_1, \ldots, x_{n}\}$ from $D_k$
            \STATE Do pseudolabeling of samples using the weak model: $S_t(k) \leftarrow \{ (x, f_{\text{weak}}(x)) \mid x \in S_t(k) \}$
            \STATE Initialize $\bar{D}_t(k) \gets S_t(k)$, Update $\bar{D} \gets \bar{D} \cup S_t(k)$
            \STATE Detect overlap points $O_t(k)$ in $\bar{S}_t(k)$ using Algorithm~\ref{alg:overlap_detection}, initialize $\bar{O}_t(k)\gets O_t(k)$, $\bar{O}\gets O_t(k)$.
            \STATE Initialize choice count: $\bar{n}_t(k) \leftarrow 1$
        \ENDFOR
        \STATE
        \STATE \# Choose data source in each round based on UCB (Upper Confidence Bound)
        \FOR{$t = K+1$ to $T$}
            \STATE \textbf{For each} $k = 1$ to $K$ \textbf{do}
            \STATE \hspace{1em} Compute UCB score: $\text{UCB}_t(k) \leftarrow \cfrac{|\bar{O}_{t-1}(k)|}{|\bar{D}_{t-1}(k)|} + \sqrt{\cfrac{2 \log T}{\bar{n}_t(k)}}$
            \STATE Select data source: $k^* \leftarrow \arg\max_{k} \UCB_t(k)$
            \STATE Sample $n$ points $S_t(k)=\{x_1, \ldots, x_{n}\}$ from $D_k$
            \STATE Label new samples using $f_{\text{weak}}$ and do pseudolabeling with $f_{\text{weak}}$
            \STATE Detect overlap points $O_t(k)$ in $\bar{S}_t(k)$ using Algorithm~\ref{alg:overlap_detection}
            \STATE $\bar{D}_t(k^*) \gets \bar{D}_{t-1}(k^*) \cup S$, $\bar{D}_t(k) \gets \bar{D}_{t-1}(k)$ for $k\neq k^*$, $\bar{D} \gets \bar{D}\cup S$
            \STATE $\bar{O}_t(k^*) \gets \bar{O}_{t-1}(k^*) \cup O$, $\bar{O}_t(k) \gets \bar{O}_{t-1}(k)$ for $k\neq k^*$, $\bar{O} \gets \bar{O} \cup O$
            \STATE Increment choice count: $\bar{n}_t(k^*) \leftarrow \bar{n}_{t-1}(k^*) + 1$,$\bar{n}_t(k) \leftarrow \bar{n}_{t-1}(k)$ for $k\neq k^*$
        \ENDFOR
        \STATE \textbf{Return} $\bar{D}, \bar{O}$
    \end{algorithmic}
\end{algorithm}

\paragraph{Threshold selection.} For the thresholds in Algorithm \ref{alg:overlap_detection}, we used a change point detection algorithm, specifically the binary segmentation method \citep{sen1975binseg}, applied to the sorted confidence and overlap scores. \rebuttal{The implementation used the ruptures Python package \citep{truong2020ruptures}. Change point detection methods identify points where statistical properties of a data sequence shift. In our approach, we assume that the distributions of confidence scores and overlap scores vary across hard-only, easy-only, and overlap regions, allowing the change point detection method to identify these differences. Alternatively, segmentation methods like K-means clustering could also be used for this purpose.}

\paragraph{Overlap scoring.} While we provide a theory for inner product scores and taking maximum as our algorithm to identify overlap density, we used the absolute value of the cosine similarity in large language model experiments and weak supervision experiments after testing various statistics like mean, median, 75th percentile, and Euclidean distance instead of the inner product. The core intuition is that overlap points are closer in distribution to hard-only points than to easy-only points. \rebuttal{We} anticipate other measures could capture this intuition, with influence functions \citep{koh2017understanding} being a promising alternative for future work. Additionally, when using neural networks, we computed overlap points from the last layer activations rather than the inputs, as this provided clearer signals in practice.
\section{Theory details}\label{app:theory}
% We provide the proof of Corollary 1.
% \paragraph{Proof of Corollary 1.} From Theorem 1, with probability $1-\delta$, we have \begin{equation}
%         R_{\ell,D}^{(strong)}(\hat{f}) -  \min_{f\in \cF_s}R_{\ell,D}(f) \leq 4L_p \cR_{n_{\text{overlap}}}(\cF_s  )+2\sqrt{{\log(1/\delta)}/{2n_{\text{overlap}}}},
%     \end{equation}
% If the assumption \begin{equation}
%         4L_p \cR_{n_{\text{overlap}}}(\cF_s  )+2\sqrt{\frac{\log(1/\delta)}{2n_{\text{overlap}}}} < \min_{f\in \cF_w}R_{\ell,D}(f) - \min_{f\in \cF_s}R_{\ell,D}(f), 
%     \end{equation}
% holds, then we have

% \begin{align*}
%         R_{\ell,D}^{(strong)}(\hat{f}) -  \min_{f\in \cF_s}R_{\ell,D}(f) &\leq 4L_p \cR_{n_{\text{overlap}}}(\cF_s  )+2\sqrt{\cfrac{\log(1/\delta)}{2n_{\text{overlap}}}} \\
%         & < \min_{f\in \cF_w}R_{\ell,D}(f) - \min_{f\in \cF_s}R_{\ell,D}(f) \\
%         & \leq R^{(weak)}_{\ell,D}(f) - \min_{f\in \cF_s}R_{\ell,D}(f).
% \end{align*}
% Canceling out $\min_{f\in \cF_s}R_{\ell,D}(f)$ on both sides, we obtain the desired result $R_{\ell,D}^{(strong)}(\hat{f}) < R^{(weak)}_{\ell,D}(f)$.
\subsection{Proof of Theorem \ref{thm:pseudolabel_correction}}\label{appsubsec:proof-pseudolabel-correction}
We provide a theoretical result building off \cite{lang2024theory}. The main idea is that the strong model can generalize better by learning from overlap data points, which leads to the expansion property and pseudolabel correction phenomenon in \cite{lang2024theory}. We begin by adopting the definitions from \cite{lang2024theory}.

\paragraph{Notations.}
Let $\rvx$ denote a random variable with distribution $\cD$, and $\vx$ represent its realizations. We assume the existence of a ground-truth labeler $y: \cX \to \cY = {1,\ldots,k}$ and a weak model (pseudolabeler) $\fw: \cX \to \cY \cup \{\abst\}$, which assigns to each $\vx$ either a label from $\cY$ or the abstention symbol $\abst$.

Define $S = \{\vx \mid \rebuttal{\fw(\vx)} \neq \abst\}$ as the \textit{covered} subset of $\cX$ (the set with pseudolabels), and $T = \{\vx \mid \rebuttal{\fw(\vx)} = \abst\} = \cX \setminus S$ as the uncovered subset. Training occurs on the pseudolabeled \textit{source} subset $S$, and evaluation spans both $S$ and the uncovered \textit{target} $T$. Let $\{\cX_i\}$ be a partition of $\cX$ where the ground-truth label is constant in each $\cX_i$ (e.g., $\cX_i = \{\vx \mid y(\vx) = i\}$). We use this partition for convenience, but our results generalize to other partitions. $S$ and $T$ are partitioned into $S_i = S \cap \cX_i$ and $T_i = T \cap \cX_i$, respectively. Finally, each $S_i$ is divided into correctly pseudolabeled examples $S_i^{\text{good}} = \{\vx \in S_i \mid \fw(\vx) = y(\vx)\}$ and incorrectly pseudolabeled examples $S_i^{\text{bad}} = S_i \setminus S_i^{\text{good}}$.

\textbf{Definitions.}
For two classifiers $f, g: \cX \to \cY$ and a set $U \subset \cX$, we define 
$\err(f,g | U)$ as the probability of disagreement between $f$ and $g$ conditioned on $\rvx \in U$, i.e., $\P(f(\rvx) \ne g(\rvx) \mid \rvx \in U)$. 
The probability is over $\rvx \sim \mathcal{D}$, which we often omit for simplicity. 
We focus on classifiers that minimize the error on the non-abstaining weak labels within the strong model hypothesis class $\cF$, i.e., approximate solutions to $\argmin_{f \in \cF} \err(f,\fw | S)$. 
Our goal is to derive upper bounds on the error $\err(f, y | D)$, which represents the classifier's error on the \textit{true labels} on some $D \subset \cX$.

\rebuttal{
\begin{definition}[Neighborhood]\label{def:neighborhood}\citep{lang2024theory}
Let $\cN$ be a \emph{neighborhood function} that maps each point $\vx$ to a set of points $\cN(\vx) \subset \cX$ that we call the neighborhood of $\vx$.
We will assume that $\cN$ satisfies $\vx \in \cN(\vx') \iff \vx' \in \cN(\vx)$, i.e., that the neighborhoods are symmetric.
We can extend $\cN$ to a function of sets as $\cN(A) = \bigcup_{\vx \in A}\cN(\vx)$.
Examples to keep in mind are $\cN(\vx) = \{\vx' : ||\phi(\vx) - \phi(\vx')|| \le r\}$ for some representation $\phi: \cX \to \reals^d$, or, in the case of text inputs $\vx$, the set of fluent paraphrases of $\vx$. However, our results work with any definition of $\cN$.
\end{definition}
}
\begin{definition}[Example graph]\label{def:example-graph}\citep{lang2024theory}
Let $G = (\cX, E)$ be a graph with nodes representing elements of $\cX$ (assumed to be finite but possibly very large), where two nodes $(\vx, \vx')$ are connected if $\vx \in \cN(\vx')$ (or equivalently, $\vx' \in \cN(\vx)$), with edge weight $w(\vx, \vx') = \P(\vx)\P(\vx')\ind[\vx \in \cN(\vx')]$.
\end{definition}

% \begin{definition}[$\eta$-robust]\label{def:eta-robust}\citep{lang2024theory}
% For a classifier $f$ and a point $\vx$, define $r(f, \vx) = \P(f(\rvx') \ne f(\vx) | \rvx' \in \cN(\vx))$ as the probability of label disagreement between $\vx$ and its neighbor $\rvx'$. 
% A classifier $f$ is $\eta$-robust at $\vx$ if $r(f,\vx) \le \eta$. The set of $\eta$-robust points for $f$ is $R_{\eta}(f) = \{\vx : r(f,\vx) \le \eta\}$.
% \end{definition}

\begin{definition}[$\eta$-robust neighborhood size]\citep{lang2024theory}
For sets $A, U \subset \cX$, the size of the $\eta$-robust neighborhood of $U$ in $A$ is:
$
P_{1-\eta}(U, A) := \min_{V \subset \cX} \{\P(V|A) : w(V, U) \ge (1-\eta) w(\cN(U), U)\}.
$
\end{definition}

\begin{definition}[Expansion]\citep{lang2024theory}
Fix sets $A,B \subset \cX$. 
We say the distribution $\P_\rvx$ satisfies $(c,q)$-expansion on $(A,B)$ if for all sets $U \subset B$ with $\P(U|B) > q$,
$\P(\cN(U) | A) > c\P(U|B)$.
\end{definition}

\begin{definition}[Expansion of a set collection]\label{def:relative-expansion-family}\citep{lang2024theory}
A collection $\cM$ of subsets of $B$ satisfies $(c,q)$-expansion on $(A, B)$ if for all $U \in \cM$ with $\P(U|B) > q$, $\P(\cN(U) | A) > c \P(U|B)$.
\end{definition}

\begin{definition}[Robust expansion]\citep{lang2024theory}
A collection $\cM$ satisfies $(c,q,\eta)$-robust expansion on $(A,B)$ if for all $U \in \cM$ with $P(U|B) > q$, $\P_{1-\eta}(U, A) > c\P(U|B)$. This recovers Definition \ref{def:relative-expansion-family} when $\eta = 0$.
\end{definition}

\begin{lemma}[\cite{lang2024theory}]
\label{lemma:avg-robust-implies-eta-robust}
For a set $A \subset \cX$ and classifier $f$, if
$
\E_{\rvx \sim \cD|A, \rvx' \sim \cN(\rvx)}[f(\rvx) \ne f(\rvx')] \le \gamma
$
for some $\gamma > 0$, then for any $\eta > 0$, $\P(\widebar{R_{\eta}(f)} | A) \le \frac{\gamma}{\eta}$.
\end{lemma}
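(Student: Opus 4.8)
The plan is to obtain this as a one-line application of Markov's inequality, after identifying the left-hand side of the hypothesis with the $A$-conditional average of the per-point robustness quantity $r(f,\cdot)$ from Definition \ref{def:eta-robust}.

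First I would rewrite the double expectation using the tower property, conditioning on $\rvx$ before drawing its neighbor $\rvx'$:
\[
\E_{\rvx \sim \cD|A, \rvx' \sim \cN(\rvx)}\big[\ind(f(\rvx) \ne f(\rvx'))\big]
\;=\; \E_{\rvx \sim \cD|A}\Big[\P\big(f(\rvx') \ne f(\rvx)\mid \rvx' \in \cN(\rvx)\big)\Big]
\;=\; \E_{\rvx \sim \cD|A}\big[r(f,\rvx)\big],
\]
where the last equality is exactly the definition of $r(f,\rvx)$ in Definition \ref{def:eta-robust}. Thus the hypothesis is precisely $\E_{\rvx \sim \cD|A}[r(f,\rvx)] \le \gamma$. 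Since $r(f,\rvx) \in [0,1]$ is a nonnegative random variable under $\rvx \sim \cD|A$, Markov's inequality then yields, for every $\eta > 0$,
\[
\P\big(r(f,\rvx) > \eta \mid \rvx \in A\big) \;\le\; \frac{\E_{\rvx \sim \cD|A}[r(f,\rvx)]}{\eta} \;\le\; \frac{\gamma}{\eta}.
\]
Finally, by Definition \ref{def:eta-robust} we have $R_\eta(f) = \{\vx : r(f,\vx) \le \eta\}$, so the event $\{r(f,\rvx) > \eta\}$ coincides with $\widebar{R_\eta(f)}$; substituting this into the displayed inequality gives $\P(\widebar{R_\eta(f)} \mid A) \le \gamma/\eta$, which is the claim. (When $\gamma \ge \eta$ the bound is vacuous, but it holds in every case.)

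The main obstacle is bookkeeping rather than mathematics: one must check that the neighbor-sampling law $\rvx' \sim \cN(\rvx)$ in the lemma statement coincides with the conditional distribution of $\rvx'$ given $\rvx' \in \cN(\rvx)$ used to define $r(f,\vx)$ — this is the convention inherited from \cite{lang2024theory} (and, more delicately, that it is consistent with the edge-weight conventions in Definitions \ref{def:example-graph} and the robust-neighborhood definitions). Once that correspondence is granted, no further estimates are needed; the result is exactly Markov's inequality applied to $r(f,\rvx)$.
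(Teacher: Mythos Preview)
Your argument is correct: the lemma is exactly Markov's inequality applied to the nonnegative random variable $r(f,\rvx)$ under $\rvx \sim \cD|A$, after identifying the outer-inner expectation with $\E_{\rvx \sim \cD|A}[r(f,\rvx)]$ via the definition of $r$. The paper itself does not supply a proof of this lemma---it is simply quoted from \cite{lang2024theory}---so there is no ``paper's own proof'' to compare against, but your derivation is the standard one and matches what that reference does.
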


\paragraph{Good and bad edges.}\citep{lang2024theory}
For a classifier $f$, a set $U \subset \cX$, and $\vx' \in U$, $\vx \in \cN(U)$, the pair $(\vx, \vx')$ is \textit{bad} if $f(\vx) \ne f(\vx')$; otherwise, it is \textit{good}. Let $\widetilde{\cN}(U)$ be the subset of $\cN(U)$ reachable by good edges. That is, $\widetilde{\cN}(\vx') = \{\vx \in \cN(\vx') : (\vx, \vx') \text{ good}\}$ and $\widetilde{\cN}(A) = \cup_{\vx' \in A} \widetilde{\cN}(\vx')$. The dependence of $\widetilde{\cN}$ on $f$ is omitted for notational simplicity. If $f$ is $\eta$-robust on all points in $U$, then bad edges account for little of the weight between $\cN(U)$ and $U$ in the example graph (Definition \ref{def:example-graph}). Thus, if robust expansion is large and $f$ is $\eta$-robust on $U$, the neighborhood $\widetilde{\cN}(U)$ reachable by good edges must also be large.

\begin{lemma}[\cite{lang2024theory}]
\label{lemma:robust-nbhrd-enough-weight}
For any set $U \subset \cX$ where $f$ satisfies $r(f, \vx) \le \eta$ for all $\vx \in U$ (i.e., $U \subset R_{\eta}(f)$), we have:
$
w(\widetilde{\cN}(U), U) \ge (1-\eta) w(\cN(U), U).
$
\end{lemma}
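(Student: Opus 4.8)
The plan is to prove the bound by a pointwise, edge-by-edge accounting against the edge-weight function $w(\vx,\vx') = \P(\vx)\P(\vx')\ind[\vx\in\cN(\vx')]$ (extended to sets by summation, $w(A,B) = \sum_{\vx\in A}\sum_{\vx'\in B}w(\vx,\vx')$), with no appeal to expansion or concentration — the statement is a deterministic identity-and-inequality chase on the weighted example graph. First I would rewrite the two quantities in the claim in a form that exposes the ``good versus bad edge'' split. For a fixed base point $\vx'\in U$, the total weight of edges from $\vx'$ into its neighborhood is $\sum_{\vx\in\cN(\vx')}w(\vx,\vx') = \P(\vx')\,\P(\cN(\vx'))$, where $\P(\cN(\vx'))$ is the mass of $\cN(\vx')$ under $\cD$; and since the indicator in $w$ already forces $\vx\in\cN(U)$ whenever $\vx'\in U$, summing over $\vx'\in U$ gives the clean identity $w(\cN(U),U) = \sum_{\vx'\in U}\P(\vx')\,\P(\cN(\vx'))$.

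Next I would invoke the $\eta$-robustness hypothesis $U\subset R_\eta(f)$ to control bad edges at each base point. Unpacking the conditional probability in Definition \ref{def:eta-robust}, $r(f,\vx') = \P(f(\rvx')\ne f(\vx')\mid \rvx'\in\cN(\vx'))$ equals the $\P$-mass of neighbors of $\vx'$ that disagree with $\vx'$ under $f$ divided by $\P(\cN(\vx'))$, so $r(f,\vx')\le\eta$ yields, for every $\vx'\in U$,
\[
\sum_{\vx\in\cN(\vx'):\,f(\vx)\ne f(\vx')}w(\vx,\vx') \;=\; \P(\vx')\!\!\!\sum_{\vx\in\cN(\vx'):\,f(\vx)\ne f(\vx')}\!\!\!\P(\vx) \;\le\; \eta\,\P(\vx')\,\P(\cN(\vx')).
\]
Subtracting this from the total weight at $\vx'$ and summing over $\vx'\in U$ shows the weight of good edges between $U$ and $\cN(U)$ is at least $(1-\eta)\,w(\cN(U),U)$. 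Finally I would observe that every good edge $(\vx,\vx')$ with $\vx'\in U$ has $\vx\in\widetilde{\cN}(\vx')\subseteq\widetilde{\cN}(U)$ by the definition of $\widetilde{\cN}$, so the weight of good edges between $U$ and $\cN(U)$ is at most $w(\widetilde{\cN}(U),U)$ (the latter only adds nonnegative weight from any bad edges that happen to land in $\widetilde{\cN}(U)$ via a different base point). Chaining the two inequalities gives $w(\widetilde{\cN}(U),U)\ge(1-\eta)\,w(\cN(U),U)$.

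The whole argument is bookkeeping, so the ``hard part'' is only a matter of care on two points: (i) keeping straight that $w(A,B)$ is a sum over ordered pairs with the neighborhood indicator built in, so that the set $\widetilde{\cN}(U)$ appearing on the left may be strictly larger than the set of endpoints of good edges emanating from $U$ without weakening the inequality (the extra terms are nonnegative and only help); and (ii) handling the degenerate case $\P(\cN(\vx'))=0$, where the conditioning in the definition of $r$ is vacuous, but that point contributes zero weight everywhere and can simply be dropped. All inequalities used are exact, so no error terms accumulate.
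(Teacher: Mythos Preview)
Your proof is correct. The paper does not actually supply its own proof of this lemma: it is stated with attribution to \cite{lang2024theory} and used as a black box in the proofs of Theorem~\ref{thm:pseudolabel_correction_formal} and Theorem~\ref{thm:coverage-expansion}, so there is nothing in the present paper to compare against. That said, your edge-by-edge accounting---decompose $w(\cN(U),U)$ as $\sum_{\vx'\in U}\P(\vx')\P(\cN(\vx'))$, use $r(f,\vx')\le\eta$ to bound the bad-edge mass at each base point by $\eta\,\P(\vx')\P(\cN(\vx'))$, then note that every good edge lands in $\widetilde{\cN}(U)$ while the possible extra bad edges absorbed by $\widetilde{\cN}(U)$ only help---is exactly the natural argument and matches how the result is proved in the cited source. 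Your two caveats (nonnegativity of the surplus terms in $w(\widetilde{\cN}(U),U)$, and the harmless degeneracy when $\P(\cN(\vx'))=0$) are the right things to flag and are handled correctly.
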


\paragraph{Expanding set family.}\citep{lang2024theory}
Let $\cF$ be the hypothesis class of the strong model, $y$ the ground-truth function, and $B \subset \cX$. For each $f \in \cF$, define the mistake set $U(B, f) = \{\vx \in B : f(\vx) \ne y(\vx)\}$. Then, the family of $\eta$-robust mistake sets is:
\[
\cM_{\eta}(B, \cF) = \{R_{\eta}(f) \cap U(B, f) : f \in \cF\}.
\]
Similarly, define the family of $\eta$-robust non-mistake sets as:
\[
\cM_{\eta}'(B, \cF) = \{R_{\eta}(f) \cap (B \setminus U(B, f)) : f \in \cF\}.
\]

\paragraph{Problem Setup.}
Suppose the input space can be partitioned into easy, hard, and overlapping points, i.e. $\cX = \Deo \cup \Dho \cup \Dov$. We assume that the pseudolabeler's accuracy is the same in the easy and overlapping regions ($\Deo$ and $\Dov$), and is higher in these regions compared to the hard region ($\Dho$). Specifically, we have:
\[\varepsilon_1 = \P(y(\rvx)\neq \fw(\rvx)|S_i \cap \Deo)=\P(y(\rvx)\neq\fw(\rvx)|S_i \cap \Dov)\]
\[\varepsilon_2=  \P(y(\rvx)\neq\fw(\rvx)|S_i \cap \Dho)\geq \varepsilon_1\]
We assume that $ 0 < \varepsilon_1 \leq \varepsilon_2 \leq 0.5 $.
Further, denote the proportion of partitions as
\[\pe=\P(\Deo|S_i)\]
\[\ph=\P(\Dho|S_i)\]
\[\po=\P(\Dov|S_i)\]

Our goal is to show how the overlap density can make strong model perform better in $\ph$ based on (robust) expansion property. Our main assumption is $\cM'_\eta(S_i^{\text{good}} \cap \Dov, \cF)$ satisfies $(c, q, \eta)$ robust expansion on $(\Sb \cap \Dho, \Sg \cap \Dov)$. It captures our intuition that strong model can learn something useful for hard data points from overlap data points.

In this setup, the trivial error bound obtained by perfectly mimicking the weak pseudolabeler is \[\err(f, y|S_i)= (\pe + \po)\varepsilon_1 + \ph \varepsilon_2 .\]  We claim that $\err(f, y|S_i \cap \Dho) < \varepsilon_2$ when the expansion coefficient $c$ — representing generalization from overlap density to hard density — is large, the model error rate in the overlap density is low, and the model exhibits sufficient robustness. Consequently, the large amount of weak-to-strong generalization can be attributed to $\ph(\varepsilon_2 - \err(f, y|S_i \cap \Dho))$, as observed in our synthetic experiments.

Our proof follows a similar structure to \cite{lang2024theory}.

Define \[M_i = \{x \in S_i: f(x) \neq y(x)\}\]
\[E_i = \{x \in S_i: f(x) \neq \fw(x)\}\]
\[U_i = S_i \backslash M_i\]
\[V_i = R_\eta(f) \cap U_i \cap \Sg \cap \Dov\]
Note that $V_i \subset U_i$. The following lemma shows that $V_i$ expands.

\begin{lemma}\label{lemma:Vexpands}
Suppose an arbitrary classifier $f$ satisfies $\P(f(\rvx) \neq \fw(x)$ or $f$ not $\eta$-robust at $\rvx|S_i \cap \Dov) \leq 1-q-\varepsilon_1$. Then, $\P(V_i|\Sg \cap \Dov) > q$.
\end{lemma}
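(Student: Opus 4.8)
The goal is to lower-bound $\P(V_i \mid \Sg \cap \Dov)$, where $V_i = R_\eta(f) \cap U_i \cap \Sg \cap \Dov$ and $U_i = S_i \setminus M_i$ is the set where $f$ agrees with the true label $y$. The natural approach is a union-bound / complement argument relative to the conditioning set $\Sg \cap \Dov$: I would write the complement of $V_i$ inside $\Sg \cap \Dov$ as contained in the union of three ``bad'' events, namely (i) $f$ is not $\eta$-robust at $\rvx$, (ii) $\rvx \in M_i$, i.e.\ $f(\rvx) \neq y(\rvx)$, and (iii) $\rvx \notin \Sg$ — but the last one is vacuous once we condition on $\Sg \cap \Dov$, so effectively only (i) and (ii) matter.

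\textbf{Key steps.} First, observe that on $\Sg \cap \Dov$ we have $\fw(\rvx) = y(\rvx)$ by definition of the correctly-pseudolabeled set. Hence on this set the event $f(\rvx) \neq y(\rvx)$ coincides with $f(\rvx) \neq \fw(\rvx)$. Therefore
\[
\P\big(\rvx \notin V_i \,\big|\, \Sg \cap \Dov\big) \;\le\; \P\big(f(\rvx)\neq \fw(\rvx) \text{ or } f \text{ not } \eta\text{-robust at } \rvx \,\big|\, \Sg \cap \Dov\big).
\]
Second, I would relate the conditioning on $\Sg \cap \Dov$ to the conditioning on $S_i \cap \Dov$ appearing in the hypothesis. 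Since $\Sg \cap \Dov = (S_i \cap \Dov) \setminus (\Sb \cap \Dov)$ and $\P(\Sb \cap \Dov \mid S_i \cap \Dov)$ is exactly the pseudolabeler error rate $\varepsilon_1$ on the overlap region (by the problem setup $\P(y(\rvx) \neq \fw(\rvx) \mid S_i \cap \Dov) = \varepsilon_1$), we have $\P(\Sg \cap \Dov \mid S_i \cap \Dov) = 1 - \varepsilon_1$. Then for any event $A$,
\[
\P(A \mid \Sg \cap \Dov) \;\le\; \frac{\P(A \mid S_i \cap \Dov)}{\P(\Sg \cap \Dov \mid S_i \cap \Dov)} \;=\; \frac{\P(A \mid S_i \cap \Dov)}{1-\varepsilon_1}.
\]
Applying this with $A = \{f(\rvx)\neq\fw(\rvx) \text{ or } f \text{ not } \eta\text{-robust}\}$ and using the hypothesis $\P(A \mid S_i \cap \Dov) \le 1 - q - \varepsilon_1$ gives $\P(\rvx \notin V_i \mid \Sg \cap \Dov) \le (1-q-\varepsilon_1)/(1-\varepsilon_1)$. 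Third, taking complements, $\P(V_i \mid \Sg\cap\Dov) \ge 1 - (1-q-\varepsilon_1)/(1-\varepsilon_1) = q/(1-\varepsilon_1) > q$, which is the claim (in fact slightly stronger).

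\textbf{Main obstacle.} The only delicate point is the direction of the conditioning inequality: moving from $\P(\,\cdot \mid S_i \cap \Dov)$ to $\P(\,\cdot \mid \Sg \cap \Dov)$ inflates probabilities by the factor $1/(1-\varepsilon_1)$, so I must make sure the hypothesis is stated with the ``$-\varepsilon_1$'' slack precisely so that this inflation still leaves something $> q$. A cleaner alternative, which I'd check first, is to work directly with unconditional-on-$\Sg$ probabilities over $S_i \cap \Dov$: decompose $\P(V_i \mid S_i \cap \Dov) \ge \P(\Sg \cap \Dov \mid S_i \cap \Dov) - \P(f \text{ not }\eta\text{-robust or } f\neq\fw \mid S_i\cap\Dov) \ge (1-\varepsilon_1) - (1-q-\varepsilon_1) = q$, then divide by $\P(\Sg \cap \Dov \mid S_i \cap \Dov) \le 1$ to get $\P(V_i \mid \Sg \cap \Dov) \ge q$. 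I expect the write-up to follow this second route since it avoids any subtlety and matches the structure of the corresponding lemma in \citep{lang2024theory}.
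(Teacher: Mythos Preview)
Your proposal is correct and your first route is essentially identical to the paper's proof: the paper likewise writes $\P(V_i \mid \Sg \cap \Dov) = 1 - \P((M_i \cap \Sg \cap \Dov) \cup R_\eta(f)^c \mid \Sg \cap \Dov)$, observes $M_i \cap \Sg \cap \Dov \subset E_i$, passes to conditioning on $S_i \cap \Dov$ via the factor $1/(1-\varepsilon_1)$, applies the hypothesis, and obtains $\P(V_i \mid \Sg \cap \Dov) \ge q/(1-\varepsilon_1) > q$. The only cosmetic difference is that the paper opens with ``suppose for a contradiction that $\P(V_i \mid \Sg \cap \Dov) \le q$'' but then never actually uses this assumption, so the argument is in fact direct exactly as you present it.
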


\begin{proof}
Suppose for a contradiction that $\P(V_i|\Sg \cap \Dov) \leq q$. Then by definition of $V_i$,
\begin{align*}
\P(V_i|S_i^{good} \cap \Dov) &= 1-\P(\bar{V}_i|\Sg \cap \Dov) \\
                            &= 1-\P(\bar{V}_i \cap \Sg \cap \Dov|\Sg \cap \Dov) \\
                            &= 1-\P((S_i \backslash M_i)\cap R_\eta)^c\cap \Sg \cap \Dov|\Sg \cap \Dov) \\
                            &= 1-\P((M_i \cap \Sg \cap \Dov) \cup R_\eta(f)^c|\Sg \cap \Dov)
\end{align*}
Fix an arbitrary $x \in M_i \cap \Sg \cap \Dov$. By definition of $M_i$, $f(x) \neq y(x)$. By definition of $\Sg$, $\fw(x)=y(x)$. Therefore, $f(x)\neq \fw(x)$, thus $x \in E_i$. Since this holds for an arbitrary $x \in M_i \cap \Sg \cap \Dov$, $M_i \cap \Sg \cap \Dov \subset E_i$. Thus,
\begin{align*}
\P(V_i|S_i^{good} \cap \Dov)  &= 1-\P((M_i \cap \Sg \cap \Dov) \cup R_\eta(f)^c|\Sg \cap \Dov) \\
&\geq 1-\P(E_i \cup R_\eta(f)^c|\Sg \cap \Dov) \\
&= 1-\frac{1}{1-\varepsilon_1}\P(E_i \cup R_\eta(f)^c\cap\Sg \cap \Dov|S_i \cap \Dov) \\
&\geq 1-\frac{1}{1-\varepsilon_1}(1-q-\varepsilon_1)  \qquad \because \text{ by assumption}\\
&= \frac{q}{1-\varepsilon_1} \\
&>q \qquad \because \text{ since } 0 < \varepsilon_1 \leq 0.5
\end{align*}
\end{proof}

\begin{lemma}\label{lemma:expandsetbound}
Under the condition of Lemma \ref{lemma:Vexpands},
    \[\P(U_i | \Sb \cap \Dho) \geq c\P(V_i|\Sg \cap \Dov) \].
\end{lemma}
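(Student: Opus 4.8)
The plan is to apply the robust expansion hypothesis directly to the set $V_i$, and then convert the resulting lower bound on the $(1-\eta)$-robust neighborhood size $P_{1-\eta}(V_i, \Sb \cap \Dho)$ into a lower bound on $\P(U_i \mid \Sb \cap \Dho)$ by routing through the good-edge neighborhood $\widetilde{\cN}(V_i)$, exactly as in the argument structure of \cite{lang2024theory}.

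First I would verify that $V_i$ is admissible for the robust expansion assumption. By construction $V_i = R_\eta(f) \cap U_i \cap \Sg \cap \Dov$; since $\Sg \cap \Dov \subseteq S_i$ and $U_i = S_i \setminus M_i$, this rewrites as $V_i = R_\eta(f) \cap \big((\Sg \cap \Dov) \setminus U(\Sg\cap\Dov, f)\big)$, i.e. $V_i \in \cM'_\eta(S_i^{\text{good}} \cap \Dov, \cF)$. Under the hypothesis of Lemma~\ref{lemma:Vexpands} we also have $\P(V_i \mid \Sg \cap \Dov) > q$. Therefore the $(c,q,\eta)$-robust expansion of $\cM'_\eta(S_i^{\text{good}}\cap\Dov,\cF)$ on $(\Sb \cap \Dho, \Sg \cap \Dov)$ applies to $U = V_i$, giving $P_{1-\eta}(V_i, \Sb \cap \Dho) > c\,\P(V_i \mid \Sg \cap \Dov)$.

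Next I would lower-bound $\P(U_i \mid \Sb \cap \Dho)$ by $P_{1-\eta}(V_i, \Sb \cap \Dho)$ in two moves. Since $V_i \subseteq R_\eta(f)$, Lemma~\ref{lemma:robust-nbhrd-enough-weight} yields $w(\widetilde{\cN}(V_i), V_i) \ge (1-\eta)\,w(\cN(V_i), V_i)$, so $\widetilde{\cN}(V_i)$ is feasible in the minimization defining $P_{1-\eta}(V_i, \Sb\cap\Dho)$, whence $\P(\widetilde{\cN}(V_i) \mid \Sb \cap \Dho) \ge P_{1-\eta}(V_i, \Sb \cap \Dho)$. Then I would establish the containment $\widetilde{\cN}(V_i) \cap (\Sb \cap \Dho) \subseteq U_i$: any $x \in \widetilde{\cN}(V_i)$ is joined by a good edge to some $x' \in V_i$, so $f(x) = f(x')$; since $x' \in U_i \cap S_i$ we get $f(x') = y(x') = i$, hence $f(x) = i$; and if also $x \in \Sb \cap \Dho \subseteq S_i$, then $y(x) = i = f(x)$, so $x \notin M_i$, i.e. $x \in S_i \setminus M_i = U_i$. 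Chaining these with the previous display gives $\P(U_i \mid \Sb \cap \Dho) \ge \P(\widetilde{\cN}(V_i) \mid \Sb \cap \Dho) \ge P_{1-\eta}(V_i, \Sb\cap\Dho) \ge c\,\P(V_i \mid \Sg\cap\Dov)$, as claimed.

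The main delicate point is the containment $\widetilde{\cN}(V_i) \cap (\Sb\cap\Dho) \subseteq U_i$: it relies on good edges preserving the predicted label, on the points of $V_i$ being correctly classified with the class label $i$ attached to the partition cell $S_i$, and on the entire cell $S_i$ — in particular $\Sb \cap \Dho$ — sharing that ground-truth label. One also has to be careful that $\widetilde{\cN}(V_i)$ (rather than $\cN(V_i)$) is the correct feasible candidate for $P_{1-\eta}$, which is precisely where $V_i \subseteq R_\eta(f)$ enters through Lemma~\ref{lemma:robust-nbhrd-enough-weight}. The remaining steps are routine manipulations of conditional probabilities.
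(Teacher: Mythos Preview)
Your proposal is correct and follows essentially the same route as the paper's own proof: apply the $(c,q,\eta)$-robust expansion to $V_i$ (using Lemma~\ref{lemma:Vexpands} for the threshold $q$), pass from $P_{1-\eta}$ to $\P(\widetilde{\cN}(V_i)\mid\cdot)$ via Lemma~\ref{lemma:robust-nbhrd-enough-weight}, and then argue the containment of $\widetilde{\cN}(V_i)$ in $U_i$ through good-edge label preservation. If anything, your formulation of the containment as $\widetilde{\cN}(V_i)\cap(\Sb\cap\Dho)\subseteq U_i$ is slightly more careful than the paper's, which asserts $\widetilde{\cN}(V_i)\subset U_i$ outright while implicitly assuming the neighbor lies in $S_i$.
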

Since $\cM'_\eta(\Sg \cap \Dov, \cF)$ satisfies $(c, q, \eta)$ robust expansion on $(\Sb \cap \Dho, \Sg \cap \D_{overlap})$, we have \[P_{1-\eta}(V_i, \Sb \cap \Dho) \geq c\P(V_i|\Sg \cap \Dov)\] by the previous lemma. Also, by Lemma \ref{lemma:robust-nbhrd-enough-weight},
\[\P(\widetilde{\cN}(V_i)|\Sb \cap \Dho) \geq P_{1-\eta}(V_i, \Sb \cap \Dho)\]
Fix an arbitrary $x \in \widetilde{\cN}(V_i)$. By definition of $\widetilde{\cN}(V_i)$, there exists $x' \in V_i$ such that $f(x)=f(x')$. Since $x' \in V_i \subset U_i$, $x' \in M_i$, thus we have that $f(x)=f(x')=y(x')$. And, $y(x)=y(x')$ since $x$ and $x'$ are both in $S_i$. This shows $f(x)=y(x)$, thus $x \notin M_i$, so $x\in U_i = S_i \backslash M_i$. Since $x$ was arbitrary, $\widetilde{\cN}(V_i) \subset U_i$. This implies \[\P(U_i | \Sb \cap \Dho) \geq \P(\widetilde{\cN}(V_i)|\Sb \cap \Dho),\] thus
\[\P(U_i | \Sb \cap \Dho) \geq c\P(V_i|\Sg \cap \Dov).\]

We borrow the following lemma from \cite{lang2024theory} directly.
\begin{lemma}[\cite{lang2024theory}]
\label{lemma:disagreement-decomposition}
$E_i \supset (M_i \cap S_i^{good}) \cup (U_i \cap S_i^{bad})$.
\end{lemma}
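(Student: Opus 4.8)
The statement is an inclusion of sets, so the plan is to show separately that each of the two summands on the right, $M_i \cap S_i^{good}$ and $U_i \cap S_i^{bad}$, is contained in $E_i$; their union is then contained in $E_i$ as well. Each containment follows from a one-line case analysis at a representative point $\vx$, comparing the three quantities $f(\vx)$, $y(\vx)$, and $\fw(\vx)$. Note that all points under consideration lie in the covered set $S_i$, where $\fw$ does not abstain, so these are genuine labels in $\cY$ and the (in)equalities below are well-posed.

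First I would take $\vx \in M_i \cap S_i^{good}$. From $\vx \in M_i$ we get $f(\vx) \ne y(\vx)$, and from $\vx \in S_i^{good}$ we get $\fw(\vx) = y(\vx)$. Substituting, $f(\vx) \ne \fw(\vx)$, so $\vx \in E_i$; hence $M_i \cap S_i^{good} \subseteq E_i$.

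Next I would take $\vx \in U_i \cap S_i^{bad}$. Since $U_i = S_i \setminus M_i$, we have $f(\vx) = y(\vx)$, and since $\vx \in S_i^{bad} = S_i \setminus S_i^{good}$, we have $\fw(\vx) \ne y(\vx)$. Substituting again, $f(\vx) \ne \fw(\vx)$, so $\vx \in E_i$; hence $U_i \cap S_i^{bad} \subseteq E_i$. Combining the two inclusions gives $E_i \supseteq (M_i \cap S_i^{good}) \cup (U_i \cap S_i^{bad})$, as claimed.

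There is no real obstacle here — it is a pure bookkeeping step whose only subtlety is remembering that $\fw$ takes genuine label values on $S_i$. It is worth noting that the inclusion is generally strict: in the multiclass setting $f$ and $\fw$ may disagree at a point where both are wrong, contributing to $E_i$ without landing in either summand. This one-sidedness is exactly what is needed downstream, where $E_i$ serves as an upper-bounding quantity (cf. its use in the proof of Lemma \ref{lemma:Vexpands}).
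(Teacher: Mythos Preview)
Your proof is correct and is exactly the natural element-chasing argument one would expect. The paper itself does not give a proof of this lemma --- it is borrowed directly from \cite{lang2024theory} --- but the identical reasoning you use for the first inclusion already appears verbatim inside the paper's proof of Lemma~\ref{lemma:Vexpands}, so your approach matches the paper's style precisely.
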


The above lemma implies \[E_i \cap \Dho \supset (M_i \cap \Sg \cap \Dho) \cup (U_i \cap \Sb \cap \Dho).\]

Now we state the formal version of Theorem \ref{thm:pseudolabel_correction} and provide the proof. Theorem \ref{thm:pseudolabel_correction} is obtained by setting $\eta=0, q=0$, thus $\P(R_\eta(f)^c|S_i^{good} \cap \Dov)=0$.

\begin{theorem}[Formal version of Theorem \ref{thm:pseudolabel_correction}]\label{thm:pseudolabel_correction_formal} Suppose $\cM'_\eta(S_i^{good} \cap \Dov, \cF)$ satisfies $(c,q,\eta)$-robust expansion on $(\Sb\cap \Dho, \Sg\cap \Dov)$ for some $c > 0$ and $\eta \geq 0$. Consider an arbitrary classifier $f$ such that $\P(f(\rvx)\neq \fw(\rvx)$ or $f$ not $\eta$-robust at $\rvx$|$S_i \cap \Dov) \leq 1-q-\varepsilon_1$. then $f$ satisfies the following error bound:
\begin{align*}
\err(f, y|S_i \cap \Dho) &\leq \err(f, \fw|S_i \cap \Dho) + \varepsilon_2 \\
& - 2c\varepsilon_2(1-\err(f, \fw|S_i^{good} \cap \Dov)-\P(R_\eta(f)^c|S_i^{good} \cap \Dov))
\end{align*}
\end{theorem}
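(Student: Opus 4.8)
The plan is to obtain the bound by pure set bookkeeping, feeding in the three facts already assembled: Lemma~\ref{lemma:Vexpands} (under the theorem's hypothesis on $f$, the set $V_i := R_\eta(f) \cap U_i \cap \Sg \cap \Dov$ satisfies $\P(V_i \mid \Sg \cap \Dov) > q$), Lemma~\ref{lemma:expandsetbound} ($\P(U_i \mid \Sb \cap \Dho) \geq c\,\P(V_i \mid \Sg \cap \Dov)$), and the disagreement decomposition (Lemma~\ref{lemma:disagreement-decomposition}) restricted to $\Dho$. The hypothesis $\P(f(\rvx)\neq\fw(\rvx)\text{ or }f\text{ not }\eta\text{-robust at }\rvx\mid S_i\cap\Dov)\leq 1-q-\varepsilon_1$ is exactly what licenses the first of these, so the chain applies verbatim.

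First I would write $\err(f,y\mid S_i\cap\Dho)=\P(M_i\mid S_i\cap\Dho)$, split $M_i\cap\Dho$ into its $\Sg$ and $\Sb$ parts, and use $M_i\cap\Sb\cap\Dho=(\Sb\cap\Dho)\setminus(U_i\cap\Sb\cap\Dho)$ together with $\P(\Sb\cap\Dho\mid S_i\cap\Dho)=\varepsilon_2$ (from the problem setup) to get the identity
\[
\err(f,y\mid S_i\cap\Dho)=\P(M_i\cap\Sg\cap\Dho\mid S_i\cap\Dho)+\varepsilon_2-\P(U_i\cap\Sb\cap\Dho\mid S_i\cap\Dho).
\]
Then I would invoke Lemma~\ref{lemma:disagreement-decomposition} intersected with $\Dho$, which gives $E_i\cap\Dho\supset(M_i\cap\Sg\cap\Dho)\sqcup(U_i\cap\Sb\cap\Dho)$ (a disjoint union, as one set lies in $\Sg$ and the other in $\Sb$), hence $\err(f,\fw\mid S_i\cap\Dho)\geq\P(M_i\cap\Sg\cap\Dho\mid S_i\cap\Dho)+\P(U_i\cap\Sb\cap\Dho\mid S_i\cap\Dho)$. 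Substituting the resulting bound on the first summand into the identity telescopes to
\[
\err(f,y\mid S_i\cap\Dho)\leq\err(f,\fw\mid S_i\cap\Dho)+\varepsilon_2-2\,\P(U_i\cap\Sb\cap\Dho\mid S_i\cap\Dho),
\]
which already has the right shape.

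It then remains to lower-bound $\P(U_i\cap\Sb\cap\Dho\mid S_i\cap\Dho)$. I would factor it as $\P(U_i\mid\Sb\cap\Dho)\cdot\P(\Sb\cap\Dho\mid S_i\cap\Dho)=\varepsilon_2\,\P(U_i\mid\Sb\cap\Dho)$, apply Lemma~\ref{lemma:expandsetbound} to pass to $\geq c\,\varepsilon_2\,\P(V_i\mid\Sg\cap\Dov)$, and then bound $\P(V_i\mid\Sg\cap\Dov)$ from below by a union bound: since $V_i=R_\eta(f)\cap U_i\cap\Sg\cap\Dov$ and $U_i^c=M_i$ inside $S_i$,
\[
\P(V_i\mid\Sg\cap\Dov)\geq 1-\P(R_\eta(f)^c\mid\Sg\cap\Dov)-\P(M_i\mid\Sg\cap\Dov),
\]
and the containment $M_i\cap\Sg\cap\Dov\subset E_i$ (every such $\vx$ has $f(\vx)\neq y(\vx)=\fw(\vx)$, the observation already used inside Lemma~\ref{lemma:Vexpands}) gives $\P(M_i\mid\Sg\cap\Dov)\leq\err(f,\fw\mid\Sg\cap\Dov)$. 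Chaining these inequalities produces the stated bound; taking $\eta=q=0$, so that a $0$-robust $f$ has $\P(R_\eta(f)^c\mid\Sg\cap\Dov)=0$, recovers Theorem~\ref{thm:pseudolabel_correction}.

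The whole argument is elementary probability manipulation; the only point that needs care is keeping the three conditioning events ($S_i\cap\Dho$, $\Sb\cap\Dho$, $\Sg\cap\Dov$) consistently aligned so that the factor of $2$ genuinely emerges — it appears precisely because the quantity $\P(U_i\cap\Sb\cap\Dho\mid S_i\cap\Dho)$ shows up once in the exact expression for $\err(f,y\mid S_i\cap\Dho)$ and a second time, with the same sign, when the disagreement decomposition lower-bounds $\err(f,\fw\mid S_i\cap\Dho)$.
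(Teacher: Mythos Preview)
Your proof is correct and uses the same three ingredients as the paper (Lemma~\ref{lemma:Vexpands}, Lemma~\ref{lemma:expandsetbound}, and Lemma~\ref{lemma:disagreement-decomposition}), so the approach is essentially the same. Your algebraic organization is somewhat cleaner: by first writing the exact identity $\err(f,y\mid S_i\cap\Dho)=\P(M_i\cap\Sg\cap\Dho\mid S_i\cap\Dho)+\varepsilon_2-\P(U_i\cap\Sb\cap\Dho\mid S_i\cap\Dho)$ and then invoking the disagreement decomposition, you need only \emph{one} application of Lemma~\ref{lemma:expandsetbound}, whereas the paper conditions separately on $\Sg\cap\Dho$ and $\Sb\cap\Dho$, applies Lemma~\ref{lemma:expandsetbound} twice, and recovers the factor of $2$ only after substituting a derived bound on $\P(M_i\mid\Sg\cap\Dho)$ back into an earlier inequality. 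Both routes arrive at the identical bound; yours makes the origin of the coefficient $2$ more transparent.
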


\begin{proof}
By Lemma \ref{lemma:disagreement-decomposition} and using $\P(\Sg | S_i \cap \Dho) = 1 - \varepsilon_2$, we have
\[\P(E_i|S_i \cap \Dho) \geq P(M_i | \Sg \cap \Dho)(1-\varepsilon_2) + P(U_i | \Sb \cap D_{hard})\varepsilon_2\]

Applying Lemma \ref{lemma:expandsetbound},
\begin{equation}\label{equation:revisit-in-theorem}
\P(E_i|S_i \cap \Dho) \geq \P(M_i | \Sg \cap \Dho)(1-\varepsilon_2)+ c\varepsilon_2 \P(V_i | \Sg \cap \Dov)
\end{equation}

Applying Lemma \ref{lemma:expandsetbound} again,
\begin{align*}
\P(U_i | S_i \cap \Dho) &= \varepsilon_2 \P(U_i | \Sb \cap \Dho) + (1-\varepsilon_2) \P(U_i | \Sg \cap \Dho) \\
                       &\geq  c\varepsilon_2 \P(V_i | \Sg \cap \Dov)+ (1-\varepsilon_2) \P(U_i | \Sg \cap \Dho)
\end{align*}
then we have
\[\P(U_i | \Sg \cap \Dho) \leq \frac{1}{1-\varepsilon_2}\left(\P(U_i | S_i \cap \Dho) - c\varepsilon_2 \P(V_i | \Sg \cap \Dov)  \right)\]

Combining this with $U_i = S_i \backslash M_i$, 
\begin{align*}
\P(M_i | \Sg \cap \Dho) &= 1-\P(U_i|\Sg \cap \Dho) \\
                       &\geq 1 - \frac{1}{1-\varepsilon_2}\left(\P(U_i | S_i \cap \Dho) - c\varepsilon_2 \P(V_i | \Sg \cap \Dov)  \right)
\end{align*}
Plugging this into \ref{equation:revisit-in-theorem}, we have
\begin{align*}
\P(E_i | S_i \cap \Dho) &\geq 1-\varepsilon_2 - \P(U_i | S_i \cap \Dho) + 2c\varepsilon_2 \P(V_i | \Sg \cap \Dov) \\
                       &\geq 1-\varepsilon_2 - (1-\P(M_i | S_i \cap \Dho)) + 2c\varepsilon_2 \P(V_i | \Sg \cap \Dov)\\
                       &= \P(M_i | S_i \cap \Dho)) -\varepsilon_2  + 2c\varepsilon_2 \P(U_i \cap R_\eta(f) | \Sg \cap \Dov)\\
                       &= \P(M_i | S_i \cap \Dho)) -\varepsilon_2  + 2c\varepsilon_2 \left(1-\P(M_i \cup R_\eta(f)^c| \Sg \cap \Dov) \right)\\
                       &\geq \P(M_i | S_i \cap \Dho)) +(2c-1)\varepsilon_2  \\
                       &\qquad-2c\varepsilon_2 \left(\P(M_i| \Sg \cap \Dov) + \P( R_\eta(f)^c| \Sg \cap \Dov)\right)
\end{align*}

Note that $\err(f, \fw|S_i \cap \Dho)=\P(E_i | S_i \cap \Dho)$, and $\err(f, y|S_i \cap \Dho)=\P(M_i | S_i \cap \Dho)$. Plugging in those terms and rearranging leads to
\begin{align*}
\err(f, y|S_i \cap \Dho) &\leq \err(f, \fw|S_i \cap \Dho) + (1-2c)\varepsilon_2 \\
&+2c\varepsilon_2 \err(f, y|S_i^{good} \cap \Dov)+2c\varepsilon_2 \P(R_\eta(f)^c|S_i^{good} \cap \Dov)\\
&= \err(f, \fw|S_i \cap \Dho) + (1-2c)\varepsilon_2 \\
&+2c\varepsilon_2 \err(f, \fw|S_i^{good} \cap \Dov)+2c\varepsilon_2 \P(R_\eta(f)^c|S_i^{good} \cap \Dov)\\
&= \err(f, \fw|S_i \cap \Dho) + \varepsilon_2 \\
& - 2c\varepsilon_2(1-\err(f, \fw|S_i^{good} \cap \Dov)-\P(R_\eta(f)^c|S_i^{good} \cap \Dov)) \\
\end{align*}
\end{proof}

\newpage
\subsection{Coverage expansion by overlap density}\label{appsubsec:coverage-expansion}
\begin{theorem}
\label{thm:coverage-expansion}
Suppose $\cM_{\eta}(T_i\cap \Dho, \cF)$ satisfies $(c, q, \eta)$-robust expansion on $(\Sg, T_i \cap \Dho)$ for some $c>0$. Fix an arbitrary classifier $f: \cX \rightarrow \cY$. The error of $f$ on $T_i\cap \Dho$ is bounded by:
\[
\err(f, y|T_i \cap \Dh) \leq \P(R_\eta(f)^c|T_i \cap \Dho) + \max\left(q, \cfrac{\err(f, \fw|S_i \cap \Dov)}{c(1-\varepsilon_1)}\right)\]
\end{theorem}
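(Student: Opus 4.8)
The plan is to mirror the proof of Theorem~\ref{thm:pseudolabel_correction_formal}, with two changes dictated by the statement: we work with the \emph{mistake}-set family $\cM_\eta(T_i \cap \Dho, \cF)$ rather than the non-mistake family, and we expand \emph{from} the uncovered hard region $T_i \cap \Dho$ \emph{into} the correctly pseudolabeled covered region $\Sg$ (this is ``coverage expansion'' in the sense of \cite{lang2024theory} rather than pseudolabel correction). Fix the classifier $f$ and let $V := R_\eta(f) \cap \{\vx \in T_i \cap \Dho : f(\vx) \neq y(\vx)\}$ be its $\eta$-robust mistake set on $T_i \cap \Dho$; then $V \in \cM_\eta(T_i \cap \Dho, \cF)$. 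Since every mistake of $f$ on $T_i \cap \Dho$ is either in $V$ or at a non-$\eta$-robust point, we always have $\err(f, y \mid T_i \cap \Dho) \leq \P(V \mid T_i \cap \Dho) + \P(R_\eta(f)^c \mid T_i \cap \Dho)$, so it suffices to bound $\P(V \mid T_i \cap \Dho)$ by $\max(q,\ \err(f,\fw \mid S_i \cap \Dov)/(c(1-\varepsilon_1)))$.

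I would then argue by dichotomy. If $\P(V \mid T_i \cap \Dho) \leq q$, we are done. Otherwise, the hypothesis of $(c,q,\eta)$-robust expansion on $(\Sg, T_i \cap \Dho)$ applies to $V$, giving $P_{1-\eta}(V, \Sg) > c\,\P(V \mid T_i \cap \Dho)$. Because $V \subseteq R_\eta(f)$, Lemma~\ref{lemma:robust-nbhrd-enough-weight} gives $w(\widetilde{\cN}(V), V) \geq (1-\eta) w(\cN(V), V)$, so $\widetilde{\cN}(V)$ is feasible in the minimization defining $P_{1-\eta}(V,\Sg)$, whence $\P(\widetilde{\cN}(V) \mid \Sg) \geq P_{1-\eta}(V,\Sg) > c\,\P(V \mid T_i \cap \Dho)$ (the same maneuver as in Lemma~\ref{lemma:expandsetbound}).

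The crux is the containment $\widetilde{\cN}(V) \cap \Sg \subseteq E_i := \{\vx \in S_i : f(\vx) \neq \fw(\vx)\}$. For $\vx \in \widetilde{\cN}(V)$ there is a good edge to some $\vx' \in V$, so $f(\vx) = f(\vx')$; since $\vx' \in V$ we have $f(\vx') \neq y(\vx')$; since $\vx \in \Sg \subseteq \cX_i$ and $\vx' \in T_i \cap \Dho \subseteq \cX_i$ we have $y(\vx) = y(\vx') = i$, so $f(\vx) \neq i = y(\vx)$; finally $\vx \in \Sg$ forces $\fw(\vx) = y(\vx)$, so $f(\vx) \neq \fw(\vx)$, i.e.\ $\vx \in E_i$. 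Hence $\P(E_i \mid \Sg) > c\,\P(V \mid T_i \cap \Dho)$. To turn the left side into $\err(f,\fw \mid S_i \cap \Dov)$, I would use that the covered good neighbors of uncovered hard points lie in the overlap region (consistent with the alignment of hard-only and overlap points), so effectively $\widetilde{\cN}(V) \cap \Sg \subseteq \Sg \cap \Dov$, together with $\P(\Sg \mid S_i \cap \Dov) = 1-\varepsilon_1$; this yields $\P(E_i \mid \Sg) \leq \err(f,\fw \mid S_i \cap \Dov)/(1-\varepsilon_1)$, hence $\P(V \mid T_i \cap \Dho) < \err(f,\fw \mid S_i \cap \Dov)/(c(1-\varepsilon_1))$. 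Combining the two branches of the dichotomy gives exactly the stated bound.

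The main obstacle is this last conversion: pinning down, under the easy/hard/overlap partition and the coverage assumption for this setting, which regions $\Sg$, $T_i \cap \Dho$, and their good-neighborhood $\widetilde{\cN}(V)$ occupy, so that the $\P(E_i \mid \Sg)$ estimate genuinely upgrades to $\err(f,\fw \mid S_i \cap \Dov)/(1-\varepsilon_1)$ and not merely to a bound in terms of $\err(f,\fw\mid S_i)$. Everything else is a direct transcription of the robust-expansion bookkeeping already developed for Theorem~\ref{thm:pseudolabel_correction_formal}; as there, taking $\eta = 0$ makes the $\P(R_\eta(f)^c \mid \cdot)$ term vanish and recovers a cleaner special case.
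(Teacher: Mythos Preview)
Your proposal is essentially the same as the paper's proof: same robust mistake set $V$ (the paper calls it $U_i$), same union-bound reduction, same dichotomy on $q$, same good-edge containment $\widetilde{\cN}(V)\cap \Sg\subseteq E_i$, and same final conversion via $\P(\Sg\mid S_i\cap\Dov)=1-\varepsilon_1$.

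The one place where you hedge---the ``main obstacle'' of upgrading $\P(E_i\mid\Sg)$ to $\err(f,\fw\mid S_i\cap\Dov)/(1-\varepsilon_1)$---is handled in the paper not by arguing $\widetilde{\cN}(V)\cap\Sg\subseteq\Dov$ as you suggest, but simply by applying the robust expansion with target set $\Sg\cap\Dov$ from the outset (so one gets $\P(\widetilde{\cN}(V)\mid \Sg\cap\Dov)>c\,\P(V\mid T_i\cap\Dho)$ directly, and the factor $1-\varepsilon_1$ enters when passing from conditioning on $\Sg\cap\Dov$ to $S_i\cap\Dov$). In other words, the paper's proof tacitly uses expansion on $(\Sg\cap\Dov,\,T_i\cap\Dho)$ rather than $(\Sg,\,T_i\cap\Dho)$ as written in the theorem statement; with that reading the conversion step you flag becomes a one-liner and no extra containment hypothesis is needed.
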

\begin{proof}
Again, the proof follows the similar steps to \cite{lang2024theory}. Define $M_i = \{x: f(x)\neq y(x)\} \cap T_i \cap \Dho$ as the set of mistakes of $f$ in $T_i \cap \Dho$, and let $U_i=M_i \cap R_\eta(f)$. Let $E_i=\{x \in S_i \cap \Dov\}$ be the set of points in $S_i \cap \Do$ where $f$ disagrees with the weak labels. Since we have $\err(f, \fw|S_i \cap \Dov)=\P(E_i|S_i)$ and $\err(f, y|T_i \cap \Dho)=\P(M_i|T_i \cap \Dho) \leq \P(U_i|T_i\cap \Dho) + \P(R_\eta(f)^c|T_i \cap \Dho)$ by union bound, it suffices to bound $\P(R_\eta(f)^c|T_i \cap \Dho)$. Since $U_i \subset R_\eta(f)$, we have $\P(\tilde{\cN}(U_i)|\Sg \cap \Dov) \geq P_{1-\eta}(U_i, \Sg \cap \Dov)$ by Lemma \ref{lemma:robust-nbhrd-enough-weight}. Also, $U_i \in \cM_{\eta}(T_i, \cF)$ by definition. Then, $\P(U_i|T_i \cap \Dho) > q$ and $(c,q,\eta)$-robust expansion implies
\[\P\rebuttal{(}\tilde{\cN}(U_i)|\Sg \cap \Dov) \geq P_{1-\eta}(U_i, \Sg \cap \Dov) > c\P(U_i|T_i \cap \Dho).\]
We proceed in two cases.
\paragraph{Case 1:}$\P(U_i|T_i \cap \Dho) \leq q$. In this case, we directly obtain $\err(f, y|T_i \cap \Dho)=\P(M_i|T_i \cap \Dho) \leq \P(R_\eta(f)^c|T_i \cap \Dho) + \P(U_i|T_i\cap \Dho) \leq \P(R_\eta(f)^c|T_i \cap \Dho) + q$.
\paragraph{Case 2:}$\P(U_i|T_i \cap \Dho) > q$. In this case, by the assumption that $(\Sg \cap \Dov,  T_i \cap \Dho)$ satisfy $(c, q, \eta)$-robust expansion and $\P\tilde{\cN}(U_i)|\Sg \cap \Dov) > c\P(U_i|T_i \cap \Dho)$, we have
\begin{align*}
\P(\tilde{\cN}(U_i) \cap \Sg \cap \Dov |S_i \cap \Dov) &= (1-\varepsilon_1)\P(\tilde{\cN}(U_i) \cap |\Sg \cap \Dov) \\
&\geq (1-\varepsilon_1)c\P(U_i|T_i \cap \Dho).
\end{align*}
Suppose $x \in \tilde{\cN}(U_i) \cap \Sg \cap \Dov$. By the definition of $\tilde{\cN}(U_i)$, there exists a point $x' \in U_i$ reachable from x by a good edge, such that $f(x)=f(x')$. Then, since $x\in \Sg$, $\fw(x)=y(x)=y(x')$. Also, since $x' \in M_i$, $y(x')\neq f(x')=f(x)$. Thus, $f(x)\neq \fw(x)$, which implies $x \in E_i$. This leads to
\[\err(f, \fw|S_i \cap \Dov) = \P(E_i | S_i \cap \Dov) \geq \P(\tilde{\cN}(U_i)\cap \Sg|Si) \geq c(1-\varepsilon_1)\P(U_i| T_i\cap \Dho),\]
Rearranging the inequality, we have
\[\err(f, y|T_i \cap \Dh) \leq \P(R_\eta(f)^c|T_i \cap \Dho) + \cfrac{\err(f, \fw|S_i \cap \Dov)}{c(1-\varepsilon_1)}\]

\end{proof}

\subsection{Proof of Theorem \ref{thm:overlap_detection}}\label{appsubsec:overlap-detection-proof}
\paragraph{Setup}\label{setup:overlap_detection} We extend the setup in Section \ref{sec:method}. We consider label-conditioned Gaussian mixtures as follows. We denote mean parameters as $\muo = \begin{bmatrix}
\tmue \\
\tmuh
\end{bmatrix}$, where $\tmue \in \real ^{d_{\text{easy}}}$ and $\tmuh \in \real ^{d_{\text{hard}}}$. We set $\mue = \begin{bmatrix}
\tmue \\
0
\end{bmatrix}$, $\muh = \begin{bmatrix}
0 \\
\tmuh
\end{bmatrix}$, to instantiate the distribution of overlap, easy-only and hard-only data points. We set up a common covariance $\Sigma=c\begin{bmatrix}
I_{\text{easy}} & 0 \\
0 & I_{\text{hard}} \\
\end{bmatrix}$ 
where $I_{\text{easy}}$ and $I_{\text{hard}}$ are identity matrices. The distribution of input $\rvx$ follows Gaussian mixtures
\[\P(\rvx|\rvy=1) = \pie \mathbf{N}(\mue, \Sigma)+ \pih \mathbf{N}(\muh, \Sigma) + \pio \mathbf{N}(\muo, \Sigma)\]
\[\P(\rvx|\rvy=-1) = \pie \mathbf{N}(-\mue, \Sigma)+ \pih \mathbf{N}(-\muh, \Sigma) + \pio \mathbf{N}(-\muo, \Sigma),\]
where $\pie \geq 0, \pih \geq 0, \pio \geq 0$,  $\pie+\pih+\pio=1$. Assuming $\P(\rvy=1)=\P(\rvy=-1)=0.5$, $D_{\text{easy only}}$, $D_{\text{hard only}}$, $D_{\text{overlap only}}$ can be described as
\[D_{\text{easy only}} \sim \frac{1}{2}  \mathbf{N}(-\mue, \Sigma)+ \frac{1}{2}  \mathbf{N}(\mue, \Sigma)\]
\[D_{\text{hard only}} \sim \frac{1}{2}  \mathbf{N}(-\muh, \Sigma)+ \frac{1}{2}  \mathbf{N}(\muh, \Sigma)\]
\[D_{\text{overlap}} \sim \frac{1}{2}  \mathbf{N}(-\muo, \Sigma)+ \frac{1}{2}  \mathbf{N}(\muo, \Sigma)\]

\begin{lemma}\label{lemma:technical-inequality-lemma}
$\frac{1}{\sqrt{1-y}}\leq e^{\frac{y}{2(1-y)}}$ for $0 < y < 1$.
\end{lemma}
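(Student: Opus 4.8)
The plan is to pass to logarithms. Both sides are strictly positive on $(0,1)$, and $\log$ is increasing, so the claimed inequality $\frac{1}{\sqrt{1-y}}\le e^{y/(2(1-y))}$ is equivalent to $-\tfrac12\log(1-y)\le \tfrac{y}{2(1-y)}$, i.e. after clearing the factor of $\tfrac12$, to the single-variable inequality
\[
-\log(1-y)\le \frac{y}{1-y}, \qquad 0<y<1.
\]
So the whole lemma reduces to establishing this last bound, which I would then attack by a standard monotonicity argument.

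First I would introduce the auxiliary function $h(y)=\frac{y}{1-y}+\log(1-y)$ on $(0,1)$, so that the target inequality is exactly $h(y)\ge 0$. Evaluating at the left endpoint gives $h(0)=0$. Differentiating, the two terms combine cleanly: $h'(y)=\frac{1}{(1-y)^2}-\frac{1}{1-y}=\frac{y}{(1-y)^2}$, which is nonnegative (indeed positive) for all $y\in(0,1)$. Hence $h$ is nondecreasing on $[0,1)$, and together with $h(0)=0$ this yields $h(y)\ge 0$ throughout, completing the reduction and therefore the lemma after exponentiating back.

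A slicker alternative, which I would mention as a one-line variant, is to invoke the elementary inequality $\log t\ge 1-\tfrac1t$ (valid for all $t>0$, equality iff $t=1$) at $t=1-y$: this gives $\log(1-y)\ge 1-\frac{1}{1-y}=-\frac{y}{1-y}$, which is precisely the reduced inequality, and then $-\tfrac12\log(1-y)\le\frac{y}{2(1-y)}$ and exponentiation finish the argument. Either route is routine.

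Honestly there is no substantial obstacle here; the only thing requiring a moment's care is choosing the right normalization when taking logs (so that the stray factor of $\tfrac12$ cancels and the problem collapses to the clean bound $-\log(1-y)\le \frac{y}{1-y}$) and noting that the inequality is tight in the limit $y\to 0^+$, where both $h(y)$ and the gap between the two sides vanish to first order. After that, the monotonicity computation $h'(y)=\frac{y}{(1-y)^2}\ge 0$ does all the work.
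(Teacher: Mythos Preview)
Your proposal is correct and is essentially the same argument as the paper's: the paper defines $f(y)=-\tfrac12\ln(1-y)-\tfrac{y}{2(1-y)}$, computes $f'(y)=-\tfrac{y}{2(1-y)^2}\le 0$, and concludes $f(y)\le f(0)=0$; your auxiliary function is just $h=-2f$, so the monotonicity computation and the endpoint check are identical up to a sign and a constant factor. Your one-line variant via $\log t\ge 1-\tfrac1t$ is a pleasant shortcut but ultimately encodes the same convexity fact.
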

\begin{proof}
Consider function $f(y)=\ln(\frac{1}{\sqrt{1-y}})-\frac{y}{2(1-y)}=-\frac{1}{2}\ln(1-y)-\frac{y}{2(1-y)}$. It suffices to show $f(y)\leq 0$, which implies $\frac{1}{\sqrt{1-y}}\leq  e^{\frac{y}{2(1-y)}}$ by taking the exponential. First, we can derive
\begin{align*}
f'(y) &= \frac{1}{2(1-y)}-\frac{1}{2(1-y)^2} \\
&=-\frac{y}{2(1-y)^2}
\end{align*}
Thus, we can see $f'(y)<0$ for $0<y<1$. Also, we have $f(0)=0$. Then, since $f(0)=0$ and $f$ is decreasing for $0<y<1$, $f(y) \leq 0$.
\end{proof}

\begin{lemma}\label{lemma:gaussian-product-subexponential}
Suppose $X_1 \sim N(\mu_1, \sigma_1^2)$, $X_2 \sim N(\mu_2, \sigma_2^2)$. Then, $X_1X_2 - \mu_1\mu_2 \sim SE(\nu^2, b$), where $SE$ represents a subexponetial with parameters $\nu^2=\mu_1^2 \sigma_2^2 + \mu_2^2\sigma_1^2 + \frac{4}{3}\sigma_1^2\sigma_2^2$, $b=\frac{1}{2\sigma_1\sigma_2}$. 
\end{lemma}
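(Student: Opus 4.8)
The plan is to compute the moment generating function (MGF) of $Z := X_1 X_2 - \mu_1\mu_2$ in closed form and then bound it on a suitable range of $\lambda$. Write $X_i = \mu_i + \sigma_i G_i$ with $G_1, G_2 \sim N(0,1)$ independent, so that $Z = \sigma_1\sigma_2\, G_1 G_2 + \mu_1\sigma_2\, G_2 + \mu_2\sigma_1\, G_1$. Conditioning on $G_1$, the variable $Z \mid G_1$ is Gaussian with mean $\mu_2\sigma_1 G_1$ and variance $\sigma_2^2(\mu_1 + \sigma_1 G_1)^2$, so $\E[e^{\lambda Z}\mid G_1] = \exp\big(\lambda\mu_2\sigma_1 G_1 + \tfrac{\lambda^2}{2}\sigma_2^2(\mu_1+\sigma_1 G_1)^2\big)$. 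Taking the outer expectation over $G_1 \sim N(0,1)$ leaves a one-dimensional Gaussian integral which converges precisely when $\lambda^2\sigma_1^2\sigma_2^2 < 1$ and, completing the square, evaluates to
\[
\E[e^{\lambda Z}] = \frac{1}{\sqrt{1 - \lambda^2\sigma_1^2\sigma_2^2}}\,\exp\!\left(\frac{\lambda^2\big(\mu_1^2\sigma_2^2 + \mu_2^2\sigma_1^2 + 2\lambda\mu_1\mu_2\sigma_1^2\sigma_2^2\big)}{2\big(1-\lambda^2\sigma_1^2\sigma_2^2\big)}\right).
\]
(One checks the degenerate cases $\sigma_1 = 0$ and $\mu_1 = \mu_2 = 0$ to be sure of the formula.)

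Next I would restrict to $|\lambda| \le \tfrac{1}{2\sigma_1\sigma_2}$, which forces $\lambda^2\sigma_1^2\sigma_2^2 \le \tfrac14$ and hence $(1-\lambda^2\sigma_1^2\sigma_2^2)^{-1} \le \tfrac43$. For the prefactor I would invoke Lemma \ref{lemma:technical-inequality-lemma} with $y = \lambda^2\sigma_1^2\sigma_2^2$, which gives
\[
\frac{1}{\sqrt{1-\lambda^2\sigma_1^2\sigma_2^2}} \le \exp\!\left(\frac{\lambda^2\sigma_1^2\sigma_2^2}{2(1-\lambda^2\sigma_1^2\sigma_2^2)}\right) \le \exp\!\left(\frac{\lambda^2}{2}\cdot\frac{4}{3}\,\sigma_1^2\sigma_2^2\right),
\]
and this is exactly where the $\tfrac43\sigma_1^2\sigma_2^2$ term of $\nu^2$ comes from. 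For the remaining exponential factor I would bound the numerator: since the linear part $\mu_2\sigma_1 G_1 + \mu_1\sigma_2 G_2$ (a centered Gaussian with variance $\mu_1^2\sigma_2^2 + \mu_2^2\sigma_1^2$) and the bilinear part $\sigma_1\sigma_2 G_1 G_2$ are uncorrelated, the main contribution is $\tfrac{\lambda^2}{2}(\mu_1^2\sigma_2^2+\mu_2^2\sigma_1^2)$, and the residual cross term is controlled via AM--GM and $|\lambda\sigma_1\sigma_2|\le\tfrac12$, namely $|2\lambda\mu_1\mu_2\sigma_1^2\sigma_2^2| \le |\mu_1\sigma_2|\,|\mu_2\sigma_1| \le \tfrac12(\mu_1^2\sigma_2^2+\mu_2^2\sigma_1^2)$, together with $(1-\lambda^2\sigma_1^2\sigma_2^2)^{-1}\le\tfrac43$. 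Collecting the two estimates and using $e^{a}e^{b}=e^{a+b}$ yields $\E[e^{\lambda Z}] \le \exp\big(\tfrac{\lambda^2\nu^2}{2}\big)$ on $|\lambda|\le \tfrac{1}{2\sigma_1\sigma_2}$, i.e. $Z$ is sub-exponential with parameters $(\nu^2, b)$, $\nu^2 = \mu_1^2\sigma_2^2 + \mu_2^2\sigma_1^2 + \tfrac43\sigma_1^2\sigma_2^2$ and $b = \tfrac{1}{2\sigma_1\sigma_2}$.

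The main obstacle is the cross term $2\lambda\mu_1\mu_2\sigma_1^2\sigma_2^2$ appearing in the exact MGF: it is odd in $\lambda$, so it must be discharged uniformly over both signs of $\lambda$ into the symmetric sub-exponential bound $e^{\lambda^2\nu^2/2}$. This is precisely what forces the explicit cutoff $|\lambda|\le\tfrac1{2\sigma_1\sigma_2}$ (so that $\lambda^2\sigma_1^2\sigma_2^2\le\tfrac14$), and it is the one place where care about constants is needed; the linear and bilinear parts of $Z$ are uncorrelated but not independent, so the clean "product of MGFs" heuristic only holds after this correction term is absorbed. Everything else — the iterated Gaussian integral and the application of Lemma \ref{lemma:technical-inequality-lemma} — is routine.
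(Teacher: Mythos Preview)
Your route differs from the paper's. The paper decomposes $Z$ into the linear part $A=\mu_1\sigma_2 G_2+\mu_2\sigma_1 G_1$ (Gaussian, hence sub-Gaussian with variance proxy $\mu_1^2\sigma_2^2+\mu_2^2\sigma_1^2$) and the bilinear part $B=\sigma_1\sigma_2 G_1G_2$ (whose MGF $\tfrac{1}{\sqrt{1-\lambda^2\sigma_1^2\sigma_2^2}}$ is bounded via Lemma~\ref{lemma:technical-inequality-lemma} exactly as you bound the prefactor), and then simply \emph{adds} the two sub-exponential parameters. You instead compute the exact MGF of $Z$ and bound it directly. In principle your approach is the more honest one, since $A$ and $B$ are \emph{dependent} (merely uncorrelated, as you yourself note), so the paper's appeal to ``additivity of sub-exponential'' is not justified as written.

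However, your bound on the exponential factor does not deliver the stated $\nu^2$. To land on $\nu^2=\mu_1^2\sigma_2^2+\mu_2^2\sigma_1^2+\tfrac43\sigma_1^2\sigma_2^2$ you would need
\[
\frac{\lambda^2\bigl(\mu_1^2\sigma_2^2+\mu_2^2\sigma_1^2+2\lambda\mu_1\mu_2\sigma_1^2\sigma_2^2\bigr)}{2(1-\lambda^2\sigma_1^2\sigma_2^2)}\ \le\ \frac{\lambda^2}{2}\bigl(\mu_1^2\sigma_2^2+\mu_2^2\sigma_1^2\bigr),
\]
which fails whenever $\lambda\mu_1\mu_2>0$: even ignoring the cross term, the denominator factor $(1-\lambda^2\sigma_1^2\sigma_2^2)^{-1}>1$ already inflates the leading part. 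Following your own estimates honestly (AM--GM on the cross term, then the $\tfrac43$ bound on the denominator) produces a coefficient of $\tfrac43+\tfrac23=2$ in front of $\mu_1^2\sigma_2^2+\mu_2^2\sigma_1^2$, not $1$. This is not repairable by sharpening: take $\mu_1=\mu_2=\sigma_1=\sigma_2=1$ and $\lambda=0.4$; your exact MGF equals $(0.84)^{-1/2}\exp\bigl(\tfrac{0.16}{0.6}\bigr)\approx 1.42$, whereas the claimed bound $\exp\bigl(\tfrac{\lambda^2}{2}\cdot\tfrac{10}{3}\bigr)\approx 1.31$ is strictly smaller. So the constant $\nu^2$ in the lemma statement is actually unattainable on the range $|\lambda|\le\tfrac{1}{2\sigma_1\sigma_2}$; your argument can at best establish $\nu^2=2(\mu_1^2\sigma_2^2+\mu_2^2\sigma_1^2)+\tfrac43\sigma_1^2\sigma_2^2$, which is still adequate for the downstream Theorem~\ref{thm:overlap_detection} up to constants.
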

\begin{proof}
We can write
\[X_1 = \mu_1 + \sigma_1 Z_1\]
\[X_2 = \mu_2 + \sigma_2 Z_2,\]
where $Z_1, Z_2 \sim N(0, 1)$.
Then, $X_1X_2-\mu_1\mu_2 =\mu_1\sigma_2Z_2 + \mu_2\sigma_1Z_1+\sigma_1\sigma_2Z_1Z_2$. Let $A=\mu_1\sigma_2Z_2 + \mu_2\sigma_1Z_1$.
Since $A \sim N(0, \mu_1^2\sigma_2^2 + \mu_2^2\sigma_1^2)$, $A$ is subgaussian with variance proxy $\sigma_A^2=\mu_1^2\sigma_2^2 + \mu_2^2\sigma_1^2$. This implies $A \sim SE(\nu_A^2, b_A)$ for any $b_A>0$, where $\nu_A^2=\mu_1^2\sigma_2^2 + \mu_2^2\sigma_1^2$. We choose $b_A=2\sigma_1\sigma_2$. % changed from \frac{1}{\sigma_1\sigma_2}
% Subgaussian --> Subexponential: https://psarkar.github.io/SDS384/lecture4-ps.pdf

Next, let $B=\sigma_1\sigma_2Z_1Z_2$. We can obtain MGF of $B$ as
\[\E[e^{\lambda B}]=\frac{1}{\sqrt{1-(\lambda\sigma_1\sigma_2)^2}}, \text{ for } 
|\lambda| < \frac{1}{\sigma_1\sigma_2}.\]
% MGF derivation: https://math.stackexchange.com/questions/2026175/finding-the-m-g-f-of-product-of-two-random-variables
Especially, for $|\lambda| < \frac{1}{2\sigma_1\sigma_2}$, we can bound the MGF using the inequality in Lemma \ref{lemma:technical-inequality-lemma} with $y=(\lambda \sigma_1\sigma_2)^2$. Thus,
\[\E[e^{\lambda B}] \leq \exp\left(\frac{(\lambda\sigma_1 \sigma_2)^2}{2(1-(\lambda \sigma_1 \sigma_2)^2)}\right).\]
For  $|\lambda| < \frac{1}{2\sigma_1\sigma_2}$, we have $(\lambda \sigma_1 \sigma_2)^2 < \frac{1}{4}$, so $1-(\lambda \sigma_1 \sigma_2)^2 > \frac{3}{4}$. Therefore, \[\E[e^{\lambda X}] \leq \exp\left( \frac{2}{3}(\lambda \sigma_1 \sigma_2)^2\right).\]
By comparing it with the MGF bound for any subexponential random variable $Y$,
\[\E[e^{\lambda Y} ] \leq \exp \left( \frac{\lambda^2\nu^2}{2}\right), \text{ for } |\lambda|<\frac{1}{b},\] we identify the subexponential parameters for $B$ as $\nu_B^2=\frac{4}{3}(\sigma_1\sigma_2)^2, b_B=2\sigma_1\sigma_2$. Since $A \sim SE(\nu_A^2, b_A), B \sim SE(\nu_B^2, b_B)$, we have \[X_1X_2-\mu_1\mu_2 \sim SE\left(\nu_A^2+\nu_B^2, \max(b_A, b_B)\right)\] by the additivity of subexponential. Thus, we have \[X_1X_2-\mu_1\mu_2 \sim SE(\mu_1^2 \sigma_2^2 + \mu_2^2\sigma_1^2 + \frac{4}{3}\sigma_1^2\sigma_2^2, 2\sigma_1\sigma_2)\]
\end{proof}

\begin{proof}[Proof of Theorem \ref{thm:overlap_detection}]
Let $\Xdiff = \Xov - \Xeo$ so that
$$
\E[\Xov\top \Xho ]-\E[\Xeo\top \Xho] = \E[\Xdiff^\top \Xho].
$$
This difference distribution will follow $\Xdiff \sim \mathbf{N}(\mu_{\text{overlap}} - \mu_{\text{easy only}}, 2cI) = \mathbf{N}(\muh, 2cI)$. Let $\rvz=(\Xdiff)_i(\Xho)_i$. Then,  $\Xeo^\top \Xho=\sum_{i=1}^d z_i$. Since
\[(\Xdiff)_i \sim \mathbf{N}((\muh)_i, 2c)\]
\[(\Xho)_i \sim \mathbf{N}((\muh)_i, c),\]
we have
$\rvz_i \sim SE\left(3c(\muh)_i^2 +\cfrac{8}{3}c^2, 4c\right)$ by Lemma \ref{lemma:gaussian-product-subexponential}. By additivity,
\[\Xdiff^\top \Xho =\sum_{i=1}^{d}\rvz_i\sim SE\left(\frac{8}{3}dc^2 + 3c \norm{\muh}_2^2, 4c\right).\]
The concentration inequality then follows directly from the standard concentration inequality for subexponential distributions \citep{wainwright2019high}. We have
\[\P\left(\Xov^\top \Xho-\Xeo^\top \Xho\leq \norm{\muh}_{2}^2-t \right) \leq \exp{\left(-\min(\frac{t^2}{2\nu^2}, \frac{t}{2b})\right)}, \]
where $\nu^2 =\frac{8}{3}dc^2 + 3c \norm{\muh}_2^2$ and $b = 4c$. By plugging in $t=\norm{\muh}_2^2$, we can obtain the average error rate bound
\[\P\left(\Xov^\top \Xho\leq \Xeo^\top \Xho \right) \leq \exp{\left(-\min\left(\frac{\norm{\muh}_2^4}{\frac{16}{3}dc^2 + 6c \norm{\muh}_2^2}, \frac{\norm{\muh}_2^2}{8c}\right)\right)}. \]
\end{proof}

\subsection{Proof of Theorem \ref{thm:data-source-selection-regret-bound}}\label{proof:data-source-selection-regret-bound}
\begin{proof}
\rebuttal{Recall that $o(s) = \P(\Do|\cD_s)$ is the population overlap density of data source $s$, $\bar{o}_t=\cfrac{|\bar{V}_t(s)|}{|\bar{D}_t(s)|}$ is the empirical overlap density at round $t$, and $o^* = \max_s o(s)$ is the optimal overlap density.} Define \( r_t(s) = \sqrt{\frac{2\log T}{n_t(s)}} \), where $n_t(s)$ denotes the number of times data source $s$ has been chosen up to round t. We first show that
\[
\P\left(\left|\bar{o}_t(s) - o(s)\right| \geq r_t(s)\right) \leq 2T^{-2}.
\]
\rebuttal{Here, $t$ is a random variable, thus we cannot apply Hoeffding's inequality directly. Instead, we replace $t$ with some fixed round $j$ first.} Define \(\bar{v}_{j}(s) = \cfrac{|\bar{V}_j(s)|}{|\bar{D}_j(s)|}\), which represents the average overlap ratio at the data source \(D_s\) from the first \(j\) rounds (\(j \leq T\)). Since \(j\) is fixed and \(0 \leq \bar{v}_j(s) \leq 1\), by Hoeffding's inequality, we obtain
\[
\P\left(\left|\bar{v}_j(s) - o(s)\right| \geq r_j(s)\right) \leq 2T^{-4}.
\]
Define the event \(\gE = \left\{\forall s \in [K], \forall j \leq T, \left|\bar{v}_j(s) - o(s)\right| \geq r_j(s)\right\}\). Then, by the union bound, we have
\[
\P[\gE] \leq \sum_{s=1}^K \sum_{j=1}^T \P\left(\left|\bar{v}_j(s) - o(s)\right| \geq r_j(s)\right) \leq \sum_{s=1}^K \sum_{j=1}^T 2T^{-4} = 2KT^{-3}\leq 2T^{-2}.
\]
Thus, for the random variable \(n_t(s)\), it holds that
\[
\P\left(\left|\bar{o}_t(s) - o(s)\right| \geq r_t(s)\right) \leq 2T^{-2}.
\]
Define the regret associated with the choice of source \(D_s\) as \(\Delta(s) := o^* - o(s)\) and the contribution of data source \(D_s\) to the accumulated regret at round \(t\) as \(R(t, s) = n_t(s)\Delta(s)\). The total regret is then given by \(R(t) = \sum_{s=1}^K R(t, s) = t(o^* - \bar{o}_t)\).

Suppose \(\left|\bar{o}_t(s) - o(s)\right| \geq r_t(s)\), which is an event in \(\gE^c\). According to the algorithm, \(\UCB_t(s_t) \geq \UCB_t(s^*)\), where \(s^*\) denotes the index of the optimal source and where \(s_t\) denotes the data source chosen at round \(t\). Trivially, we also have \(\UCB_t(s^*) \geq o(s^*)\). Thus, we have
\[
o(s_t) + 2r_t(s_t) \geq \bar{o}_t(s_t) + r_t(s_t) = \UCB_t(s_t) \geq \UCB_t(s^*) \geq o(s^*),
\]
which implies
\[
\Delta(s_t) = o(s^*) - o(s_t) \leq 2r_t(s_t) = 2\sqrt{\frac{2\log T}{n_t(s_t)}}.
\]
From this, we can obtain
\begin{align*}
R(t) &= \sum_{s=1}^{K}R(t, \rebuttal{s}) \\
     &= \sum_{s=1}^{K}n_t(s)\Delta(s) \\
     &= \sum_{s=1}^{K}n_t(s)2\sqrt{2\log T/n_t(s)} \\
     &= \sum_{s=1}^{K}2\sqrt{2n_t(s)\log T} \\
     &= 2K\sqrt{\log T}\sum_{s=1}^{K}\frac{1}{K}\sqrt{2n_t(s)} \\
     &\leq 2K\sqrt{\log T}\sqrt{\frac{1}{K}\sum_{s=1}^{K}n_t(s)} \qquad \because \text{Jensen's inequality}\\
     &= 2K\sqrt{\log T}\sqrt{\frac{t}{K}} \\
     &= 2\sqrt{Kt\log T}
\end{align*}

We have the final result from this.
\begin{align*}
\E[R(t)] &= \P(\gE)E[R(t)|\gE] + \P(\gE^c)E[R(t)|\gE^c] \\
&\leq 2T^{-2} * T + \P(\gE^c)E[R(t)|\gE^c] \qquad \because{R(T) \leq T \text{ trivially}} \\
&\leq 2T^{-1} + (1-2T^{-2})2\sqrt{Kt\log T} \qquad \because \text{plugging in the previous result} \\
&=O(\sqrt{Kt\log T})
\end{align*}
It follows that
\[\E[o^* - \bar{o}_t]=\E[R(T)/t]\leq O\left(\sqrt{\frac{K\log T}{t}}\right).\]
\end{proof}

\section{Experiment details}\label{app:exp_details}
\paragraph{Real datasets.} In our language model experiments, we used a subset of datasets in \cite{eleutherai_weak_to_strong}, which includes ANLI-R2 \citep{nie2019anli}, CoLA \citep{warstadt2018cola}, DREAM \citep{sun2019dream}, MC-TACO \citep{ZKNR19mctaco}, HelleSwag \citep{zellers2019hellaswag}, MultiRC \citep{khashabi2018multirc}, PAWS \citep{paws2019naacl}, PICa \citep{yang2021empirical}, QuAIL \citep{RogersKDR20quali}, QUARTZ \citep{quartz}, Social IQa \cite{sap2019socialiqa}, SST2 \citep{socher-etal-2013-sst2}, WiC\citep{pilehvar2019wic}, Tweet Sentiment \cite{tweet2019naacl}, Anthropic HH-RLHF \citep{ganguli2022anthropichh}, SciQ \citep{welbl2017sciq}, CosmosQA \citep{huang2019cosmos}, BoolQ \citep{clark2019boolq}, and the Amazon Polarity \citep{zhang2015amazonpolarity} datasets. The training dataset was divided into the weak model training data, $\Dtr$, and the weak-to-strong model training data, $\Dw2s$. We sampled $n_{\text{train}}=10,000$, $n_{\text{val}}=1,000$, and $n_{\text{test}}=5,000$ for the training, validation, and test datasets, respectively, for datasets with larger splits than the specified sizes, in accordance with the default parameters provided in \href{https://github.com/EleutherAI/w2s}{https://github.com/EleutherAI/w2s}.

In Wrench experiments, we used a subset of Wrench benchmark\citep{zhang2021wrench}, which includes CDR, Census, Commercial, iMDb, Mushroom, SMS, Spambase, Tennis, Yelp, and Youtube datasets.

\paragraph{Dataset sizes} Table \ref{tab:llm_dataset_size}, \ref{tab:ws_dataset_size} show dataset sizes, the sizes of detected easy-only, hard-only, and overlap sizes, and $\nctrl$ that is used for overlap mechanism experiments in Section \ref{subsec:exp_overlap_density_mechanism}.

\begin{table}[!htp]\centering
\caption{Summary of data statistics from Section  \ref{subsubsec:exp_synthetic_overlap_mechanism}. The mean and standard deviation are calculated from 20 repeated experiments, each using different random seeds.}\scriptsize\label{tab:llm_dataset_size}
\begin{tabular}{lrrrrrrrr}\toprule
Dataset &$n_{\text{train}}$ &$n_{\text{test}}$ &$n_{\text{w2s}}$ &$\nctrl$ &$|\bar{D}_{\text{easy only}}|$ &$|\bar{D}_{\text{hard only}}|$ &$|\bar{D}_{\text{overlap}}|$ & Sample size per round \\\midrule
amazon\_polarity &5000 &5000 &5000 &1150 $\pm$ 105 &744 $\pm$ 94 &407 $\pm$ 29 &3850 $\pm$ 105 &73 $\pm$ 3 \\
anli-r2 &5000 &668 &5000 &1404 $\pm$ 45 &1734 $\pm$ 40 &1863 $\pm$ 39 &1404 $\pm$ 45 &72 $\pm$ 2 \\
anthropic\_hh &5000 &5000 &5000 &1976 $\pm$ 62 &296 $\pm$ 13 &2729 $\pm$ 69 &1976 $\pm$ 62 &54 $\pm$ 2 \\
boolq &3053 &2474 &3053 &1483 $\pm$ 27 &595 $\pm$ 35 &922 $\pm$ 40 &1537 $\pm$ 52 &56 $\pm$ 1 \\
cola &2028 &644 &2028 &568 $\pm$ 45 &126 $\pm$ 34 &442 $\pm$ 21 &1461 $\pm$ 45 &27 $\pm$ 1 \\
cosmos\_qa &5000 &2646 &5000 &2376 $\pm$ 62 &615 $\pm$ 84 &1994 $\pm$ 46 &2392 $\pm$ 89 &78 $\pm$ 2 \\
dream &2541 &1984 &2541 &1037 $\pm$ 45 &261 $\pm$ 13 &777 $\pm$ 42 &1504 $\pm$ 45 &43 $\pm$ 1 \\
hellaswag &5000 &5000 &5000 &1545 $\pm$ 52 &499 $\pm$ 32 &2957 $\pm$ 49 &1545 $\pm$ 52 &39 $\pm$ 2 \\
mc\_taco &2698 &2458 &2698 &1250 $\pm$ 61 &756 $\pm$ 50 &693 $\pm$ 43 &1250 $\pm$ 61 &49 $\pm$ 2 \\
multirc &5000 &4150 &5000 &1763 $\pm$ 57 &1267 $\pm$ 70 &1970 $\pm$ 31 &1763 $\pm$ 57 &66 $\pm$ 3 \\
paws &5000 &5000 &5000 &663 $\pm$ 43 &1808 $\pm$ 57 &2529 $\pm$ 42 &663 $\pm$ 43 &51 $\pm$ 1 \\
piqa &5000 &1832 &5000 &1591 $\pm$ 51 &627 $\pm$ 33 &2783 $\pm$ 26 &1591 $\pm$ 51 &59 $\pm$ 1 \\
quail &4613 &2148 &4613 &1724 $\pm$ 61 &890 $\pm$ 65 &2000 $\pm$ 43 &1724 $\pm$ 61 &60 $\pm$ 2 \\
quartz &833 &764 &833 &347 $\pm$ 34 &302 $\pm$ 37 &185 $\pm$ 13 &347 $\pm$ 34 &12 $\pm$ 1 \\
sciq &4837 &2980 &4837 &2240 $\pm$ 44 &831 $\pm$ 49 &1767 $\pm$ 53 &2240 $\pm$ 44 &82 $\pm$ 2 \\
social\_i\_qa &5000 &1888 &5000 &1578 $\pm$ 51 &998 $\pm$ 40 &2424 $\pm$ 36 &1578 $\pm$ 51 &60 $\pm$ 2 \\
sst2 &5000 &856 &5000 &1876 $\pm$ 48 &984 $\pm$ 49 &892 $\pm$ 16 &3125 $\pm$ 48 &95 $\pm$ 2 \\
twitter-sentiment &5000 &5000 &5000 &2173 $\pm$ 69 &880 $\pm$ 36 &1293 $\pm$ 50 &2827 $\pm$ 69 &98 $\pm$ 1 \\
wic &2214 &638 &2214 &1073 $\pm$ 31 &325 $\pm$ 36 &813 $\pm$ 34 &1076 $\pm$ 34 &37 $\pm$ 1 \\
\bottomrule
\end{tabular}
\end{table}

\begin{table}[!htp]\centering
\caption{Summary of data statistics from Section  \ref{subsubsec:exp_synthetic_overlap_mechanism}. The mean and standard deviation are calculated from 20 repeated experiments, each using different random seeds.}\label{tab:ws_dataset_size}
\begin{tabular}{lrrrrrrrr}\toprule
Dataset &$n_{\text{train}}$ &$n_{\text{test}}$ &$n_{w2s}$ &$\nctrl$ &$|\bar{D}_{\text{easy only}}|$ &$|\bar{D}_{\text{hard only}}|$ &$|\bar{D}_{\text{overlap}}|$ \\\midrule
cdr &4215 &4673 &4215 &1108 $\pm$ 63 &191 $\pm$ 37 &2916 $\pm$ 67 &1108 $\pm$ 63 \\
census &5041 &16281 &5042 &1629 $\pm$ 66 &280 $\pm$ 69 &1349 $\pm$ 30 &3413 $\pm$ 66 \\
commercial &32065 &7496 &32065 &8988 $\pm$ 934 &2216 $\pm$ 835 &6773 $\pm$ 616 &23077 $\pm$ 934 \\
imdb &10000 &2500 &10000 &2469 $\pm$ 688 &630 $\pm$ 425 &4885 $\pm$ 3004 &4485 $\pm$ 2621 \\
sms &2285 &500 &2286 &662 $\pm$ 107 &117 $\pm$ 39 &1507 $\pm$ 113 &662 $\pm$ 107 \\
spambase &1840 &461 &1840 &633 $\pm$ 34 &102 $\pm$ 26 &531 $\pm$ 24 &1207 $\pm$ 34 \\
tennis &3479 &1098 &3480 &754 $\pm$ 688 &680 $\pm$ 706 &77 $\pm$ 40 &2723 $\pm$ 693 \\
yelp &15200 &3800 &15200 &5721 $\pm$ 485 &1160 $\pm$414 &4934 $\pm$ 1387 &9106 $\pm$ 1246 \\
youtube &793 &250 &793 &341 $\pm$ 28 &54 $\pm$ 21 &290 $\pm$ 24 &449 $\pm$ 34 \\
\bottomrule
\end{tabular}
\end{table}

\newpage

\paragraph{Synthetic datasets.} Synthetic datasets in Section \ref{subsec:exp_synthetic} are genarated with Gaussian mixture distribution. We first sample mean parameters $\muo = \begin{bmatrix}
\tmue \\
\tmuh
\end{bmatrix}$ from uniform distribution, where $\tmue \in \real ^{d_{\text{easy}}}$ and $\tmuh \in \real ^{d_{\text{hard}}}$. We set $\mue = \begin{bmatrix}
\tmue \\
0
\end{bmatrix}$, $\muh = \begin{bmatrix}
0 \\
\tmuh
\end{bmatrix}$, to simulate data points with easy+hard (overlap), easy, hard patterns. Similarly, we  set up the covariances $\Sigma_{\text{easy}}=\Sigma_{\text{hard}}=\Sigma_{\text{overlap}}=c\begin{bmatrix}
I_{\text{easy}} & 0 \\
0 & I_{\text{hard}} \\
\end{bmatrix}
, I_{\text{easy}}$ and $I_{\text{hard}}$ being identity matrices. We used $c=5$, $d_{\text{easy}}=d_{\text{hard}}=20$ in synthetic experiments. Labels (Y) are sampled from $\{-1, 1\}$ uniformly. The distribution of input $X$ follows Gaussian mixtures
\[P(X|Y=1) \sim \pie \mathbf{N}(\mue, \Sigma_{\text{easy}})+ \pih \mathbf{N}(\muh, \Sigma_{\text{hard}}) + \pio \mathbf{N}(\muo, \Sigma_{\text{overlap}})\]
\[P(X|Y=-1) \sim \pie \mathbf{N}(-\mue, \Sigma_{\text{easy}})+ \pih \mathbf{N}(-\muh, \Sigma_{\text{hard}}) + \pio \mathbf{N}(-\muo, \Sigma_{\text{overlap}}),\] where $\pie \geq 0, \pih \geq 0, \pio \geq 0$,  $\pie+\pih+\pio=1$. We control parameters $\pie, \pih, \pio$ to see how they affect the weak to strong generalization.

We fixed $n_{\text{easy}}=n_{\text{hard}}=100$ and increase 5 overlap data points each time. We trained weak-to-strong model on overlap data points only.

\paragraph{Computing resources} We used a GPU cluster with 8 NVIDIA A100 SXM2 40GB HBM2 NV-LINK, 2x Intel® Xeon Cascade Lake 5218 (2.3GHz) Processor (24-Core), 16x 32 GB ECC REG DDR4-2933 RAM.

\clearpage
\section{Additional Experimental Results}\label{appsec:additional-experiments}

\subsection{Complete Results from Section \ref{subsec:exp_overlap_density_mechanism}}\label{appsubsec:exp_overlap_density_mechanism_full}
Figure \ref{fig:llm_overlap_exp_full} presents the full experimental results for the 19 LLM datasets and Figure \ref{fig:ws_overlap_exp_full} presents the full experimental results for 9 weak supervision datasets in Section \ref{subsec:data_selection}.

\begin{figure*}
    \centering
    \includegraphics[width=.85\textwidth]{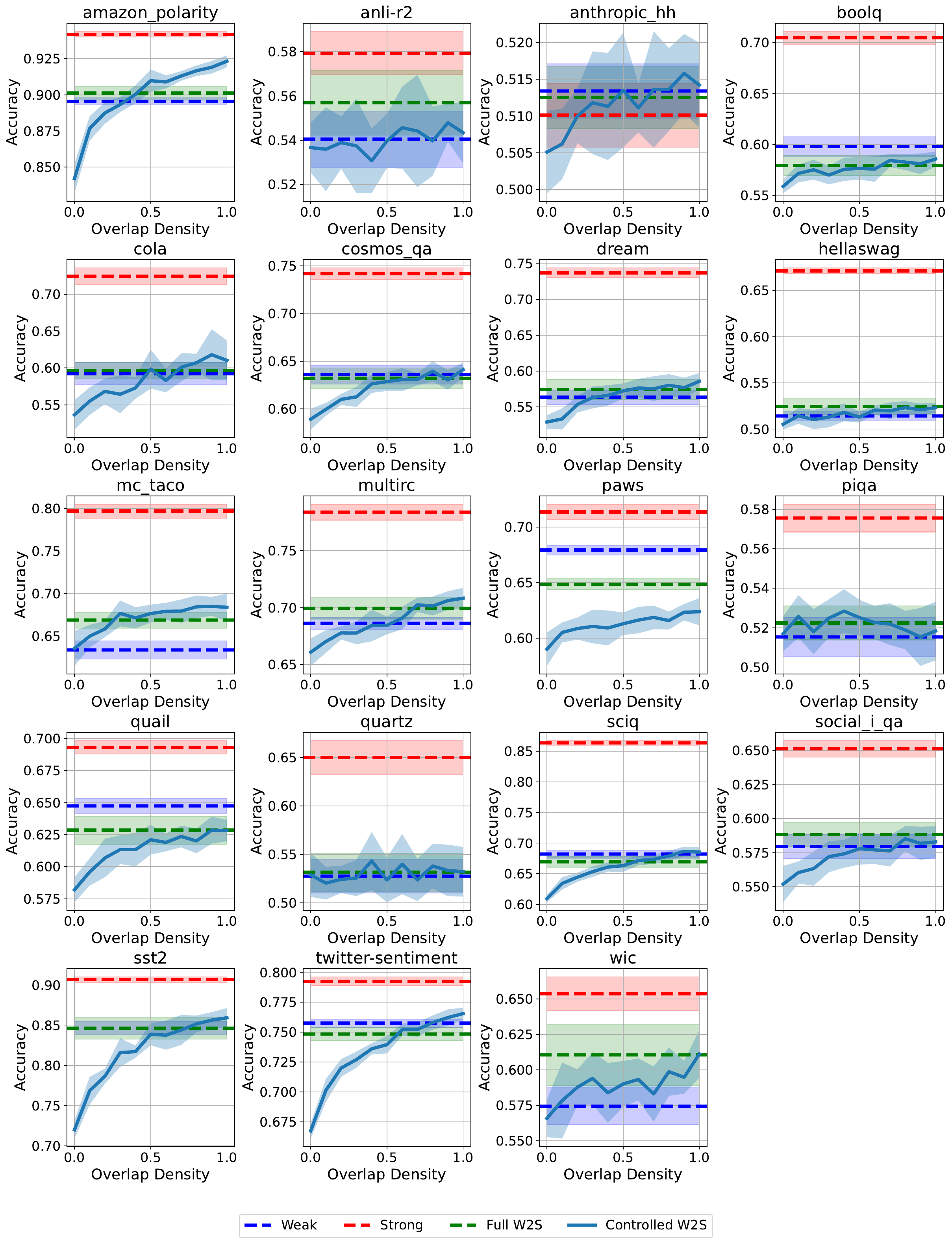}
    \caption{Overlap density versus performance in weak-to-hard generalization with large language models. The red lines represent the accuracies of strong models trained on true labels, while the blue dashed lines indicate the accuracies of weak models on the test set. The green dashed lines (Full W2S) show the accuracies of weak-to-strong models trained on the entire pseudolabeled dataset. Lastly, the Controlled W2S lines represent the accuracies of strong models trained on data with a controlled proportion of overlap density. In general, the strong model's improvement over the weak model tracks the overlap proportion, suggesting that the \sysx\ is indeed an important mechanism for generalization.}
    \label{fig:llm_overlap_exp_full}
\end{figure*}

\begin{figure*}
    \centering
    \includegraphics[width=.98\textwidth]{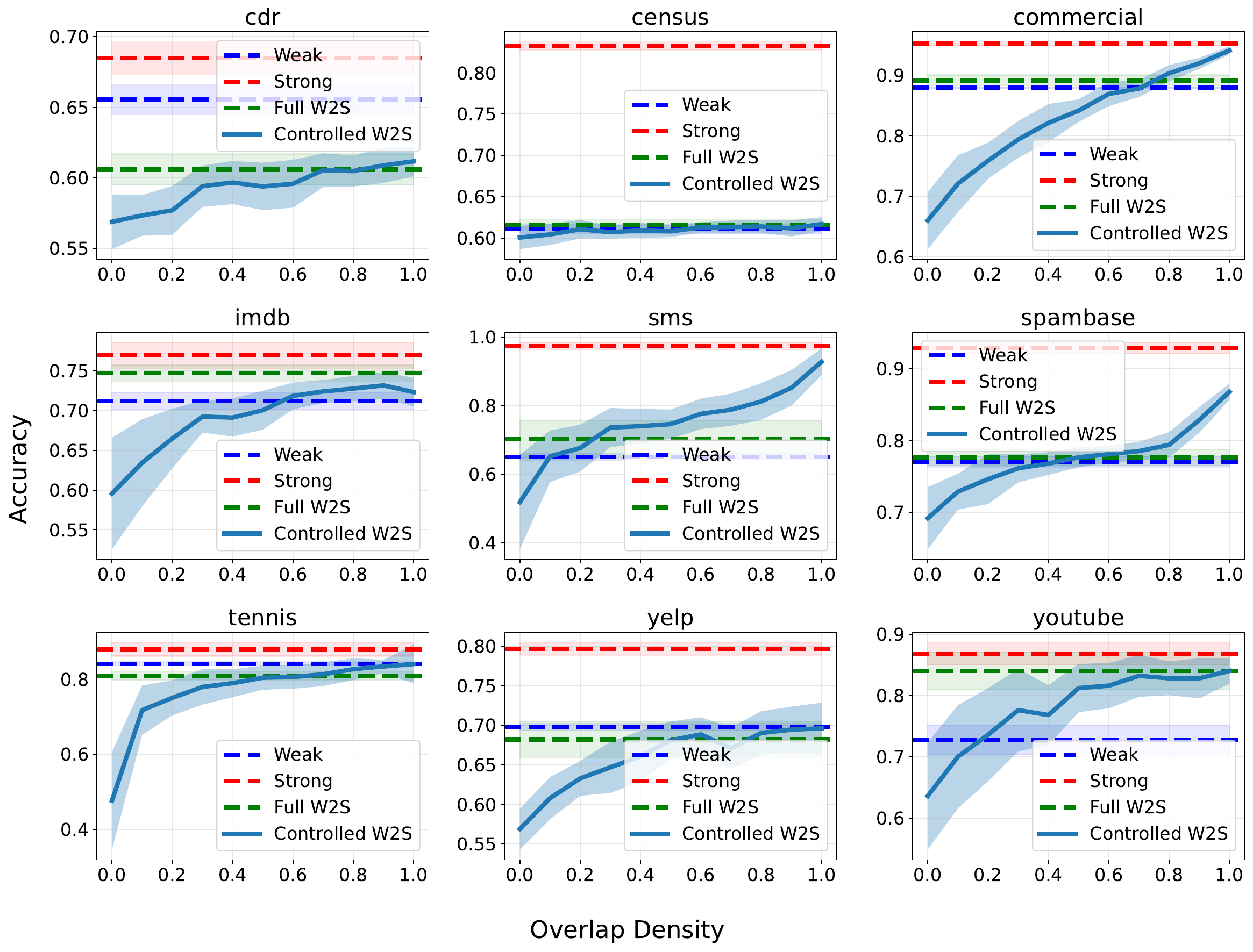}
    \caption{\small Overlap density mechanism in weak supervision. The red lines represent the accuracies of strong models trained on true labels, while the blue dashed lines indicate the accuracies of weak models on the test set. The green dashed lines (Full W2S) show the accuracies of weak-to-strong models trained on the entire pseudolabeled dataset. Lastly, the Controlled W2S lines represent the accuracies of strong models trained on data with a controlled proportion of overlap density. In many tasks, the strong model (a 4-layer MLP) surpasses the accuracy of the weak model (the label model) as the overlap density ratio increases. Notably, the Controlled W2S model uses less data compared to the Full WS setting to manage the overlap density proportion.}
    \label{fig:ws_overlap_exp_full}
\end{figure*}

\newpage

\subsection{Complete Results from Section \ref{subsec:data_selection}}\label{appsubsec:exp_data_selection_full}
Figures \ref{fig:llm_data_selection_full_group1}, \ref{fig:llm_data_selection_full_group2}, \ref{fig:llm_data_selection_full_group3}, and \ref{fig:llm_data_selection_full_group4} present the full experimental results for the 19 datasets discussed in Section \ref{subsec:data_selection}.

\begin{figure}[!t]
\centering
\includegraphics[width=0.9\textwidth]{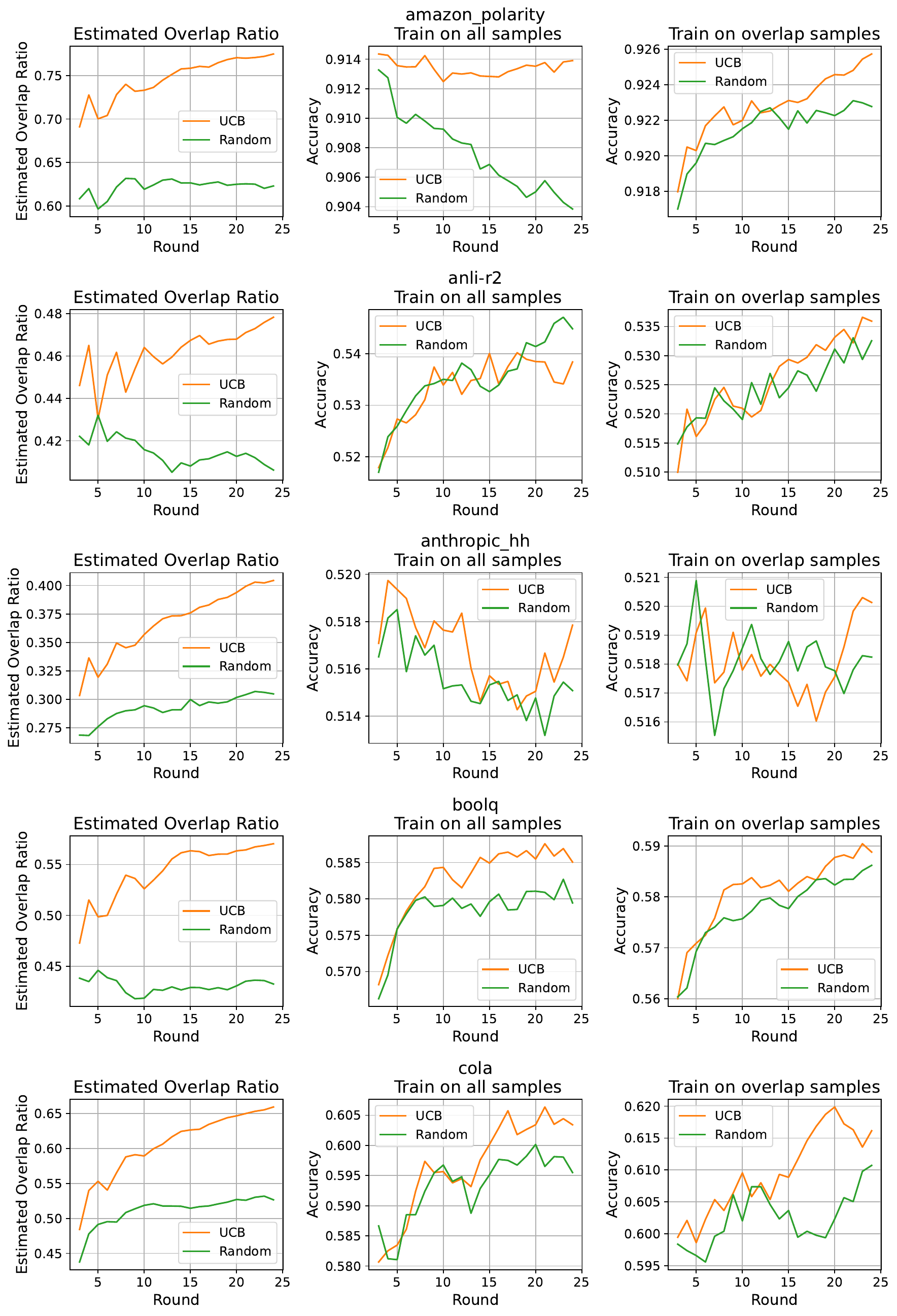}
\caption{Data selection experiments with Algorithm \ref{alg:data_selection}.}
    \label{fig:llm_data_selection_full_group1}
\end{figure}

\begin{figure}[!t]
\centering
\includegraphics[width=0.9\textwidth]{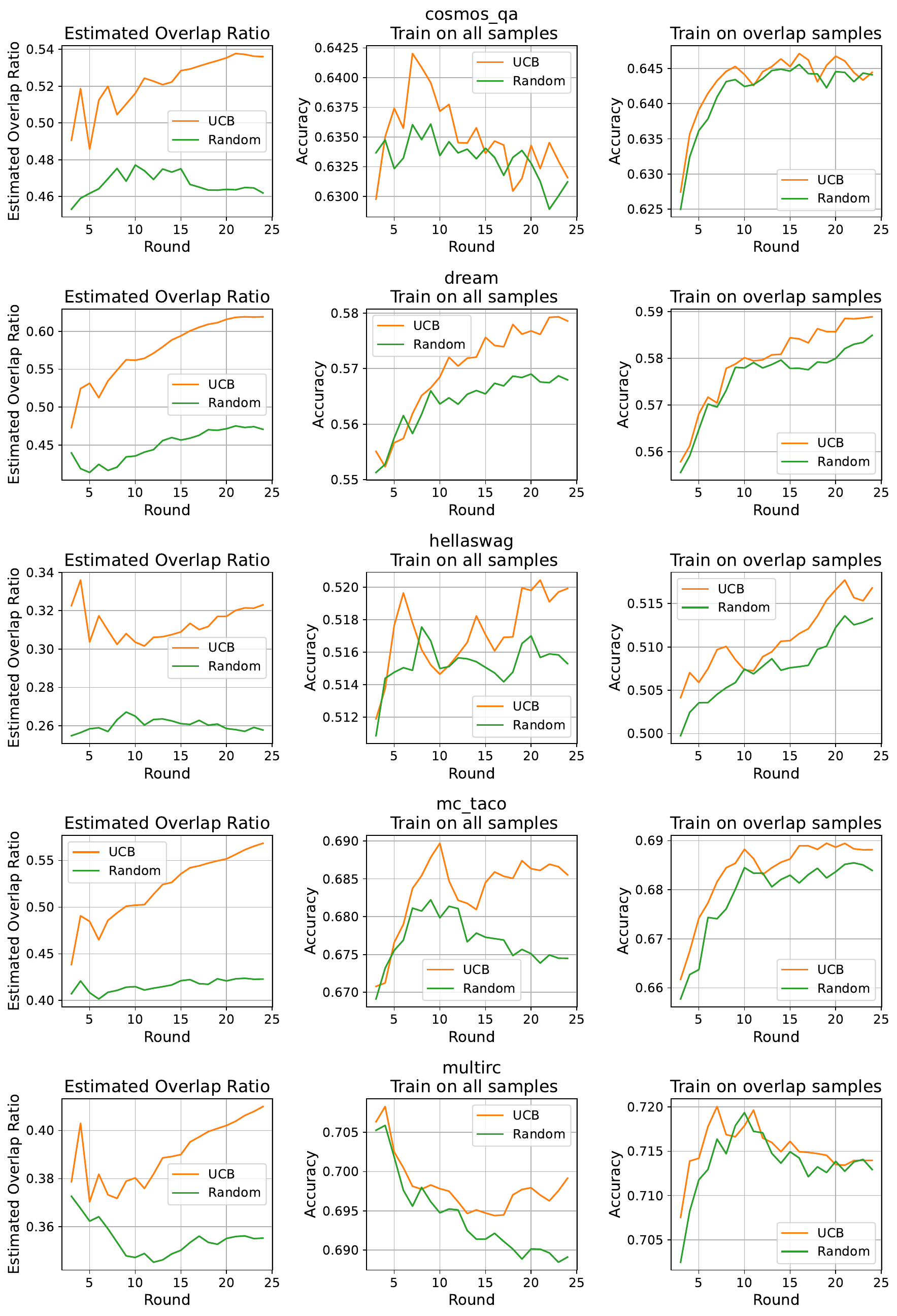}
\caption{Data selection experiments with Algorithm \ref{alg:data_selection} (Continued).}
    \label{fig:llm_data_selection_full_group2}
\end{figure}

\begin{figure}[!t]
\centering
\includegraphics[width=0.9\textwidth]{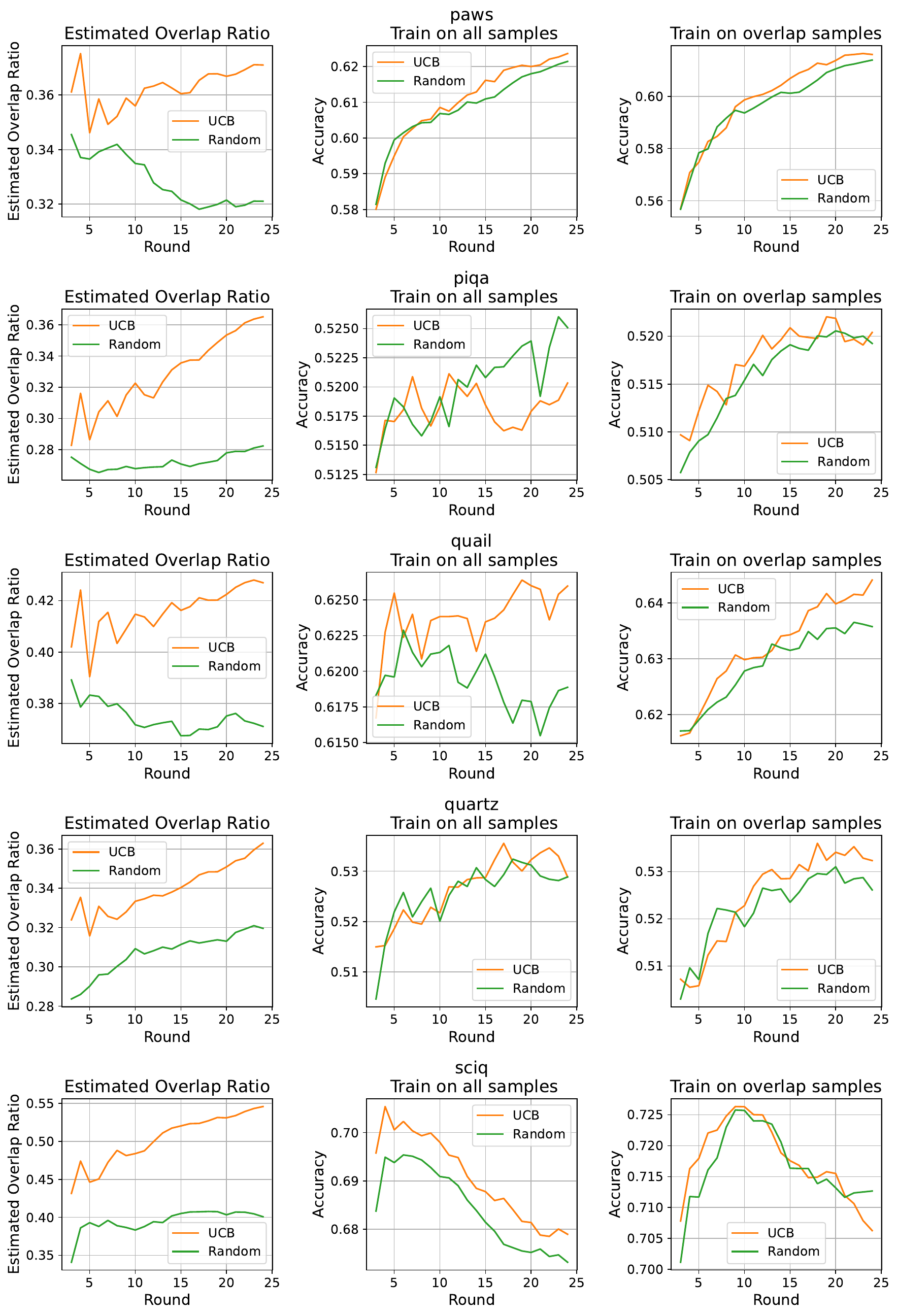}
\caption{Data selection experiments with Algorithm \ref{alg:data_selection} (Continued).}
    \label{fig:llm_data_selection_full_group3}
\end{figure}

\begin{figure}[!t]
\centering
\includegraphics[width=0.9\textwidth]{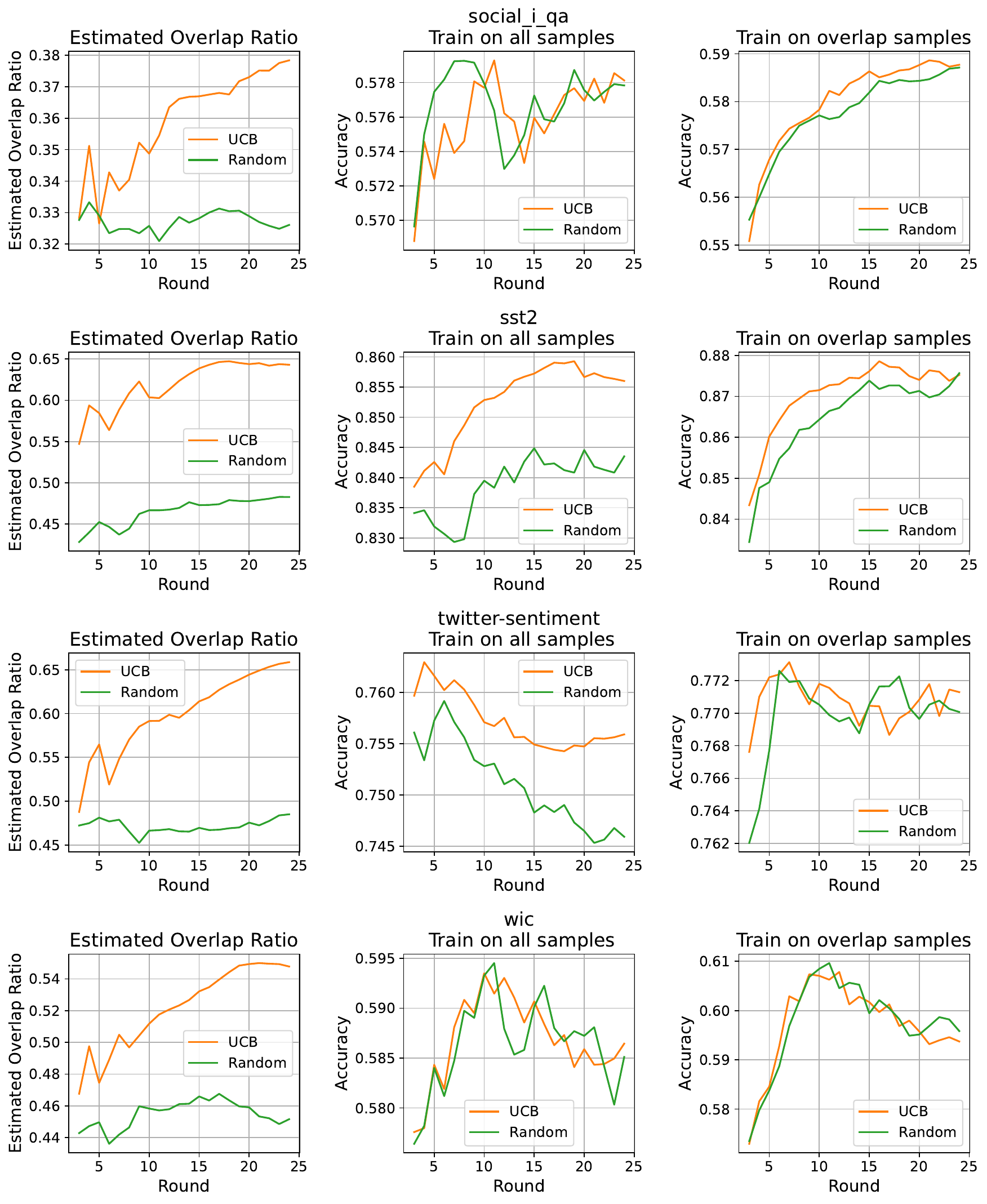}
\caption{Data selection experiments with Algorithm \ref{alg:data_selection} (Continued).}
    \label{fig:llm_data_selection_full_group4}
\end{figure}
\newpage

\clearpage
\subsection{Synthetic experiment on easy-only and hard-only density}\label{appsubsec:synthetic-easy-only-and-hard-only}

\begin{figure*}[!t]
	\centering
	\subfigure [Easy-only Density]{
\includegraphics[width=0.35\textwidth]{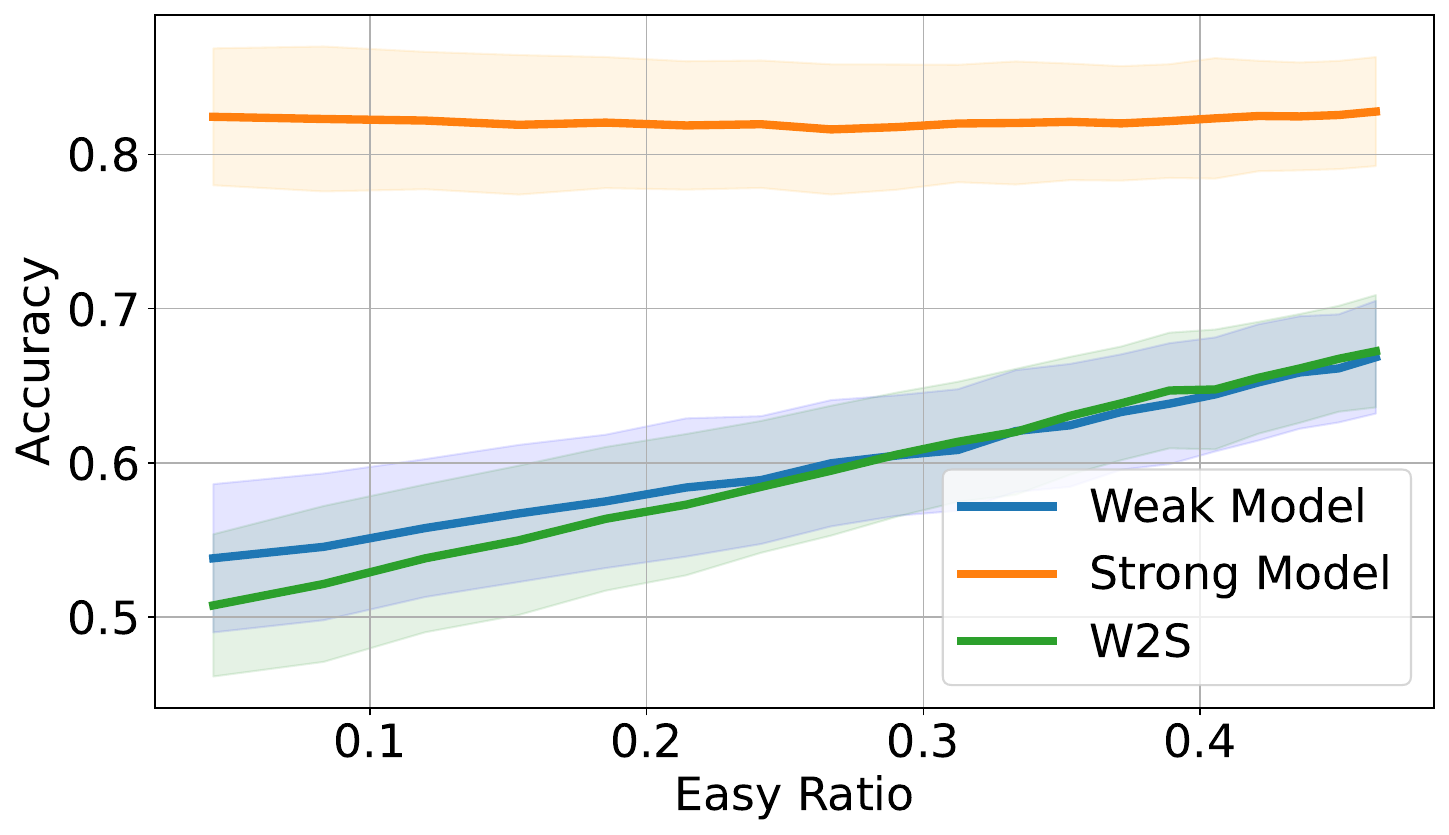}}
\subfigure [Hard-only Density]{
\includegraphics[width=0.35\textwidth]{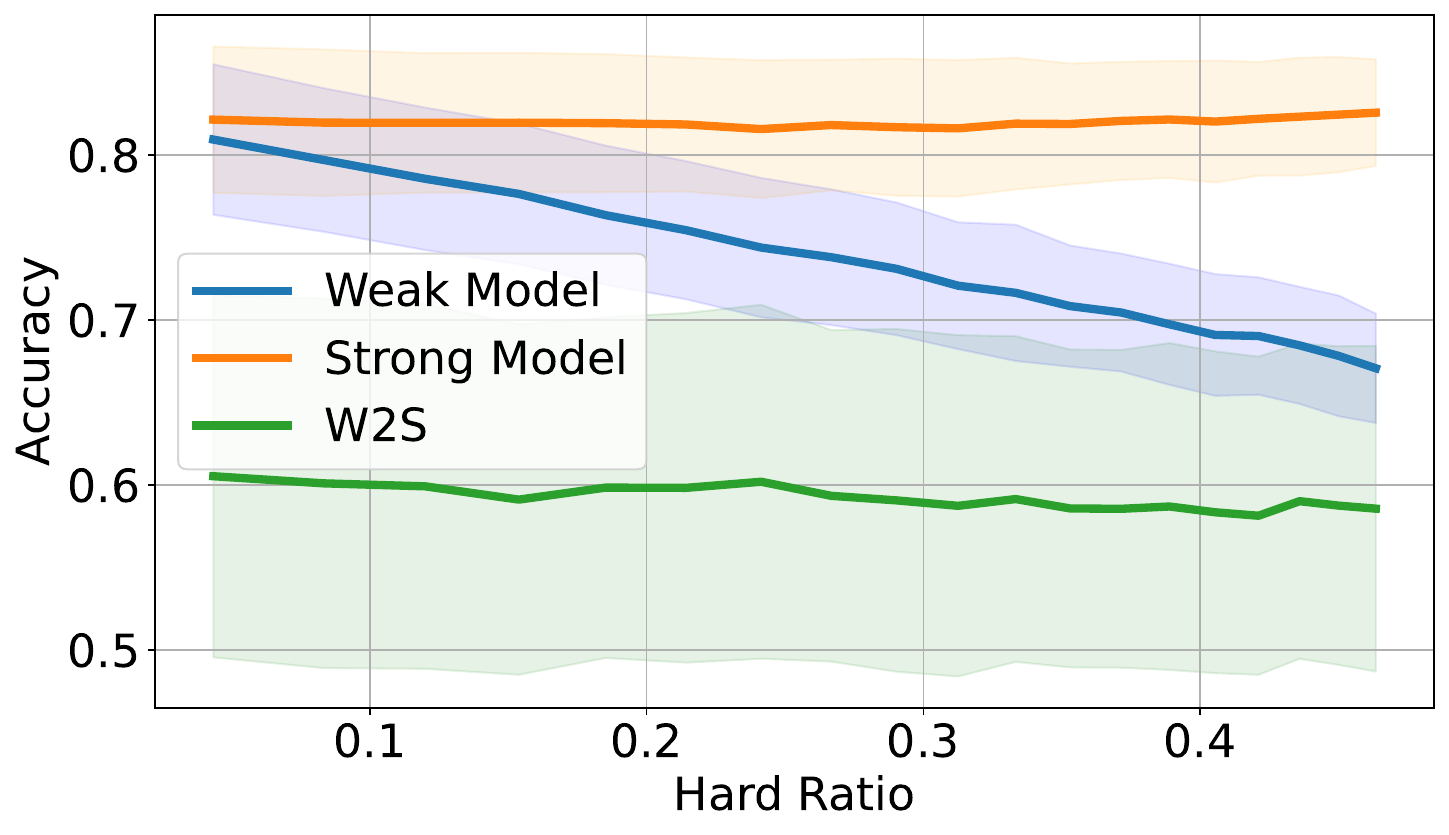}}
\caption{Ablation on the easy-only and hard-only density. Note that the y-axis represents the average accuracy across the easy, hard, and overlap data points. As expected, increasing easy-only and hard-only data points does not lead to weak-to-strong generalization.}
\label{fig:ablation_easy_hard}
\end{figure*}

\begin{figure*}[t!]
    \centering
    \includegraphics[width=0.7\textwidth]{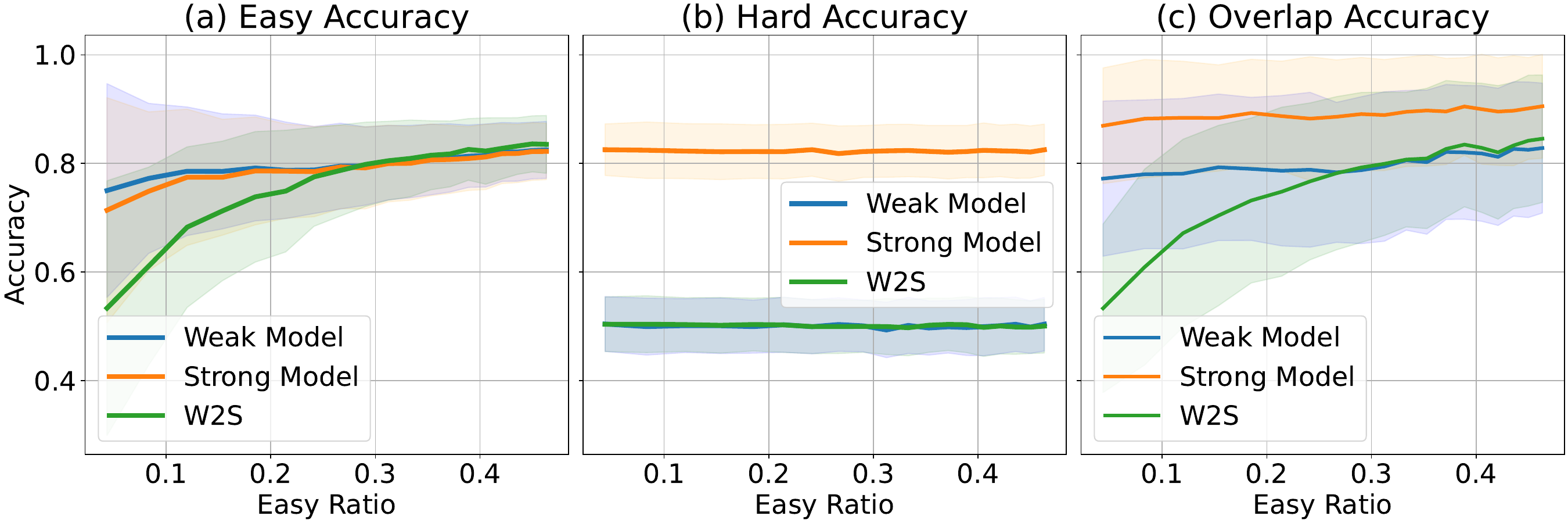}
    \caption{ Accuracy in each data region in easy-only data controlled synthetic experiments. As expected, increasing the number of easy-only data points does not improve the accuracy of W2S model in hard-only data points (middle).}
\label{fig:ablation_easy_decomposed}
\end{figure*}

\begin{figure*}[t!]
    \centering
    \includegraphics[width=0.7\textwidth]{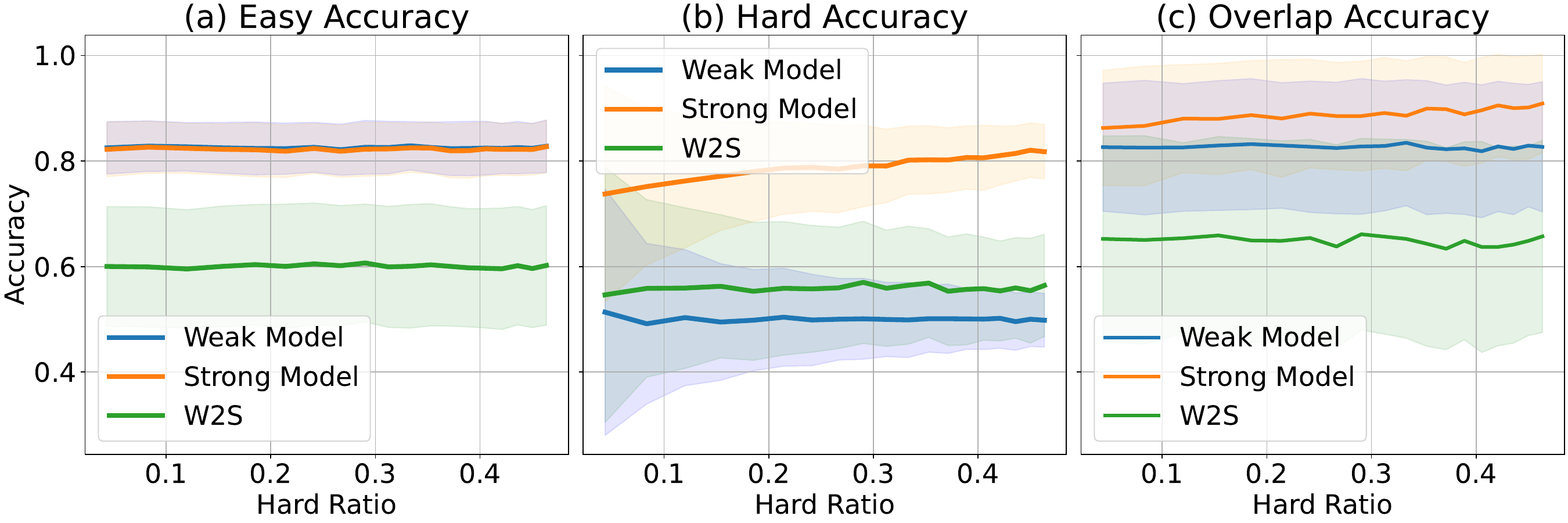}
    \caption{ Accuracy in each data region in hard-onlydata controlled synthetic experiments. As expected, increasing the number of hard-only data points does not improve the performance of the W2S model due to the high error rate of pseudolabels in hard-only data points.}
\label{fig:ablation_hard_decomposed}
\end{figure*}

To demonstrate that overlap density is essential for weak-to-strong generalization, we perform an ablation study focusing on easy-only and hard-only densities. We hypothesize that easy-only and hard-only points do not lead to weak-to-strong generalization.

\paragraph{Setup.} We follow the experimental setup described in Section \ref{subsubsec:exp_synthetic_overlap_mechanism}, with modifications to the number of easy-only, hard-only, and overlap data points. In the easy-only ratio ablation, we train the weak-to-strong model exclusively on easy-only points. We fix the number of hard-only points at 100 and overlap points at 10, then incrementally add 5 easy-only points at each step. Similarly, in the hard-only ratio ablation, we train the model exclusively on hard-only points, with the number of easy-only points fixed at 100 and overlap points at 10, while adding 5 hard-only points at each step.

\paragraph{Results.} Figure \ref{fig:ablation_easy_hard} presents the average accuracy results in ablation experiments, and Figure \ref{fig:ablation_easy_decomposed}, \ref{fig:ablation_hard_decomposed} show the decomposed views of the accuracy.  The results indicate that increasing the number of easy-only points fails to achieve weak-to-strong generalization, as these points do not contribute information to the hard-only data region, as shown in Figure \ref{fig:ablation_easy_decomposed} (middle). Meanwhile, increasing the number of hard-only points also fails, as the severe label noise impairs the learning of the weak-to-strong model.

\newpage

\begin{figure}[t!]
\includegraphics[width=\textwidth]{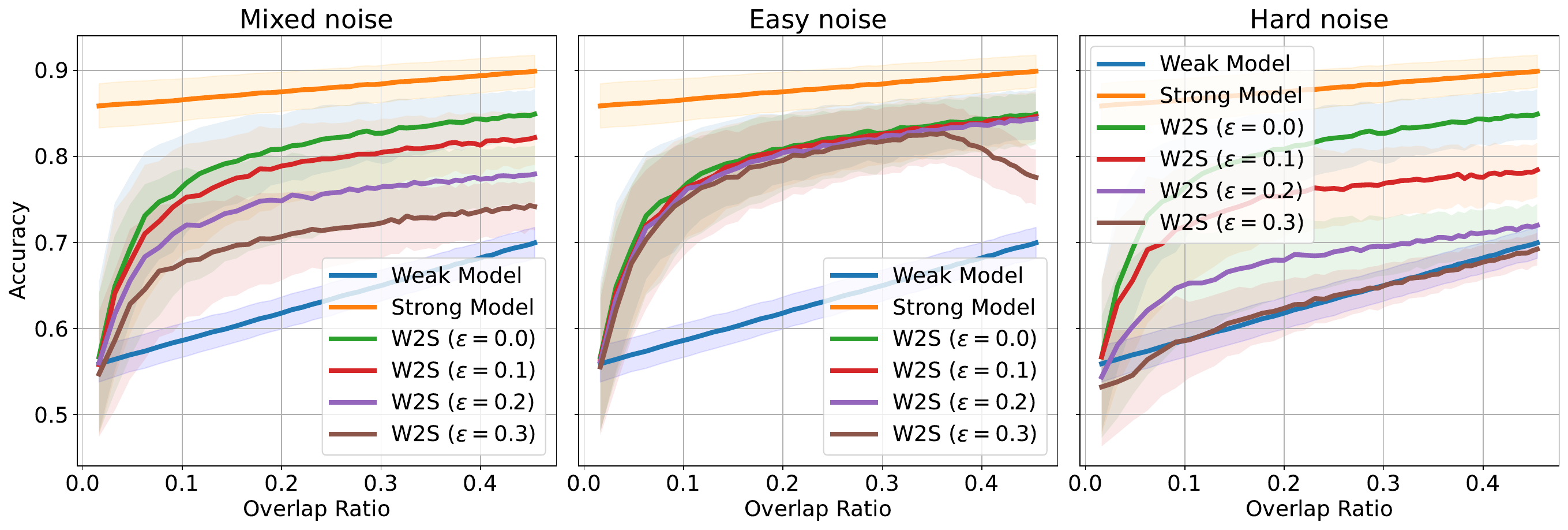}
    \caption{ Average accuracy for each noise type.}
\label{fig:synthetic_noise_overall}
\end{figure}

\begin{figure*}[t!]
    \centering
    \includegraphics[width=\textwidth]{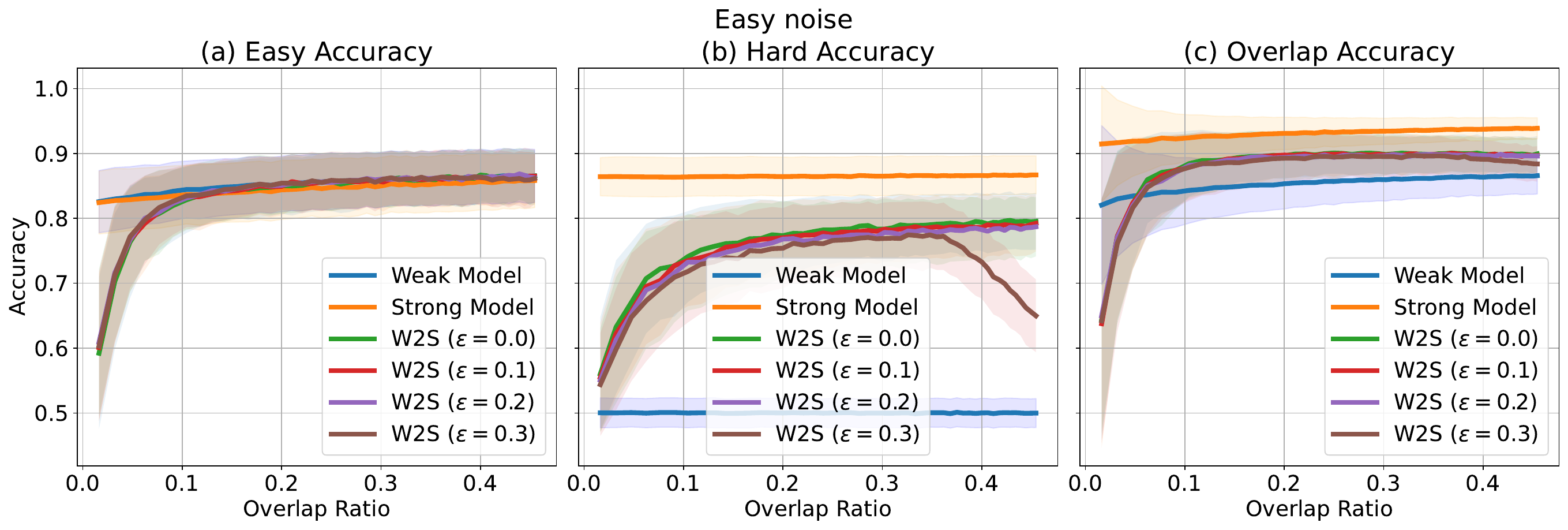}
    \caption{ Accuracy in each data region for the easy noise type (N1).}
\label{fig:synthetic_noise_easy}
\end{figure*}

\begin{figure*}[t!]
    \centering
    \includegraphics[width=\textwidth]{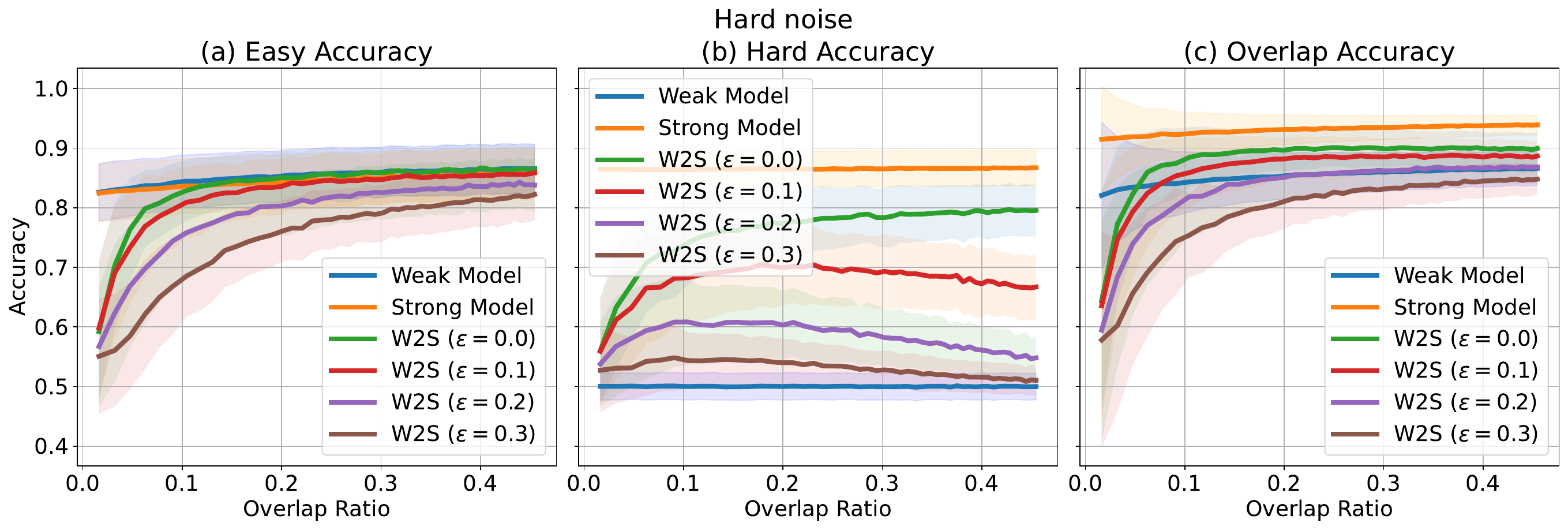}
    \caption{ Accuracy in each data region for the hard noise type (N2).}
\label{fig:synthetic_noise_hard}
\end{figure*}

\begin{figure*}[t!]
    \centering
    \includegraphics[width=\textwidth]{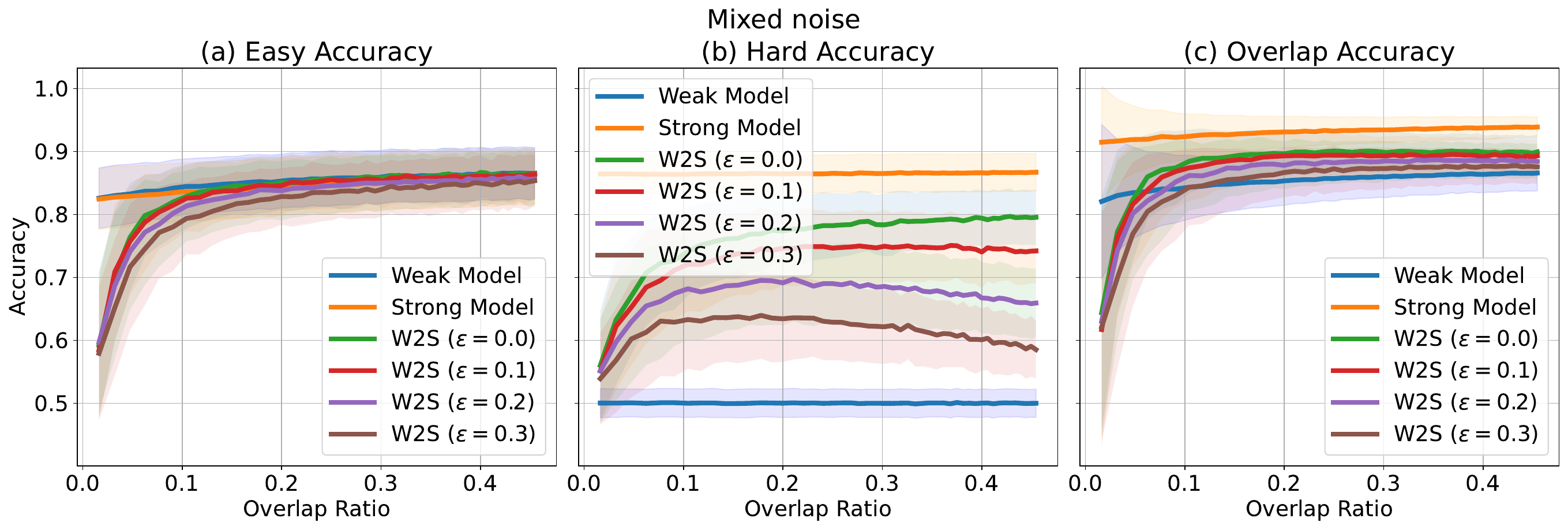}
    \caption{ Accuracy in each data region for the hard noise type (N3).}
\label{fig:synthetic_noise_overlap}
\end{figure*}

\subsection{Synthetic experiment on the noise in overlap detection}\label{appsubsec:synthetic-noise-in-overlap-detection}
In the main synthetic experiment in Section \ref{subsec:exp_synthetic}, we trained a weak-to-strong model exclusively on overlap data points, under the assumption that our overlap detection algorithm is perfect. However, in practice, the overlap detection algorithm may introduce noise. In this section, we investigate the impact of noisy overlap detection on model performance in a fully controllabel experiment setup.

\paragraph{Setup.} The experimental setup follows the description in Section \ref{subsubsec:exp_synthetic_overlap_mechanism}
, with specific modifications to the weak-to-strong model’s training data points. We examine three noise scenarios characterized by the overlap detection error rate $\epsilon$. These scenarios are as follows: (1) Mixed noise: Half of the errors select easy-only points, and the other half select hard-only points; (2) Easy noise: All errors select easy-only points; (3) Hard noise: All errors select hard-only points. The number of data points in the weak-to-strong model training set is similarly distributed as before. Starting with $n_{\text{easy}}=100, n_{\text{hard}}=500, n_{\text{overlap}}=10$, we increment $n_{\text{overlap}}$ by 10 each time. Accordingly, the weak-to-strong model’s training data is derived from the following distributions:

\begin{itemize}
    \item (N1) Easy noise: $\epsilon n_{\text{overlap}}  \text{ easy points }+ (1-\epsilon)n_{\text{overlap}} \text{ overlap points}$
    \item (N2) Hard noise: $\epsilon n_{\text{overlap}} \text{ hard points } + (1-\epsilon)n_{\text{overlap}} \text{ overlap points}$
    \item (N3) Mixed noise: $\cfrac{\epsilon n_{\text{overlap} }}{2}\text{ easy points } + \cfrac{\epsilon n_{\text{overlap}}}{2} \text{ hard points }+ (1-\epsilon) \text{ overlap points}$
\end{itemize}

\paragraph{Results.} Figure \ref{fig:synthetic_noise_overall} presents the overall accuracy for each noise type, while Figure \ref{fig:synthetic_noise_easy}, \ref{fig:synthetic_noise_hard}, \ref{fig:synthetic_noise_overlap} show the decomposed accuracy for each noise type. We can observe Mixed noise and hard noise deteriorates data efficiency on overlap ratio as expected. Hard noise, purely adding randomly labeled hard points, significantly drops data efficiency of overlap data points. Interestingly, while easy noise has minimal impact at low error rates, it significantly degrades performance when the error rate is high ($\epsilon \geq 0.3$). This degradation is due to the model assigning higher weights to easy features under high easy noise conditions, leading the weak-to-strong model to over-rely on these features at high error rates.

\newpage

\subsection{Overlap Density Mechanism Without Neural Networks}\label{appsubsec:ws_other_architecture}
One might assume that the overlap density mechanism is primarily a feature of deep neural network architectures, given that our experiments mainly use deep neural networks as strong models. While deep neural networks offer useful representations, we demonstrate that the overlap density mechanism can also be shown even without the use of neural networks. 

\paragraph{Setup.} We adopt the same weak supervision experiment setup as in Section \ref{subsec:exp_overlap_density_mechanism}, except that we use raw inputs for the overlap detection algorithm and XGBoost \citep{chen2016xgboost} as the strong model.

\begin{figure*}
    \centering
    \includegraphics[width=.98\textwidth]{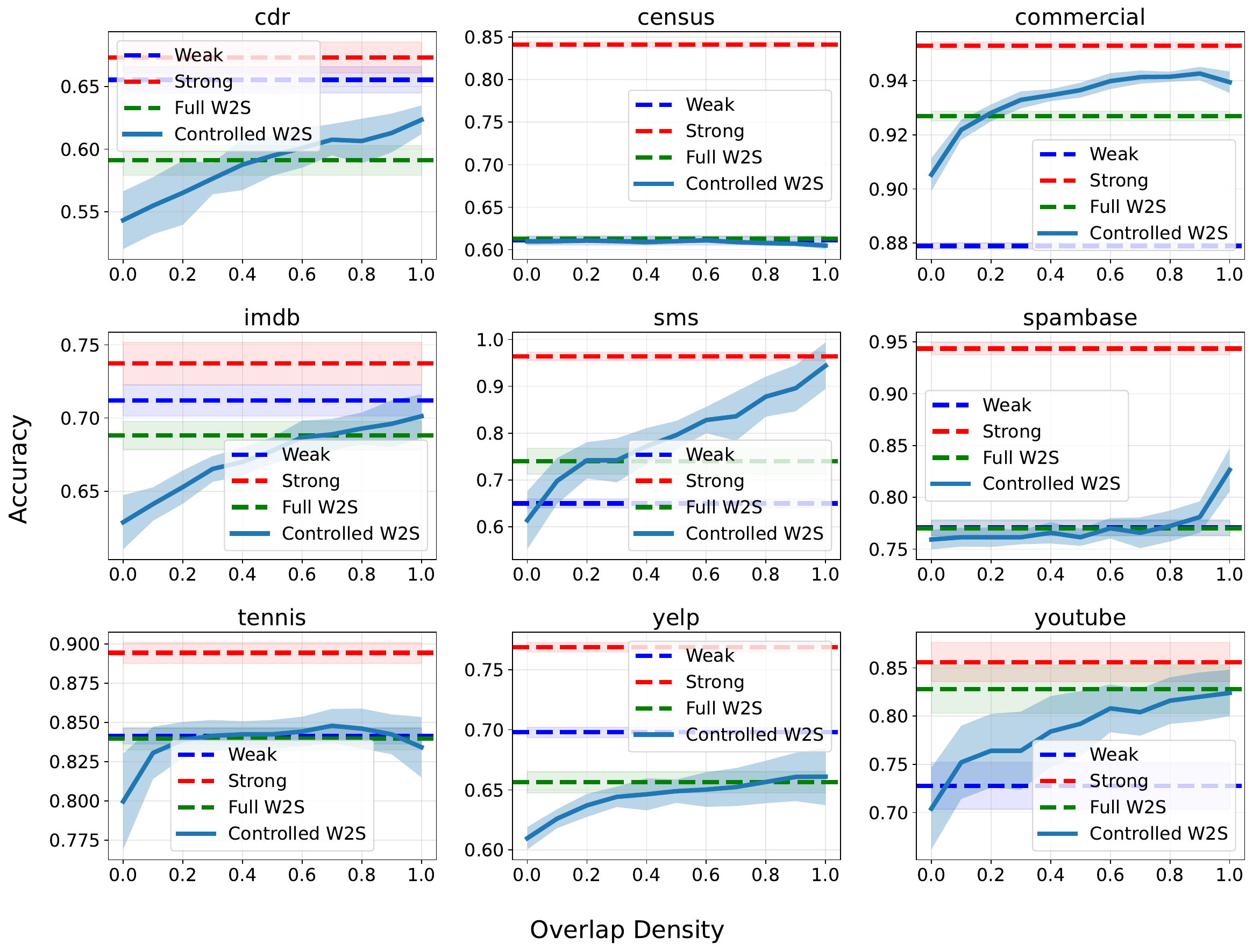}
    \caption{\small Overlap density mechanism in weak supervision with XGBoost as the weak-to-strong model. The red lines represent the accuracies of strong models trained on true labels, while the blue dashed lines indicate the accuracies of weak models on the test set. The green dashed lines (Full W2S) show the accuracies of weak-to-strong models trained on the entire pseudolabeled dataset. Lastly, the Controlled W2S lines represent the accuracies of strong models trained on data with a controlled proportion of overlap density.}
    \label{fig:ws_overlap_exp_raw_xgb}
\end{figure*}

\paragraph{Results.} Figure \ref{fig:ws_overlap_exp_raw_xgb} presents the experimental results. Although the outcomes appear noisier due to the less powerful representations, which lead to a noisier overlap detection algorithm, we can still observe that the overlap density mechanism is effective—improvements in the weak-to-strong model correspond with increases in overlap density.

\newpage

\subsection{Transferability of Detected Overlap Density}\label{appsubsec:ws_transferrability}
Since overlap detection relies on the model's representation, one might assume that overlap points are model-dependent --- different weak models and weak-to-strong models would have different overlap points. However, we hypothesize that the overlap property is a latent property of data and therefore the detected overlap points are transferable across models.

\paragraph{Setup.} We use the same setup as in Section \ref{subsec:exp_overlap_density_mechanism}, except that overlap/non-overlap points are detected using a 4-layer DNN trained on pseudolabels in $\Dw2s$. and then the weak-to-strong model evaluation is performed with XGBoost after training it on $\Dw2s$.

\begin{figure*}
    \centering
    \includegraphics[width=.98\textwidth]{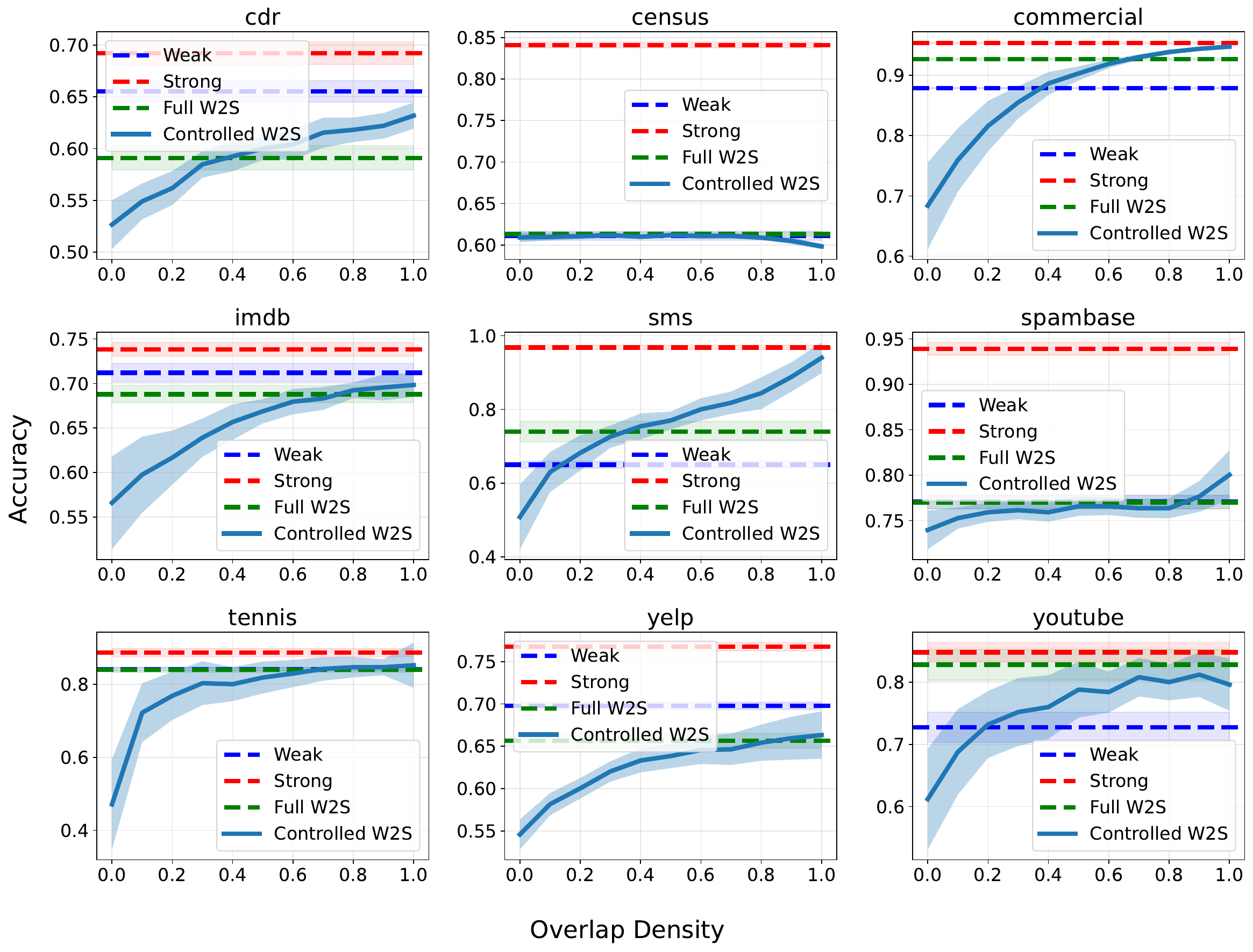}
    \caption{\small Overlap density mechanism in weak supervision with XGBoost as the weak-to-strong model and transferred overlap points from 4-layer ReLU networks. The red lines represent the accuracies of strong models trained on true labels, while the blue dashed lines indicate the accuracies of weak models on the test set. The green dashed lines (Full W2S) show the accuracies of weak-to-strong models trained on the entire pseudolabeled dataset. Lastly, the Controlled W2S lines represent the accuracies of strong models trained on data with a controlled proportion of overlap density.}
    \label{fig:ws_overlap_exp_trasnferred_xgb}
\end{figure*}

\paragraph{Results.} Figure \ref{fig:ws_overlap_exp_trasnferred_xgb} shows the experimental results. We can observe a similar trend to that in Section \ref{subsec:exp_overlap_density_mechanism}, supporting our hypothesis.
\newpage

\end{document}